\def\tensor{~\otimes~}
\def\DictsOrd{\mathbb{D}_{ord}}
\def\DictsOrdL{\mathbb{D}_{ord,L}}
\def\DictsOrdR{\mathbb{D}_{ord,R}}
\def\Vol{\mathrm{Vol}}
\def\Banach{\mathbb{B}}
\def\E{\mathbb{E}}
\def\F{\mathcal{F}}
\def\Hyp{\mathcal{H}}
\def\R{\mathbb{R}}
\def\sparse{\mathrm{sparse}}
\def\X{\mathcal{X}}
\def\Dist{\mathcal{D}}
\def\CovNum{\mathcal{N}}
\def\PackingNum{\mathcal{M}}
\def\Constraint{\mathrm{CoeffMat}}
\def\CoeffMat{\Constraint}
\def\ddict{d_{dict}}
\def\Dicts{\mathbb{D}}
\def\Normal{\mathcal{N}}
\def\Alg{\mathcal{A}}
\def\ExcessRisk{\Delta R}
\def\range{\mathrm{range}}
\def\FullRange{\mathrm{FullRange}}
\def\Uniform{\mathrm{Uniform}}
\def\Packing{\mathcal{P}}
\def\gwidth{\mathfrak{w}}
\def\noiseterm{\vec{\epsilon}_X}
\def\Overfitting{\Delta \mathcal{E}}
\def\Tr{\mathrm{Tr}}
\def\rank{\mathrm{rank}}
\def\OneMatrix{\mathbf{1}}
\def\DictsOrdUnconstrained{\mathbb{D}_{ord,unconst}}
\def\DictsOrdUnconstrainedL{\mathbb{D}_{ord,unconst,L}}
\def\DictsOrdUnconstrainedR{\mathbb{D}_{ord,unconst,R}}
\newtheorem{theorem}{Theorem}
\newtheorem{definition}{Definition}
\newtheorem{lemma}{Lemma}
\newtheorem{proposition}{Proposition}
\newtheorem{remark}{Remark}
\newcommand{\innerproduct}[2]{\langle {#1}, {#2} \rangle}
\DeclareMathOperator*{\argmin}{arg\,min}
\newcommand{\vsa}{\vspace*{-0.28cm}}
\newcommand{\vsb}{\vspace*{-0.19cm}}
\newcommand{\vsc}{\vspace*{-0.16cm}}
\newcommand{\mourmeth}{\text{AODL}}
\newcommand{\ourmeth}{$\mourmeth$\xspace}
\begin{document}

\title{Multi-Dictionary Learning for Low Rank Sparse Coding}

\author{Boya Ma\thanks{Contact emails: \{bma, amagner, mmcneil2, pbogdanov\}@albany.edu}
\thanks{University at Albany -- SUNY, 
Computer Science, 
Albany, NY, USA}
\qquad Abram Magner\thanks{AM funded by NSF CAREER CCF 2338855 and CCF 2212327.} \qquad Maxwell McNeil \qquad Petko Bogdanov 
}




\maketitle

\begin{abstract}

Sparse dictionary coding represents signals as linear combinations of a few dictionary atoms. It has been applied to images, time series, graph signals and multi-way spatio-temporal data by jointly employing temporal and spatial dictionaries. Data-agnostic analytical dictionaries, such as the discrete Fourier transform, wavelets and graph Fourier, have seen wide adoption due to efficient implementations and good practical performance. On the other hand, dictionaries learned from data offer sparser and more accurate solutions but require learning of both the dictionaries and the coding coefficients. This becomes especially challenging for multi-dictionary scenarios since encoding coefficients correspond to all atom combinations from the dictionaries. To address this challenge, we propose a low-rank coding model for 2-dictionary scenarios and study its data complexity. Namely, we establish upper and lower bounds on the number of samples needed to learn dictionaries that generalize to unseen samples from the same distribution. We propose an alternating convex optimization solution, called AODL, which employs alternating optimization between the sparse coding matrices and the learned dictionaries. We demonstrate its quality for data reconstruction and missing value imputation in both synthetic and real-world datasets. For a fixed reconstruction quality, AODL learns up to $90\%$ sparser solutions compared to non-low-rank and analytical (fixed) dictionary baselines. In addition, the learned dictionaries reveal interpretable insights into patterns from training samples.
\end{abstract}
\begin{IEEEkeywords}
sparse coding, dictionary learning, low rank methods
\end{IEEEkeywords}

\section{Introduction}
Sparse dictionary-based coding has been employed for signal and image processing~\cite{tan2012convex,adler2015sparse}, machine learning~\cite{multilabel2021aaai,evtimova2021sparse,sulam2020adversarial}, compressed sensing~\cite{ji2008bayesian} and data analytics~\cite{TGSD,MDTD,shuman2013emerging, Sandryhaila2014}. In the sparse-coding framework, observed data is represented as a linear combination of vectors called dictionary atoms. Dictionaries can be either derived analytically or learned from data. Commonly adopted analytical dictionaries include the discrete Fourier transform (DFT), wavelets, and the Ramanujan periodic basis~\cite{tennetiTSP2015}. While they provide structured priors such as signal smoothness over a graph structure via the GFT ~\cite{dong2019learning} or periodicity via the Ramanujan dictionary~\cite{tennetiTSP2015}, they may fall short in enabling sparse and accurate representations for data with patterns that do not align well with the predefined atoms. 
An alternative approach is to learn the dictionaries directly from data which has been shown to enable higher compression rates and better representation quality~\cite{rubinstein2010dictionaries}. 
The input in the dictionary learning problem is a set of (training) signals, and the goal is to learn both a dictionary and corresponding coding coefficients for the input ~\cite{aharon2006k, hawe2013separable, shahriari2021new,pati1993orthogonal}. 

While many existing techniques focus on one-way (vector) input signals, multi-mode samples require learning dictionaries for each mode. For example, spatiotemporal signals 
could be sparsely coded by employing jointly a spatial dictionary with atoms corresponding to spatial localities and temporal dictionary with atoms corresponding to trends in time~\cite{TGSD}. Multi-dictionary sparse coding with fixed (analytical) dictionaries has been demonstrated beneficial for a range of downstream tasks like compression, missing value imputation, and community detection~\cite{TGSD,ur2023time,MDTD}. Learning dictionaries from data in the multi-way setting 
requires i) learning multiple dictionaries and ii) estimating coding coefficients which correspond to combinations of atoms. An early two-way (2D) dictionary learning method SeDiL~\cite{hawe2013separable} promotes full rank dictionaries and atom coherence via a 
geometric conjugate gradient optimizer for the dictionary learning step. Follow-up works~\cite{zhang2017analytic,zhang2016improved} improve on the original method by adopting 2D-OMP~\cite{fang20122d} and FISTA~\cite{beck2009fast} for the sparse coding step. The latest methods in this category MOD and CMOD~\cite{shahriari2021new} employ gradient projection for dictionary learning. A common limitation of all existing methods is that they do not impose any structure (beyond the usual sparsity) on the coding matrices. In the two-way case the number of coefficients grows quadratically with the sizes (number of atoms) of the left and right dictionaries. This rate of growth of the coding matrices makes the iterative learning of dictionaries and coding challenging. 

\begin{figure}[t]
{
   \centering
   \subfigure[Data model employed by \ourmeth.]{
    \includegraphics[width=0.8\linewidth,trim=0cm 4cm 0cm 0pt, clip]{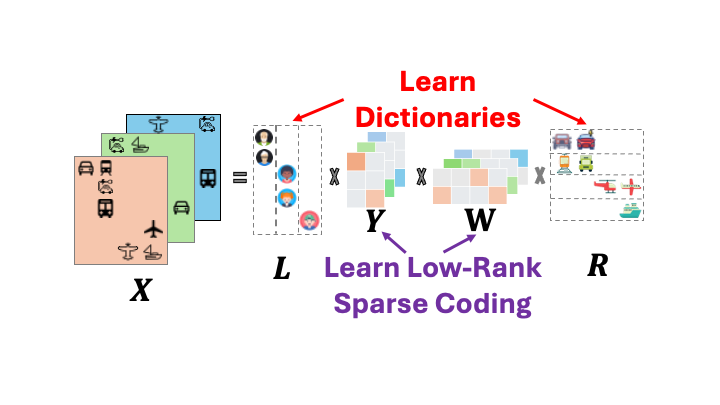}
    \label{fig:aodl_structure}
    }
    \subfigure[Low v.s. unrestricted rank.]{
    \includegraphics[width=0.6\linewidth]{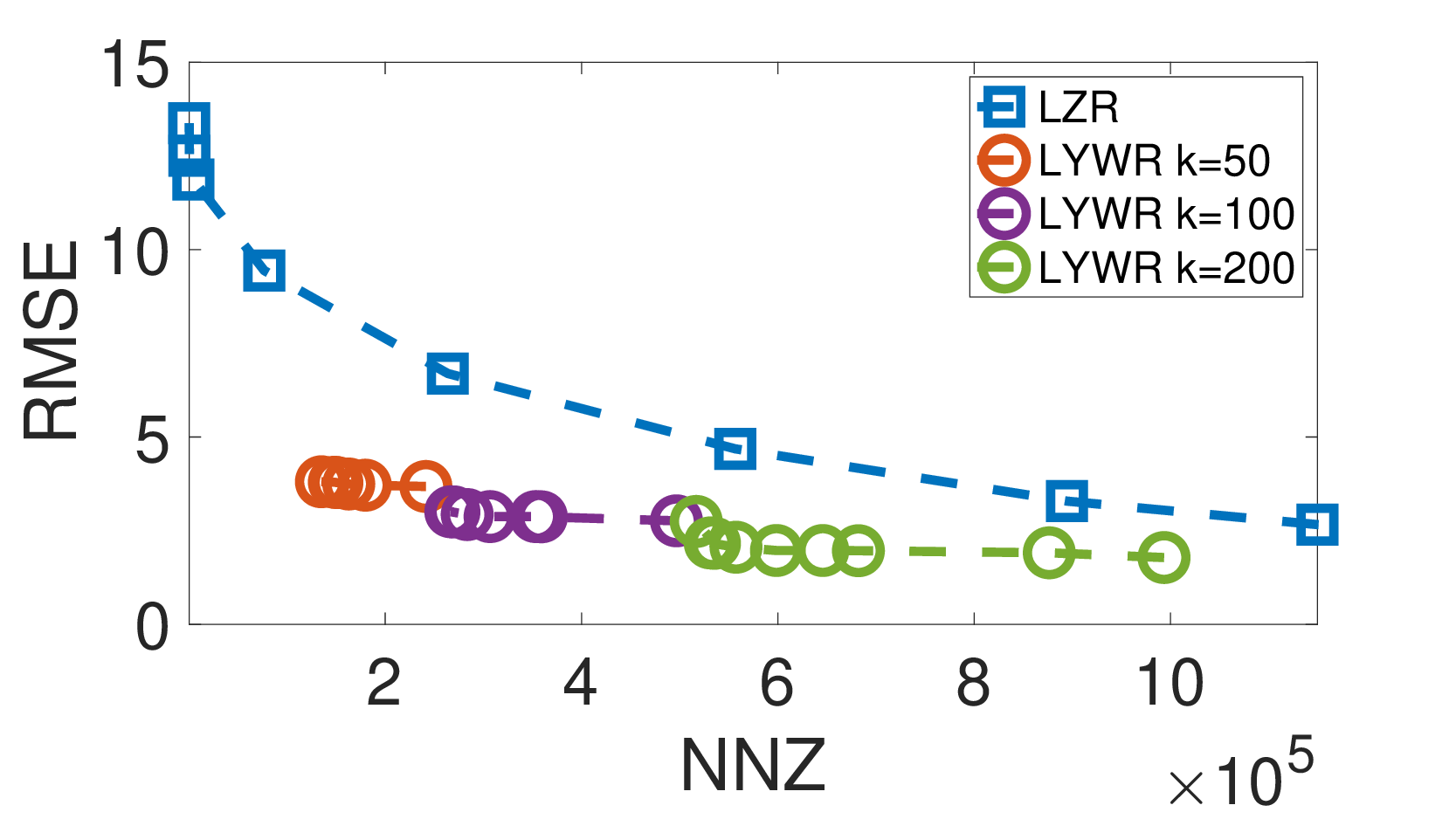}
    \label{fig:full_vs_low_rank} 
    }
    \caption{\subref{fig:aodl_structure} \footnotesize \ourmeth model: $\X$ is a set of data sample matrices,  $L$ and $R$ are shared dictionaries and $Y, W$ are sets of ``slim'' sample-specific coding matrices. \subref{fig:full_vs_low_rank} Comparison of low-rank (LYWR) and unrestricted rank (LZR) coding representations for Road traffic data. Both representations are estimated using an ADMM sparse solver with analytical (GFT and Ramanujan) dictionaries for a single data matrix $X$.}
    \label{fig:aodl_model}
    }
\end{figure}

To address the above challenge we propose to learn two-way dictionaries for low-rank coding matrices. The data model is schematically depicted in Fig.~\ref{fig:aodl_structure} where the input signals $X$ correspond to different snapshots of user-transportation preference matrices from some organization. We model the inputs as a product of shared left (user) $L$ and right (transportation type) $R$ dictionaries and a set of low-rank coding matrices represented as the product $YW$. The inner dimension (columns of $Y$ and rows of $W$) determines the rank of the coding matrix. Beyond limiting the number of coefficients generating the data, this representation allows us to capture patterns of shared behavior in the learned dictionary atoms, e.g, persistent groups of users who prefer specific types of transportation for the example from Fig.~\ref{fig:aodl_structure}. In addition, the low-rank coding model enables more succinct representations of spatio-temporal data as demonstrated in Fig.~\ref{fig:full_vs_low_rank} which compares the error and model sizes of the low rank model ($LYWR^T$) and an unconstrained rank model ($LZR^T$) with fixed dictionaries (only sparse coding). 

Beyond introducing the low-rank two-way dictionary learning problem, we derive a data-distribution-independent sample complexity \emph{upper bound} for it and back it up empirically. We generalize the bound for loss functions that enforce spread of $L_1$ mass across the columns/rows of coding matrices to further probe the role of the rank constraint in our model. We also derive a sample complexity \emph{lower bound} establishing that the rank constraint affects the sample complexity by at most a polylogarithmic factor in the number of dictionary atoms. We show that the general sparsity constrained problem can be solved as an $L_1$ regularization objective. Informed by the above theoretical results, we propose an efficient optimization procedure \ourmeth which outperforms baselines in terms of data reconstruction quality and missing value imputation on multiple datasets.

Our contributions in this work are as follows:\\
\noindent{\bf $\bullet$ Novelty:} To the best of our knowledge, we are the first to pose the problem of multi-dictionary learning for the setting where coding matrices are constrained to be \emph{low-rank} and sparse.\\
\noindent{\bf $\bullet$ Sample complexity analysis:} We derive both upper and lower bounds on the data complexity of our low-rank dictionary learning problem, elucidating the role of the rank constraint in the sample complexity of the problem, and back our bounds experimentally.\\
\noindent{\bf $\bullet$ Practical solution:} We prove that the general sparsity constrained problem can be solved as an $L_1$ regularization objective for which we propose an efficient optimization procedure \ourmeth.\\ 
\noindent{\bf $\bullet$ Accuracy:} Our proposed method \ourmeth consistently produces encodings with lower representation error compared to the closest baselines for a fixed number of coding coefficients (model size). It also imputes missing values more accurately than alternative techniques.\\
\noindent{\bf $\bullet$ Compactness:} To achieve the same level of reconstruction quality, our approach saves up to $90 \%$ of the coefficients compared to the closest baselines in large real-world datasets.

\section{Related Work}

{\noindent \bf Sparse coding} is widely employed in signal processing~\cite{zhang2015survey,rubinstein2010dictionaries, GIST}, image analysis~\cite{elad2006image} and computer vision~\cite{wright2008robust}. Existing methods can be grouped into three main categories: convex optimization solutions, non-convex techniques, and greedy algorithms~\cite{marques2018review}. Relaxation techniques impose sparsity on the coding coefficients via L1 regularizers~\cite{TGSD,MDTD}, while 
greedy algorithms select one atom at a time~\cite{wang2012generalized,de2017sparsity,lee2016sparse}. Most existing methods focus on 1D signals while we focus on 2D signals, in the sense that most existing methods do not enforce a Kronecker structure for learned dictionaries.  

\noindent{\bf Two-way and multi-way sparse coding} 
methods generalize the one dimensional setting by employing separate dictionaries for each dimension of the data~\cite{fang20122d,old_2D,zhang2017joint,TGSD,MDTD}. Some methods in this group place no assumptions about the rank of the encoding matrix~\cite{old_2D,zhang2017joint,fang20122d,2D_low_rank_completion}, while others impose a low rank on the learned encoding~\cite{TGSD, MDTD}. Our coding coefficient model is inspired by the low-rank models above; however, while the methods above adopt (fixed) analytical dictionaries we learn the dictionaries from data which as we show through experimental comparisons enables more accurate and compact data encoding. 

\noindent{\bf Dictionary learning} algorithms aim to find one (for vector data) or multiple (for multi-way data) shared dictionaries directly from training samples. The majority of existing work tackles vector signals~\cite{rubinstein2010dictionaries,engan1999method} and iterates between the sparse coding and dictionary learning stages similar to the seminal K-SVD method~\cite{aharon2006k}. 
Dictionary learning methods for two-way (matrix) data follow a similar alternating process to learn two dictionaries~\cite{hawe2013separable,zhang2016improved,shahriari2021new,jiang2021dictionary}. They solve the same problem as ours except that they do not impose a low-rank structure on the encoding matrices of samples. Some methods impose orthogonality~\cite{quan2015dynamic, gong2020low} or low-rank~\cite{bahri2018robust} constraints on the learned dictionaries motivated by the specific domain applications (e.g., video processing). 
Another separable dictionaries model \cite{ghassemi2019learning} learns a mixture of Kronecker-structured dictionaries for tensor data but can be viewed as a general solution for \cite{shahriari2021new}.
We compare experimentally to the state-of-the-art methods from this group that learn general unconstrained dictionaries and demonstrate that our low-rank model enables more accurate and compact sparse coding matrices and scales better than dictionary learning alternatives.



\section{Preliminaries} \label{sec:prelim}

\noindent{\bf Sparse coding.} The goal of 1D sparse coding is to represent a vector signal $x \in \R^{N}$ as a sparse linear combination $y \in \R^{P}$ of the atom columns of a dictionary $L \in \R^{N \times P}$ by solving the following optimization problem:   
$\min_y f(y)~~\text{s.t.}~~ x=L y,$
where $f(y)$ is a sparsity-promoting function (e.g., the $L_1$ norm). 
In 
the 2D setting the input is a real-valued matrix $X \in \R^{N \times M}$ which can be represented as a sparse encoding matrix $Z\in \R^{P \times Q}$ via a left (column atoms) dictionary $L \in \R^{N \times P}$ and a right (row atoms) dictionary $R^T \in \R^{Q \times M}$ by optimizing the following: 
\begin{equation}
    \min_Z  f(Z)~~\text{s.t.}~~ X=L Z R^T, \label{eq:2dform}
\end{equation}
where $f(Z)$ is a sparsity promoting function. 
A recent alternative 2D model, called TGSD~\cite{TGSD}, introduced a low-rank structure for the encoding matrix $Z$, i.e. $X \approx L Y W R^T$, where $Y \in \R^{P \times k}$ and $W \in \R^{k \times Q}$ are sparse dictionary-specific encoding matrices and $k$ controls the rank of the encoding. All above methods do not learn dictionaries, but instead estimate the sparse coding coefficients.

\noindent{In \bf dictionary learning} the goal is to jointly learn the dictionaries from multiple data samples and estimate the per-sample sparse coding coefficients. Existing 2D approaches generalize Eq.~(\ref{eq:2dform}):
\begin{equation}
    \begin{aligned}
        \underset{L,R,Z} {\mathrm{argmin}} || X - L Z R^T||_F^2 + \lambda || Z ||_1,
         \label{eq:1D-admm}
    \end{aligned}
\end{equation}
to learn $L,R$ in addition to $Z$. Solvers alternate between i) sparse coding with fixed dictionaries by employing 2D-OMP~\cite{fang20122d}, FISTA~\cite{beck2009fast} and others; and ii) dictionary updates for fixed coding matrices by employing conjugate gradient updates~\cite{hawe2013separable} or direct solutions~\cite{shahriari2021new}.

\section{Problem formulation and sample complexity}
\label{sec:problem-formulation}
\subsection{Problem formulation} The input to our problem is a set of $S$ samples of $2$-way (matrix) data $\mathcal{X} \in \R^{N \times M \times S}$. Each data sample $X_s$ is a matrix in $\R^{N \times M}, s \in [1, \cdots, S]$. The left $L \in \R^{N \times P}$ and right $R \in \R^{M \times Q}$ dictionaries have $P$ and $Q$ atoms respectively and the two encoding matrices for each sample are denoted as $Y_s \in \R^{P \times k}$ and $W_s \in \R^{k \times Q}$, where $k$ is an encoding rank parameter. Our objective is to learn a set of two-way dictionary atoms $(L, R)$ for which there exists low-rank, sparse encodings of the data samples $\X$. 
We measure the quality of the learned dictionaries on a data sample $X_s \in \R^{N\times M}$ via the following
\emph{loss function} $\ell:(\R^{N\times P} \times \R^{Q\times M}) \times \R^{N\times M} \to \R$:
\begin{equation}
\begin{aligned}
    \ell((L, R), X_s) := & \underset{Y_s, W_s} \min \| X_s - LY_sW_sR^T \|_{F}^2, \\
    \text{~~s.t.~~}  &\sparse(Y_s, W_s) \leq \kappa,
    \label{eq:obj}
\end{aligned}
\end{equation}
where $\sparse(\cdot, \cdot)$ is a sparsity-promoting function.  In this work, we choose
$\sparse(Y, W) := \max\{ \|Y\|_1, \|W\|_1 \}$ (noting that a bound on $\sparse(Y, W)$ implies a bound on the sparsity parameter of $Y\cdot W$).  The task, then, is to use the training set $\X$ to learn a dictionary pair that has a small expected loss on a new sample $X$.  
We denote by $\Constraint(k, \kappa)$ the set of matrices $Z \in \R^{P\times Q}$ that can be decomposed as $Z = YW$ satisfying the above dimension and sparsity constraints with constants $k$ and $\kappa$.

In a typical statistical learning scenario, it is assumed that the training samples $\X$ and the new (test) sample $X$ are drawn independently and identically distributed from an unknown data-generating distribution $\Dist$.  In this setting, one measures the performance of
a dictionary $(L, R)$ via its \emph{expected} loss on $X$, known as its \emph{risk}:
\begin{align}
    R_{\Dist}((L, R)):= R((L, R)) := \E_{X\sim \Dist}[ \ell((L, R), X) ].
\end{align}
A learning rule that produces a dictionary $(L, R)$ given $\X$ is called $(\epsilon, \delta)$-probably approximately correct (PAC) if, for every data-generating distribution $\Dist$, there is a sample size $m_0 := m_0(\epsilon, \delta)$ such that with probability $\geq 1-\delta$ over the choice of a size $m \geq m_0$ training set sampled iid from $\Dist$, the learning rule outputs a dictionary $(L, R)$ satisfying:
\begin{align}
    R_{\Dist}((L, R)) \leq \epsilon + \inf_{(L_*, R_*)} R_{\Dist}((L_*, R_*)).
\end{align}
The minimum $m_0$ for which this bound holds is called the \emph{sample complexity} of dictionary learning.
We say that the learning rule is PAC if it is $(\epsilon, \delta)$-PAC for all $(\epsilon, \delta)$ arbitrarily close to $0$.  We denote by $\Hyp$ (for ``hypothesis class'') the set of dictionaries over which the infimum is taken. The most fundamental learning rule is called \emph{empirical risk minimization (ERM)}, which we define in our context next. The \emph{empirical risk} $\hat{R}(h, \X)$ of a dictionary matrix pair $h$ on a dataset $\X := (X_1, ..., X_S)$ is:
\begin{align}
    \hat{R}(h, \X) := S^{-1} \sum_{s \in [1..S]} \ell(h, X_s).
\end{align}
The ERM learning rule chooses a hypothesis that minimizes the empirical risk given an input dataset.  
ERM is foundational to statistical learning because it is a \emph{universal} learning rule -- whenever a hypothesis class is learnable with finitely many samples, it is learnable via ERM. \vspace{0.1in}

\fbox{
\begin{minipage}{.9\linewidth}
    \textbf{Main algorithmic problem:} Given a dataset $\X := (X_1, ..., X_S)$, solve the ERM optimization problem in our sparse, low-rank dictionary learning setting: 
    \begin{equation}
        \begin{aligned}
            \argmin_{L, R, Y, W} &\frac{1}{S}\sum_{s \in [1..S]} \|X_s - LY_sW_sR^{T}\|_F^2, \\
            \text{~~s.t.~~} &\max\{\|Y_s\|_1, \|W_s\|_1\} \leq \kappa.
        \label{eq:main-problem}
        \end{aligned}
    \end{equation}
    \textbf{Main statistical problem:} Bound the \emph{sample complexity} of ERM for our problem.  
\end{minipage}
}


%

\subsection{Sample complexity upper bounds 
}
\label{sec:sample-complexity}
We next show two upper bounds for the sample complexity of dictionary learning. Each takes a form of a \emph{generalization} or \emph{uniform convergence bound}~\cite{UnderstandingMachineLearning}: a high-probability upper bound on
$\sup_{h} |R(h) - \hat{R}(h, \X)|$ that holds for all data-generating distributions of interest, where $R(h)$ is the risk of hypothesis $h$. Our results provide an accuracy guarantee on the ERM learning rule in terms of the number of samples $S$.
\begin{theorem}[{ Generalization bound for two-way dictionary learning}]
    \label{thm:dictionary-learning-generalization-bound}
    \label{THM:DICTIONARY-LEARNING-GENERALIZATION-BOUND}

    Let $\Dist$ be a distribution on $\Omega := \R^{N\times M}$ such that almost surely (i.e., with prob. $1$), $\|X\|_F \leq C$. Let $\Hyp$ denote the hypothesis class given by pairs of matrices $(L, R) \in \R^{N\times P} \times \R^{M\times Q}$
    satisfying the normalization condition for any pair $(i,j)$ of left $L_i$ and right $R_j$ column atoms:  
    $\|L_i\cdot (R^T)_j\|_{F} \leq 1$, where $\|\cdot \|_F$ denotes the Frobenius norm of a matrix.
    Then, for all $x > 0$, with probability at least $1 - e^{-x}$ over $S$ samples $\X := (X_1, ..., X_S)$
    iid from $\Dist$, we have, for all $h \in \Hyp$,
    \begin{equation}
        \footnotesize
        \begin{aligned}
        &|R(h) - \hat{R}(h, \X)| \\
        &~~\leq~~ \frac{2}{\sqrt{S}} + C^2 \left( \sqrt{\frac{x}{2S}} + \sqrt{ \frac{ (N P + MQ) \log(8\sqrt{S}\kappa^2) }{2S} } \right), \vsa
        \end{aligned}
    \end{equation}
    where we recall that $\kappa$ is the sparsity constraint on the coding matrices. 
\end{theorem}

For large enough sample sizes $S$, the bound in Theorem~\ref{thm:dictionary-learning-generalization-bound} decreases 
monotonically with $S$ (when all other parameters are fixed).  Thus, if one wishes to ensure a risk bound of $\epsilon$ 
with probability at least $1-\delta$ for every possible data-generating distribution, one can set $e^{-x} = \delta$ in the above expression and the entire expression equal 
to $\epsilon$, and solve for $S$ in terms of $\epsilon$ and $\delta$ (possibly using a numerical method).  Thus, as is 
well known in statistical learning theory, a generalization bound translates to a sample complexity bound.

We contrast with similar results in~\cite{TOITSampleComplexity}: our bound is in terms of hard sparsity and rank constraints on the coding matrix, whereas the bound in~\cite{TOITSampleComplexity} is for the $L_1$ regularized version of the problem, with no rank constraint.  Thus, it is difficult to interpret their bound in terms of exact constraints.




We note that in Theorem~\ref{thm:dictionary-learning-generalization-bound}, 
the sparsity constraint $\kappa$ plays a logarithmic role, but the rank constraint does not appear to.  To probe the rank constraint further, we note that in the proof, the possibility of involvement of $k$ disappears when we appeal to the submultiplicativity of the $L_1$ norm to conclude that for a pair of coding matrices
$Y, W$, $\|YW\|_1 \leq \|Y\|_1 \|W\|_1$.  This inequality is only tight when the $L_1$ mass of $Y$ and $W$ is concentrated in a single column and row, respectively.  There exist data-generating distributions for which this occurs, and so we cannot tighten this inequality in the above bound.  However, data-generating distributions for which the submultiplicativity bound is \emph{not} tight arise in reality.
To understand this phenomenon further and determine to what extent tightening this inequality may involve the rank constraint, we next formulate a modification of the loss function to include an additional entropy constraint on the coding matrices, along with a corresponding generalization bound.

For matrices $Y \in \R^{P\times k}, W \in \R^{k \times Q}$, we denote by
$c(Y), r(W) \in \R^{k}$ the following vectors:
$ 
    c(Y)_\ell := \sum_{i=1}^P |Y_{i,\ell}|, 
    r(W)_\ell := \sum_{j=1}^Q |W_{\ell,j}|.
$ 
We furthermore define $\hat{c}(Y) := \frac{c(Y)}{\|Y\|_1}, \hat{r}(W) := \frac{r(W)}{\|W\|_1}$.  We note that these are stochastic vectors.

For a stochastic vector $p \in \R^{k}$, we denote by $H_2(p)$ the R\'enyi entropy of order $2$ of the probability distribution corresponding to $p$.

\begin{definition}[Loss function with entropy constraint]
    \label{def:loss-function-entropy}
    Let $0 \leq \rho \leq 2\log k$.
    We define $\ell_\rho((L, R), X)$ as follows:
    \begin{align}
        \ell_\rho((L, R) X) := \min_{Y, W} \|X - LYWR^T\|_F^2, \\
            s.t. ~~ \sparse(Y, W) \leq \kappa, \\
                 \rho \leq H_2(\hat{c}(Y)) + H_2(\hat{r}(W)).
    \end{align}
\end{definition}
With this definition, our original loss function satisfies $\ell := \ell_0$. We have the following generalization of Theorem~\ref{thm:dictionary-learning-generalization-bound}:
\begin{theorem}[Generalization bound with R\'enyi entropy-constrained loss function]
    \label{thm:generalization-bound-entropy}
    Let $\rho \in [0, 2\log k]$.
    In the same setting as Theorem~\ref{thm:dictionary-learning-generalization-bound} except with the true and empirical risks defined with respect to the loss function $\ell_{\rho}$, we have, with probability at least $1-e^{-x}$,
     \begin{equation}
        \footnotesize
        \begin{aligned}
        &|R(h) - \hat{R}(h, \X)| \\
        &~~\leq~~ \frac{2}{\sqrt{S}} + C^2 \left( \sqrt{\frac{x}{2S}} + \sqrt{ \frac{ (N P + MQ) \log(8\sqrt{S}\kappa^2 \exp(-\frac{\rho}{2})) }{2S} } \right).
        \end{aligned}
    \end{equation}   
\end{theorem}
The proof is in Section~\ref{sec:renyi-entropy-bound-proof}.
\begin{remark}
    Theorem~\ref{thm:generalization-bound-entropy} has an implication for the sample complexity of the original learning problem (i.e., with the original loss function $\ell$) under a data distribution constraint.
    In particular, if we constrain the class of data-generating distributions to be such that risk minimizers with respect to the original loss function produce coding matrices satisfying the R\'enyi entropy constraint with probability $1$, then risk minimizers for both the constrained and unconstrained loss functions are the same.  This implies that if the number of samples is such that the ERM rule for the constrained loss function is $(\epsilon, \delta)$-PAC for this class of distributions and for $\ell_{\rho}$, it is also $(\epsilon, \delta)$-PAC for $\ell$.  In other words, Theorem~\ref{thm:generalization-bound-entropy} implies a sample complexity upper bound for the original learning problem under the aforementioned distributional constraint.
\end{remark}

In Theorem~\ref{thm:generalization-bound-entropy}, when $\rho \geq c\log k$ for some $c < 2$ (a mild constraint that enforces a very approximately uniform spread of $L_1$ mass across the columns of $Y$ and the rows of $W$), we have $\exp(-\rho/2) \leq k^{-c/2}$.  When, furthermore, $\kappa \leq \hat{\kappa} k$ for some fixed $\hat{\kappa} > 0$, we have
$ 
    \kappa^2 \exp(-\rho/2) \leq \hat{\kappa}^2 k^{2-c/2},
$ 
so that we see that the sample complexity increases with $\hat{\kappa}$ (which one may think of as a \emph{per-column/row} sparsity constraint) and
$k$.  Empirically, we find that for appropriately chosen $k$, the coding matrices derived via our algorithm applied to real data have nearly maximal $\rho$, and $\hat{\kappa} := 10/3$ yields competitive performance with the rank-unconstrained model.  Thus, it is empirically reasonable to assume the parameter regime previously discussed, in which decreasing $k$ decreases the sample complexity.  


\subsection{Sample complexity lower bound. }
We note that the rank constraint does not appear in Theorem~\ref{thm:dictionary-learning-generalization-bound} and only appears in the logarithm of Theorem~\ref{thm:generalization-bound-entropy}.  To determine whether or not this is inherent or, alternatively, the result of a loose bound, we next present a sample complexity
\emph{lower} bound for the rank and sparsity-constrained dictionary learning problem.
\begin{theorem}[Sample complexity lower bound for rank-constrained dictionary learning]
    \label{thm:sample-complexity-lower-bound}
    Let $k=1$ and $\kappa > 0$ be arbitrary.  Let $P, Q$ be large enough
    but $O( \min\{N, M\})$. 
    Let $\Alg$ be an
    $(\epsilon, \delta)$-PAC learning rule for the dictionary learning problem with rank and sparsity constraints $k, \kappa$, respectively, with sample complexity $S$.
    Then we have that
    \begin{align}
        S \geq \frac{(1-\delta)\log(1/\epsilon) \Omega(NP+MQ)}{\log^2(PQ)},
    \end{align}
    where the $\Omega(\cdot)$ is as $N, M \to \infty$. 
\end{theorem}
The proof is in Section~\ref{sec:sample-complexity-lower-bound-proof}.

An implication of Theorem~\ref{thm:sample-complexity-lower-bound} is that 
the rank constraint on the coding matrix cannot improve the sample complexity uniformly over all data-generating distributions except by at most a polylogarithmic factor in $P, Q$.  In fact, the ``hard'' data-generating distribution that we use in the proof is quite natural: a single ``ground-truth'' two-way dictionary is fixed, and each data sample is a randomly chosen atom, perturbed by Gaussian noise.

The empirical test error improvements that we see when imposing the rank constraint are thus likely not due to sample complexity improvements, but instead due to improvements in the algorithmic tractability of the optimization problem that we solve, compared to those of competing methods.  We also emphasize that in addition to test error improvement, the rank constraint has the advantage of reducing the number of coding coefficients required to describe a dictionary representation of a data matrix.

\section{\ourmeth: Dictionary learning for low-rank sparse coding}
\label{sec:our-algorithm}


In this section, we describe our algorithm for two-way dictionary learning, establish the convergence of the objective function values of its iterates and discuss potential limitations.  
Since the direct empirical risk minimization to solve the constrained learning problem is difficult due to the $L_1$ constraints, we first reformulate the objective as an $L_1$ regularization 
and show that its solution is also a solution to the original problem. The regularized objective is as follows: \vsc
\begin{equation}
  \begin{aligned}
       \underset{L, R, Y, W} {\mathrm{argmin}} \sum_{s \in [1..S]} ( &|| X_s - L Y_s W_s R^T ||_F^2 
       + \lambda_1 ||Y_s||_1 + \lambda_2 ||W_s||_1).
  \end{aligned}
  \label{eq:obj_aodl}
\end{equation}

Our next theorem shows that exactly solving this regularized version of the problem provides an exact solution to the original constrained problem (proof available in Appendix~\ref{sec:equivalence}).  Similar, but not identical, statements are well known
to hold for a single optimization variable~\cite{SparseModelingBookFNT}.  We include the statement for our case and its proof for completeness.
\begin{theorem}[{\bf Constrained optimization via regularization}]
    \label{thm:regularization-solves-constrained}
    \label{THM:REGULARIZATION-SOLVES-CONSTRAINED}
    For each $\kappa > 0$, there exists a pair $(\lambda_1, \lambda_2)$ such that the sparse coding subproblem from Eq.~\ref{eq:obj_l1} below is an exact solution to the $\kappa$-constrained problem from Eq.~\ref{eq:obj}. As a result, a solution of the overall regularized dictionary learning from Eq.~\ref{eq:obj_aodl} is a solution to the constrained version from Eq.~\ref{eq:main-problem}.
\end{theorem}
%
Although the objective in (\ref{eq:obj_aodl}) is not jointly convex in $(L, R, Y, W)$, each sub-problem is convex, leading to a natural alternating optimization solver we call Alternating Optimization (low rank) Dictionary learning (\ourmeth). Our algorithm alternates between sparse coding and dictionary learning: 

\begin{algorithm} [t]
\footnotesize
    \caption{\ourmeth}
        \begin{algorithmic}[1]
        \State {\bf Input:} Samples $X_s, s \in [1 \cdots S] $, dictionary sizes $P$ and $Q$, encoding rank $k$ and sparsity params. $\lambda_1, \lambda_2$
        \State {\bf Output:} Dictionaries $L \in\R^{N\times P}$ and $R\in\R^{M\times Q}$ and encodings $(Y_s,W_s),\forall s \leq S$
        \State Initialize $L,  R$ with unit-norm atoms
        \Repeat
            \For{$s = [1 \cdots S$]}
                \State $[Y_s, W_s] = \text{LRSC}(X_s, L, R, \lambda_1, \lambda_2,k)$ ~~~~~ //in Appendix~\ref{appendix:alg}
            \EndFor
            \State {$L = \text{normalize}((\sum X_s R W_s^T Y_s^T)(\sum Y_s W_s R^T R W_s^T Y_s^T)^{-1})$}
            \State {$R = \text{normalize}((\sum X_s^T L Y_s W_s)(\sum W_s^T Y_s^T L^T L Y_s W_s)^{-1})$}
        \Until{Convergence or fixed max iterations}
        \end{algorithmic}
    \label{alg:aodl}
\end{algorithm}

\noindent{\bf Stage I: Sparse coding.} For fixed $L, R$ and for each $s \in \{1, ..., S\}$, estimate $Y_s, W_s$:
\begin{equation}
    \begin{aligned}
         \underset{Y_s, W_s} {\mathrm{argmin}} ( &|| X_s - L Y_s W_s R^T ||_F^2 
         + \lambda_1 ||Y_s||_1 + \lambda_2 ||W_s||_1).
    \end{aligned}
    \label{eq:obj_l1}
\end{equation}

\noindent{\bf Stage II: Dictionary learning.} For fixed sparse coding matrices $Y_{s}, W_s$, in each iteration of our algorithm, we then
solve the following optimization problem:
\begin{align}
    \underset{L, R} {\mathrm{argmin}} \sum_{s \in [1..S]}\| X_s - L Y_s W_s R^T \|_F^2.
    \label{eq:obj_dicts}
\end{align}


\noindent{\bf Optimization algorithm, complexity and convergence.} The overall algorithm is provided in Alg.~\ref{alg:aodl}. After initialization (Step 3), we first perform low-rank sparse coding by solving Eq. \ref{eq:obj_l1} (Steps 5 - 7) for individual samples $X_s$ via ADMM (detailed steps of LRSC in Appendix~\ref{appendix:alg}). In the second stage (Steps 8-9), we update the two dictionaries $L, R$ using the gradient projection method to solve Eq.~\ref{eq:obj_dicts}. The sparse coding stage is dominated by an eigendecomposition (see Appendix~\ref{appendix:alg}) with a complexity of $O(P^3 + Q^3 + k^3)$ per sample sparse coding update in the worst case. The dictionary learning stage is dominated by the matrix inversions with complexity $O(T + P^3 + Q^3)$, where $T$ is the product of the maximum $3$ values among $\{N, M, P, Q, k\}$. Details of all derivations and dictionary initialization strategies are provided in Appendix~\ref{appendix:alg}.

We can show that the objective function values of iterates of AODL converge (proof available in Appendix~\ref{sec:convergence-of-alternating-minimization}):


\begin{theorem}[{\bf Convergence of AODL}]
    \label{thm:convergence-aodl}
    \label{THM:CONVERGENCE-AODL}
    Let $L^{(k)}, R^{(k)}, Y^{(k)}, W^{(k)}$ denote the dictionaries and sparse coding matrices after
    $k$ iterations of AODL.  Let $F^{(k)} := \sum_{s=1}^S \|X_s - L^{(k)}Y^{(k)}_s W^{(k)}_s R^{(k)T}\|_F^2$,
    and let $G^{(k)}$ denote the regularized version of $F^{(k)}$.  Then
    as $k\to\infty$, both $F^{(k)}$ and $G^{(k)}$ converge.
\end{theorem}
We note that it is generally difficult to obtain much stronger convergence results, such as results about convergence
of iterates themselves, for alternating minimization procedures.  In particular, the lack of strict convexity of each coordinate block of the objective function in our setting makes it impossible to apply existing alternating minimization iterate convergence results.   Despite this, such procedures are commonly used heuristics with good empirical performance.

\noindent{\bf \ourmeth in the presence of missing values.} 
To handle samples with missing values and perform imputation, we also introduce a version of our problem with a sample-specific $0-1$ mask $\Omega_s$, where the data fit term from Eq.~\ref{eq:obj_aodl} is replaced by $|| \Omega_s \odot (X_s - L Y_s W_s R^T) ||_F^2$.
We derive an ADMM solution for the missing value objective and detail it in Appendix~\ref{appendix:alg-missing}.

\noindent{\bf Limitations} 
\ourmeth learns good dictionaries and succinct codes when samples allow encoding with low rank $k$. If the data input is not low-rank, \ourmeth will require $k \approx min(P, Q)$ in the worst case, resulting in coding matrices $Y$ and $W$ that exceed the size of the single matrix $Z$ in the CMOD model. In such sub-optimal settings our scalability and model size advantages would diminish compared to the CMOD baseline. Additionally, while we establish sample complexity and objective function convergence for our problem and method, 
establishing iterate convergence in this setting, even to a local optimum, remains a challenging open problem. 

\vsb
\section{Experimental evaluation}
\label{sec:experiments}

We characterize \ourmeth's strengths and weaknesses in comparison to state-of-the-art baselines on a range of datasets. We quantify reconstruction quality (as RMSE) and compactness (as number of non-zero coefficients NNZ) of competing models as well as their ability to impute missing values. We also investigate the patterns in the learned dictionaries and empirically test our theoretical results. All tests are conducted on an Intel(R) Xeon(R) Gold 6138 CPU @ 2.00GHz and 251 GB memory server using MATLAB's R2023a 64-bit version. 
An implementation of our method is available at: \url{https://tinyurl.com/AODL-demo}.


\begin{table*}[t]
\setlength\tabcolsep{1 pt}
\footnotesize
\centering
 \begin{tabular}{|c| c| c| c| c |c| c | c | c| c | c| c | c| c|c| c|c| c|c|c|c|} 
 \hline
 {\multirow{2}{*}{\bf{Dataset}}} & {\multirow{2}{*}{\bf{\#Nodes}}}  & \bf{\#Time}  & {\multirow{2}{*}{\bf{Res.}}} & {\multirow{2}{*}{\bf{\#Samp.}}} & {\multirow{2}{*}{\bf{Split by}}} & {\bf Associated} & {{\bf{Max.}}} & \multicolumn{2}{|c|}{\bf{TGSD}} & \multicolumn{2}{|c|}{\bf{SeDiL}} & \multicolumn{2}{|c|}{\bf{OSubDiL}} & \multicolumn{2}{|c|}{\bf{CMOD-OMP}} & \multicolumn{2}{|c|}{\bf{CMOD-ADMM}} & \multicolumn{2}{|c|}{\bf{\ourmeth (ours)}}\\
 \cline{9-20}
  &   &  \bf{steps} &  & & & {\bf graph} & {\bf NNZ} & RMSE& Time & RMSE& Time & RMSE& Time & RMSE & Time & RMSE & Time & RMSE& Time \\
  \hline
  Synthetic  & 20 & 3000 & - & 100 & ``Time'' & - & 35 & \bf{0.6}  & {\bf 0.5} & 8.8 & 19 & 8.4 & 3.6k & 6.2 & \underline{16} & 8.0 & 36 & \underline{0.9}  & 32 \\ 
 \hline
 Road~\cite{LA_traffic}  & 2780   & 8640 & 5m & 30 & Time & Road net. & 3k & \underline{7.78}   & {\bf 38}  & 8.2 & 4.9k & - & - & 15.3  & 170k& 26.7   & 448  & \bf{3.18} & \underline{255} \\ 
 \hline
   Twitch\cite{twitch}  & 9000   & 512 & 1h & 30 & Nodes & Co-views &  3k& 1.29   & {\bf 98}  & 1.29  & 315 & - & - & \underline{1.23}  & 3.7k & 1.26   &  \underline{203}  & \bf{1.11}  & 267   \\ 
 \hline
  Wiki~\cite{Wiki} & 11400   & 792 & 1h  & 38 & Nodes & Co-clicks &1.5k& \underline{11.5}  & {\bf 126}  & 12.7 & 6.6k & - & - & 19.6  & 2.1k & 21.3   & 329  & \bf{4.5}  & \underline{283}  \\
  \hline
  MIT~\cite{eagle2006reality} & 94   & 8352 & 5m  & 29 & Time & Messages &1.5k&  5.4  & {\bf 11}   & 5.4  & 784 & 5.2 & 403k & \underline{4.2}  & 3.4k &  4.9   & \underline{101} & \bf{2.6}  & 122  \\
  \hline
    Air~\cite{strohmeier2021crowdsourced} & 5500   & 124 & 6h  & 25 & Nodes & Flight net. &1k& 2.3   & {\bf 26} & 1.5 & 165 & 1.7 & 259k & 1.3  &1k &  \underline{1.3}   &  \underline{103} & \bf{1.0} & 119 \\
  \hline
\end{tabular}
 \caption{\footnotesize Statistics of the evaluation datasets (left) and quality and running times for competing techniques at fixed maximum NNZ (right). Datasets have a temporal (\emph{\# Time steps}) and graph mode (\emph{\# Nodes}) and temp. resolution in col. \emph{Res.} 
 We split the data into samples along the larger (Time/Nodes) mode (\emph{Split by} col.) and list the type of \emph{Associated graph}. The right sub-table shows the lowest reconstruction (\emph{RMSE}) and timing in seconds (\emph{Time}) results for competing techniques at a fixed maximal allowed model size (\emph{Max. NNZ}). The most accurate (lowest RMSE) and fastest results are bolded, while the second best are underlined.
 }
\label{table:datasets}
\end{table*}

\subsection{Datasets and Experiment Setup}
\noindent{\bf Datasets.}
We employ synthetic and real-world datasets summarized in Tbl.~\ref{table:datasets}.
The real-world datasets span multiple domains: content exchange (\emph{Twitch}~\cite{twitch}), web traffic (\emph{Wiki}~\cite{Wiki}), sensor network readings (\emph{Road}~\cite{LA_traffic}, \emph{Air}~\cite{strohmeier2021crowdsourced}) and social interactions (\emph{MIT}~\cite{eagle2006reality}). 

\emph{Synthetic data.} We generate synthetic data according to the model $X_s = L Y_s W_s R^T + \epsilon$, where $\epsilon$ is Gaussian noise. Dictionaries $L \in \R^{20 \times 20}$ and $R \in \R^{30 \times 30}$ contain random unit-norm atoms and encodings are of rank $k = 3$, i.e., $Y \in \R^{20 \times 3}, W \in \R^{3 \times 30}$. For each sample $Y_s$ and $W_s$ each contains $15$ randomly selected coefficients with normally distributed in $\mathcal{N}(0,1)$. There are a total of $100$ training samples which are used to learn the dictionaries. 

\emph{Real-world datasets.}  We employ $5$ real-world datasets with temporal and spatial dimensions to be able to evaluate against competing techniques like TGSD~\cite{TGSD} employing analytical temporal and graph dictionaries. To prepare the datasets we follow the same protocols as prior work. The datasets span multiple domains: content exchange (\emph{Twitch}~\cite{twitch}), web traffic (\emph{Wiki}~\cite{Wiki}), sensor network readings (\emph{Road}~\cite{LA_traffic} and \emph{Air}~\cite{strohmeier2021crowdsourced}) and a social interaction (\emph{MIT}~\cite{eagle2006reality}). In order to create multiple samples for dictionary learning, we slice the data on the larger of its two dimensions (time or spatial/graph extent). We also consider alternative slicing (see Fig.~\ref{fig:twitch_slice_short}) that confirms our comparative analysis findings.

Twitch~\cite{twitch} contains viewer-streamer temporal interactions. We create a graph among viewers and add an edge between a pair of viewers if they viewed the same stream at least 3 times. We use the largest connected component of the co-viewing graph. Values in data samples $X\in\R^{9000 \times 512}$ represent the number of minutes in any given hour that a viewer spent viewing any streams (i.e., their level of activity). We slice the data randomly into $30$ samples along the graph dimension. 

The Wiki dataset~\cite{Wiki} records hourly number of views of Wikipedia articles over $792$ hours. A co-click graph among articles is constructed by placing edges between articles with at least $10$ pairwise click events (clicked by the same IPs) within a day. A breadth-first-search (snowball) subgraph of $11400$ around the China article is selected and then sliced into $38$ samples along the graph dimension. 

The Road~\cite{LA_traffic} dataset consists of $2780$ highway speed sensors in the LA area. We use the average speed for $30$ days at $5$-minute interval ($8640$ timesteps) as our signal matrix. The graph is based on connected road segments. We slice the data into 30 samples on the time dimension. 

MIT~\cite{eagle2006reality} is a communication dataset of timestamped messages between users and a weighted social graph. We split the data along its time dimension. 

The Air~\cite{strohmeier2021crowdsourced} dataset contains the the number of flights between connected airports (nodes), while the edges connect flight origin and destination. Temporal snapshots represent the number of incoming flights over a 6 hour window. We remove small airports with less than $8$ flights a day and split the data along its graph dimension obtaining 25 samples.

A note on the selection of real-world datasets: We employ the above spatio(graph)-temporal (ST) data as they have been shown to align to low-rank encoding models due to clustered (shared) behavior present in the temporal and spatial mode observed in the TGSD~\cite{TGSD} baseline. Different from the baseline, however, our method AODL does not use graph or temporal information associated with the data but learns temporal and spatial dictionaries from scratch. The learned dictionaries perform better than analytical ones employed by TGSD~\cite{TGSD} as demonstrated in in our comparative analysis in the main paper. The ST datasets also allow us to perform qualitative analysis (case studies) and also create controlled synthetic data from GT dictionaries akin to the setup in TGSD.

\noindent{\bf Baselines.} We compare against one analytical (fixed) dictionary baseline \emph{TGSD}~\cite{TGSD} and three multi-dictionary learning approaches \emph{CMOD}~\cite{shahriari2021new}, \emph{SeDiL}~\cite{hawe2013separable}, and \emph{OSubDiL}~\cite{ghassemi2019learning}  . 



TGSD employs analytical dictionaries but has a low-rank model for the encoding similar to \ourmeth. Within this baseline we employ the authors' implementation and the GFT dictionary based on data graphs for graph dimensions and the Ramanujan periodic dictionary for the time dimensions. 

Shahriari-Mehr et Al.~\cite{shahriari2021new} proposed two methods 2D-CMOD and 2D-MOD for 2D dictionary learning among which 2D-CMOD converged faster to a better solution according to the authors' experiments. Hence we adopt it as a baseline, and we call it \emph{CMOD} for brevity in all experiments. We experiment with two versions of CMOD: \emph{CMOD-OMP} which is the originally proposed method that uses 2D-OMP as a sparse coding solver; and a variant \emph{CMOD-ADMM} employing an ADMM solver for the sparse coding step. While they produce similar quality results (see columns 11 and 12 in Tbl.~\ref{table:datasets}), the OMP version is about 3 orders of magnitude slower and required over $47$ hours for a single run on some datasets when the target number of coding coefficients is large. As a result, in all experiments (apart from Tbl.~\ref{table:datasets}) we employ the CMOD-ADMM version. 

\emph{SeDiL}~\cite{hawe2013separable} is an older baseline which learns dictionaries employing a conjugate gradient approach. In our experiments, we found that it is sensitive to its hyperparameters and even when tuned extensively, it produces similar or worse results than the newer baseline CMOD~\cite{shahriari2021new} while requiring orders of magnitude more time to complete on some datasets (see Tbl.~\ref{table:datasets}). Furthermore, our observations of SeDiL's performance are consistent with those reported by the authors of CMOD~\cite{shahriari2021new}. As a result, we report \emph{SeDiL} results only in Tbl.~\ref{table:datasets} and omit it from the comparisons in the rest of the experiments. 

{\emph{OSubDiL}~\cite{ghassemi2019learning} learns a Kronecker product of the two sub-dictionaries online, and it requires less memory since it processes one data sample per iteration. However, because it maintains a Kronecker product of sub-dictionaries instead of the sub-dictionaries themselves, it still requires a large amount of memory. The other two models in the same family (STARK and TeFDiL) are omitted since they use much more resources than \emph{OSubDiL}. In contrast, CMOD and \ourmeth also process each sample separately and store the sub-dictionaries directly. As a result, we only report \emph{OSubDiL} results for the small datasets MIT and AIR in Tbl.~\ref{table:datasets}. } 

\noindent{\bf Tuning.} We tune the hyperparameters of all competing techniques by an extensive grid search. Details of the grid search and best parameter values for all baselines are presented in Appendix~\ref{appendix:hyper}.

\noindent{\bf Metrics.} We measure the reconstruction quality as the element-wise root mean squared error between a sample $X_s$ and its reconstruction $X_s'$: $\text{RMSE} = \frac{1}{S}\sum_{s}^{S} \sqrt{\frac{\sum_{(i,j)} ({X_s}_{(i,j)} - {X_s'}_{(i,j)})^2}{|X_s|}},$ where $|X_s|$ denotes the number of elements in $X_s$. 
We employ the average number of non-zero coefficients (NNZ) across samples to quantify the size of the encodings produced by competing techniques. We also measure actual running time for competing techniques to compare their scalability.

\noindent{\bf Dictionary sizes:} While the data dimensions $N$ and $M$ of the learned dictionaries $L \in \R^{N \times P}$ and $R \in \R^{M \times Q}$ are predetermined by the size of the input signals $X_s$, one has a choice when it comes to the number of atoms in each dictionary ($P$ and $Q$). We employ square dictionaries in all experiments (i.e., $P=N$ and $Q=M$) as this is the minimum number of atoms to form a basis for each of the data dimensions. We also keep the sizes of the analytical dictionaries employed by TGSD the same as those learned by the rest of the competing techniques (GFT is square and for Ramanujan we employ the first $Q$ atoms when ordered from low to high periods). 


\begin{figure*}
    \centering
    \subfigure [Road]
    {
        \includegraphics[width=0.19\linewidth]{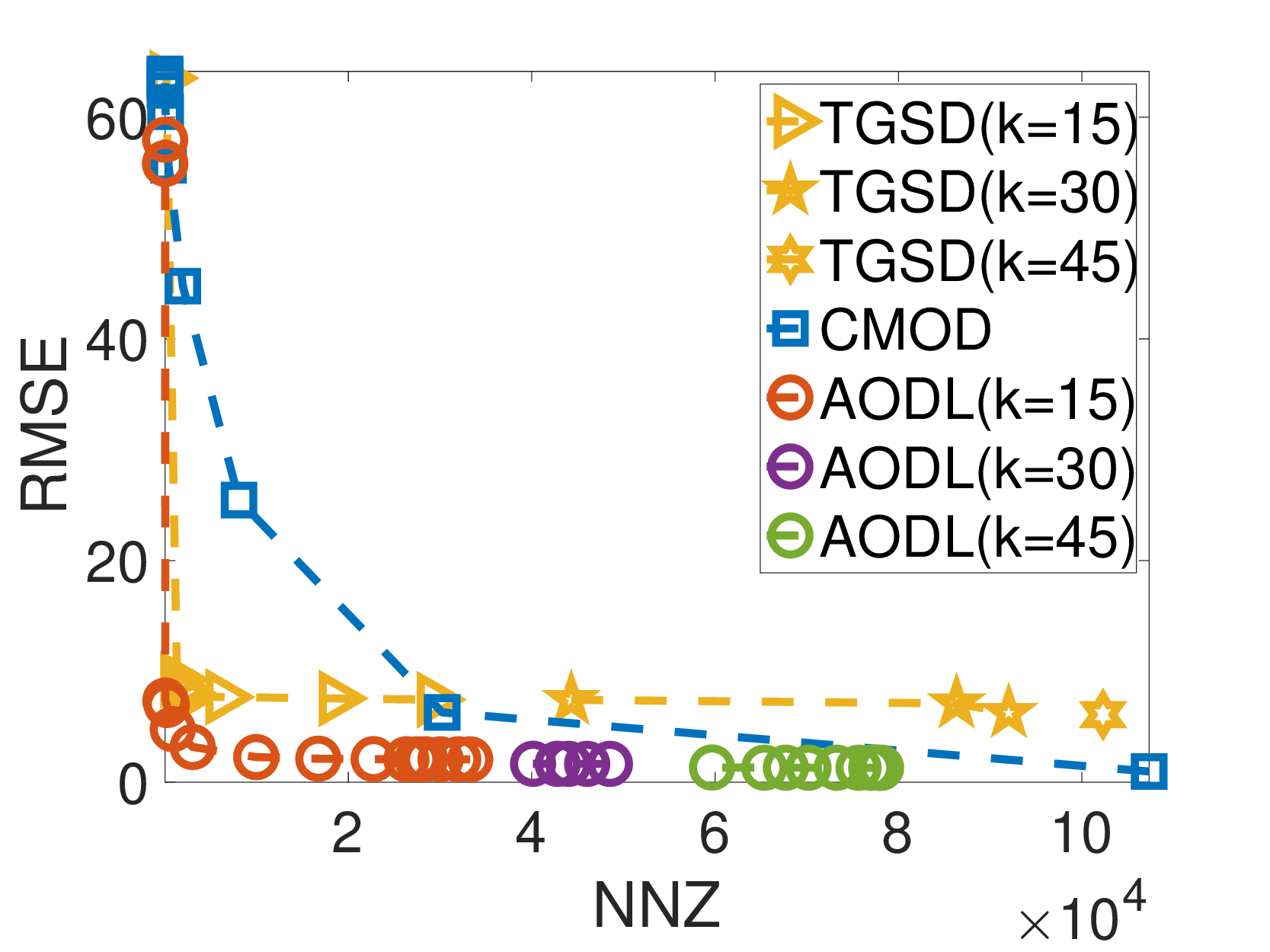}
        \label{fig:real_road}
    }\hspace{-0.16in}
    \subfigure [Twitch]
    {
        \includegraphics[width=0.19\linewidth]{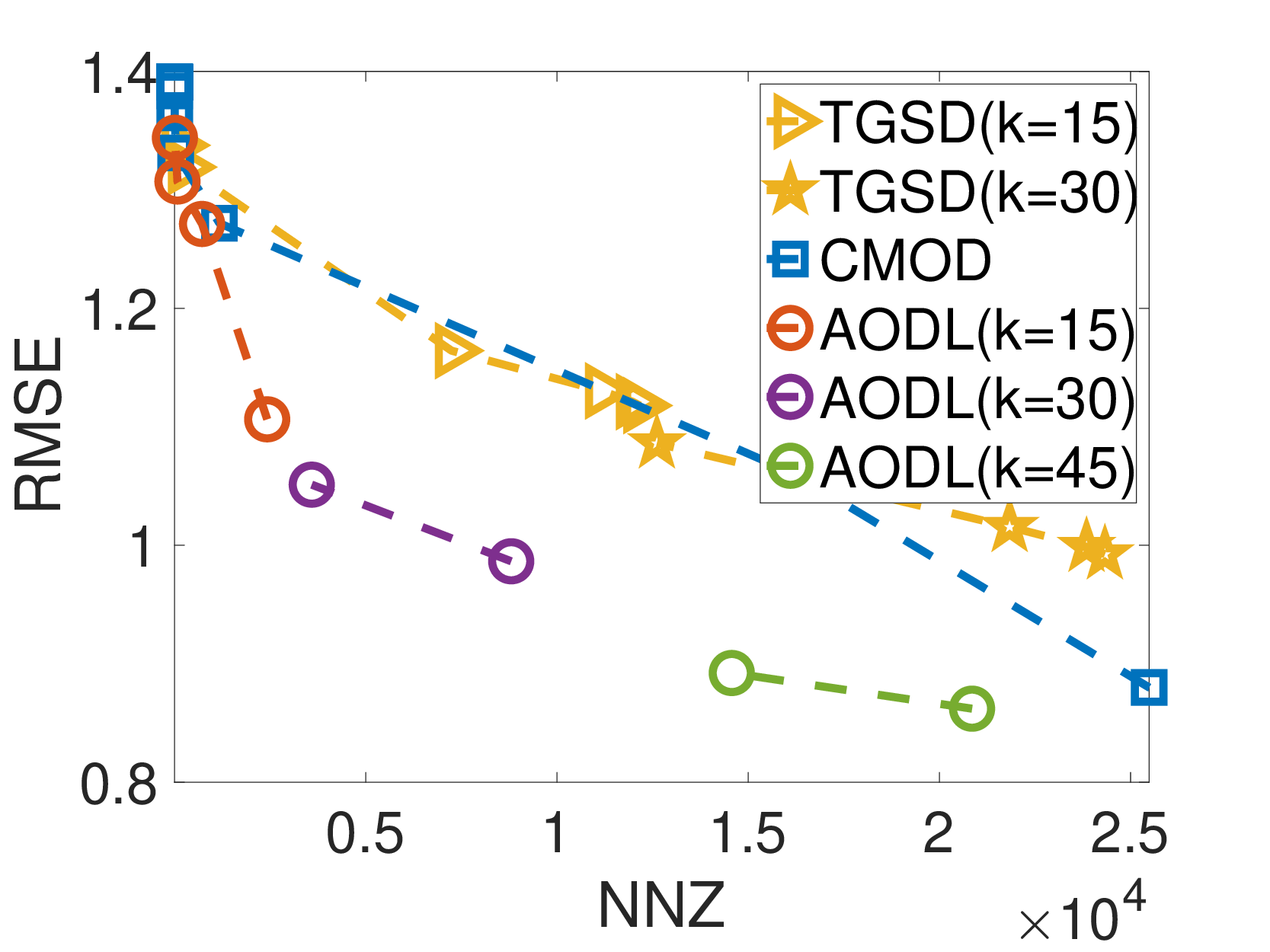}
        \label{fig:real_twitch}
    }\hspace{-0.16in}
    \subfigure [Wiki]
    {
        \includegraphics[width=0.19\linewidth]{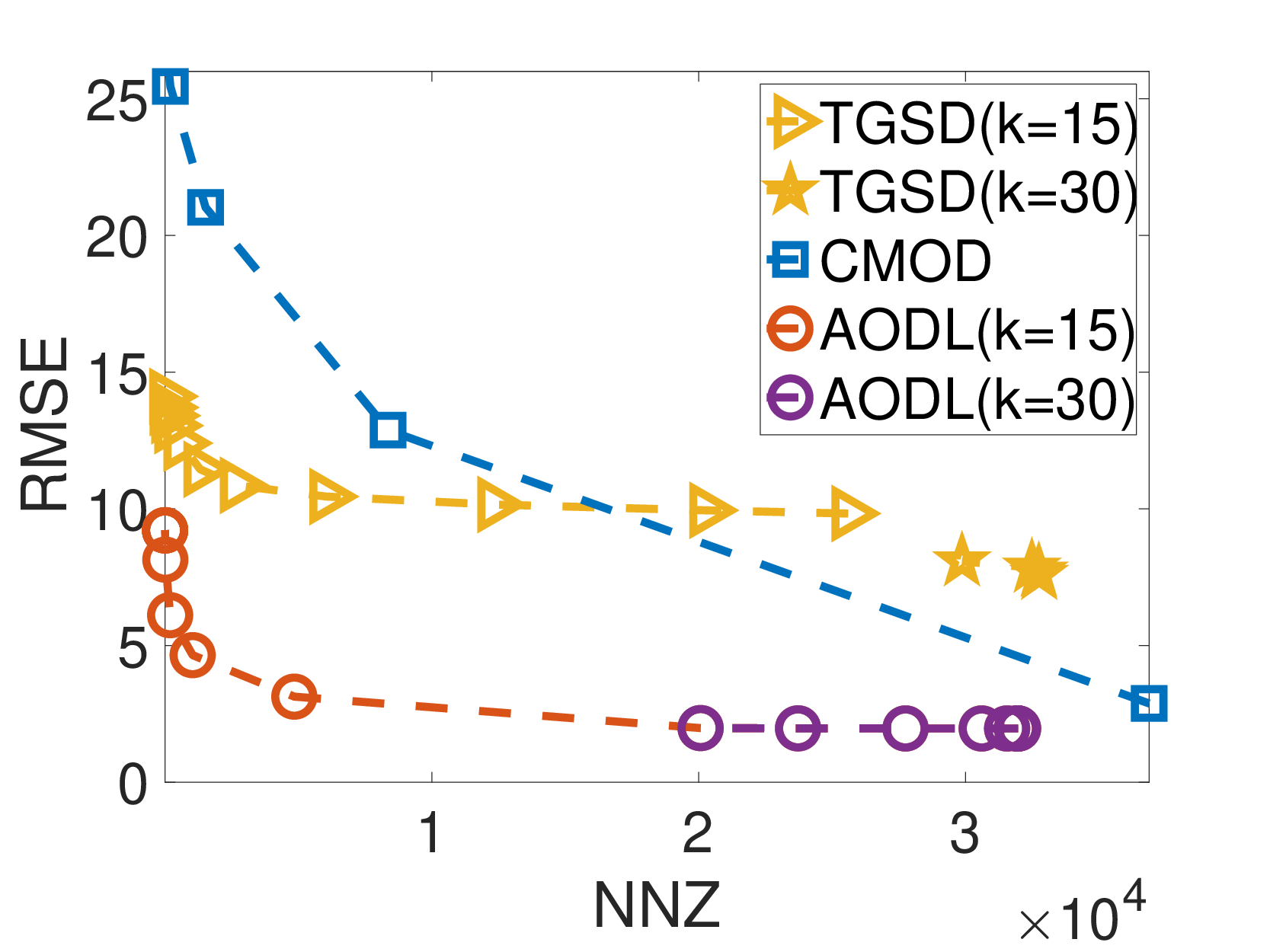}
        \label{fig:real_wiki}
    }\hspace{-0.16in}
    \subfigure [MIT]
    {
        \includegraphics[width=0.19\linewidth]{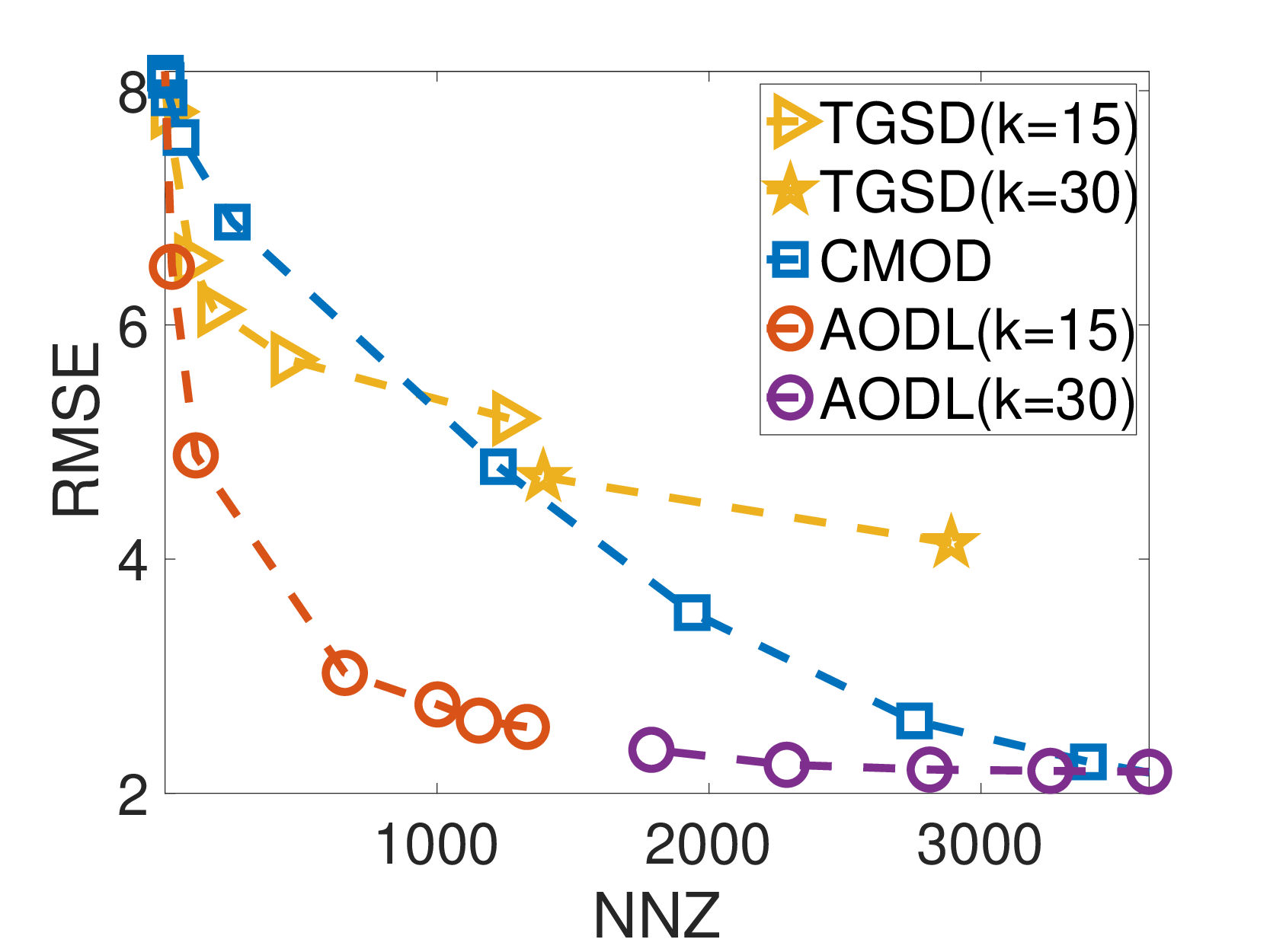}
        \label{fig:real_mit}
    }\hspace{-0.13in}
    \subfigure [Air]
    {
        \includegraphics[width=0.19\linewidth]{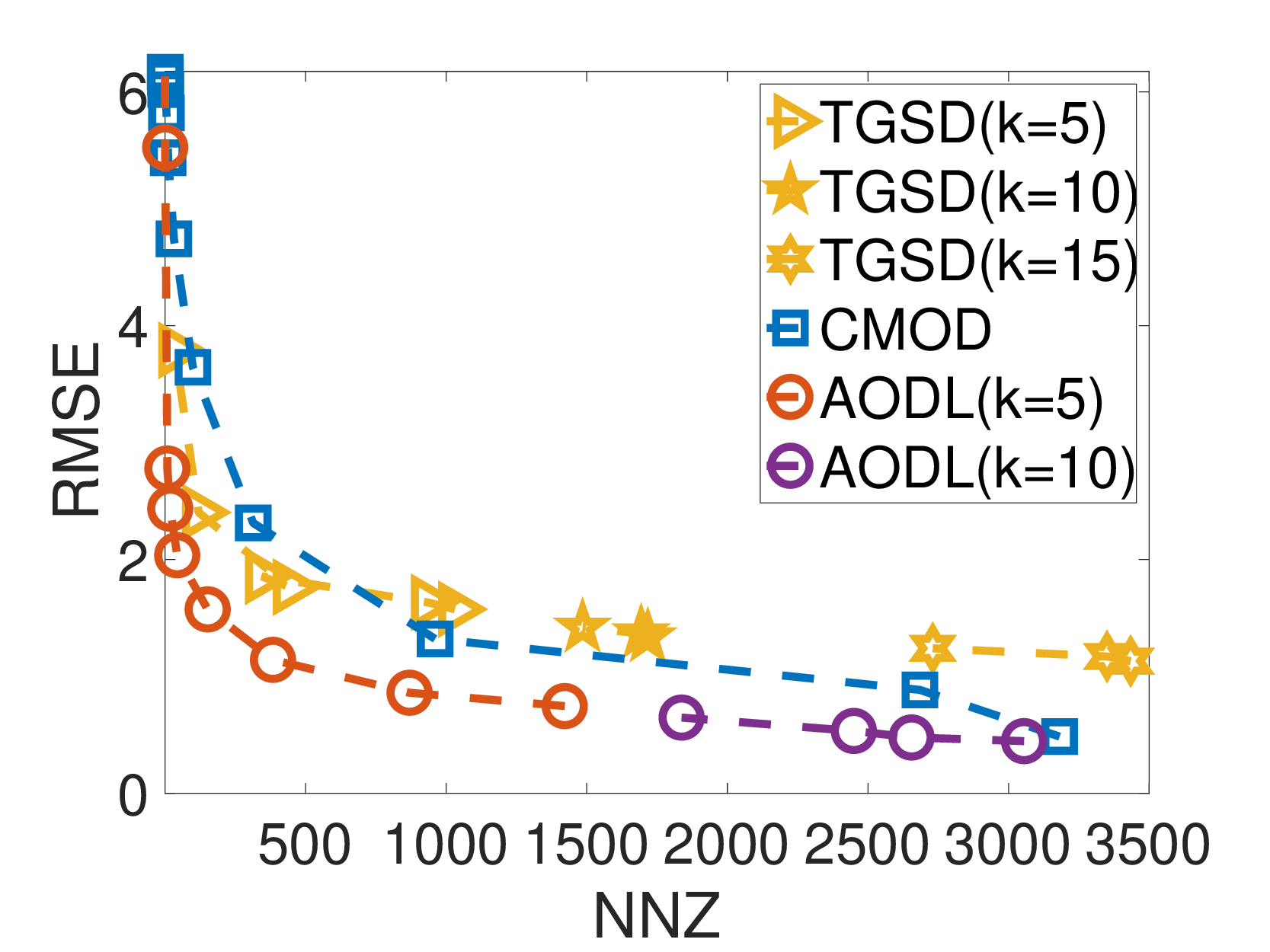}
        \label{fig:real_air}    
    }\hspace{-0.16in} \\
    \vsa
     \subfigure [Impute: Road]
    {
        \includegraphics[width=0.19\linewidth]{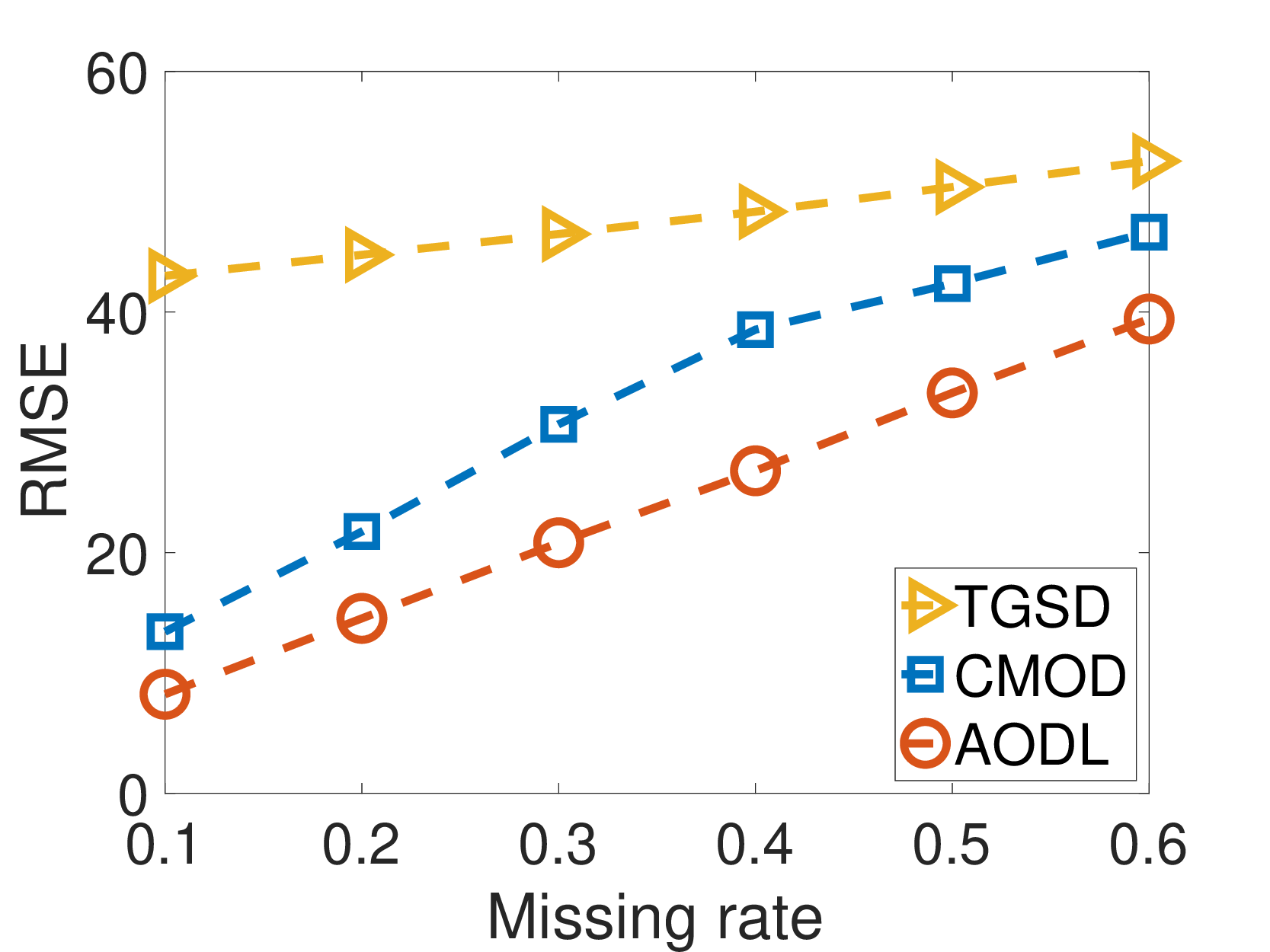}
        \label{fig:road_imp}
    }\hspace{-0.16in}
    \subfigure [Impute: Twitch]
    {
        \includegraphics[width=0.19\linewidth]{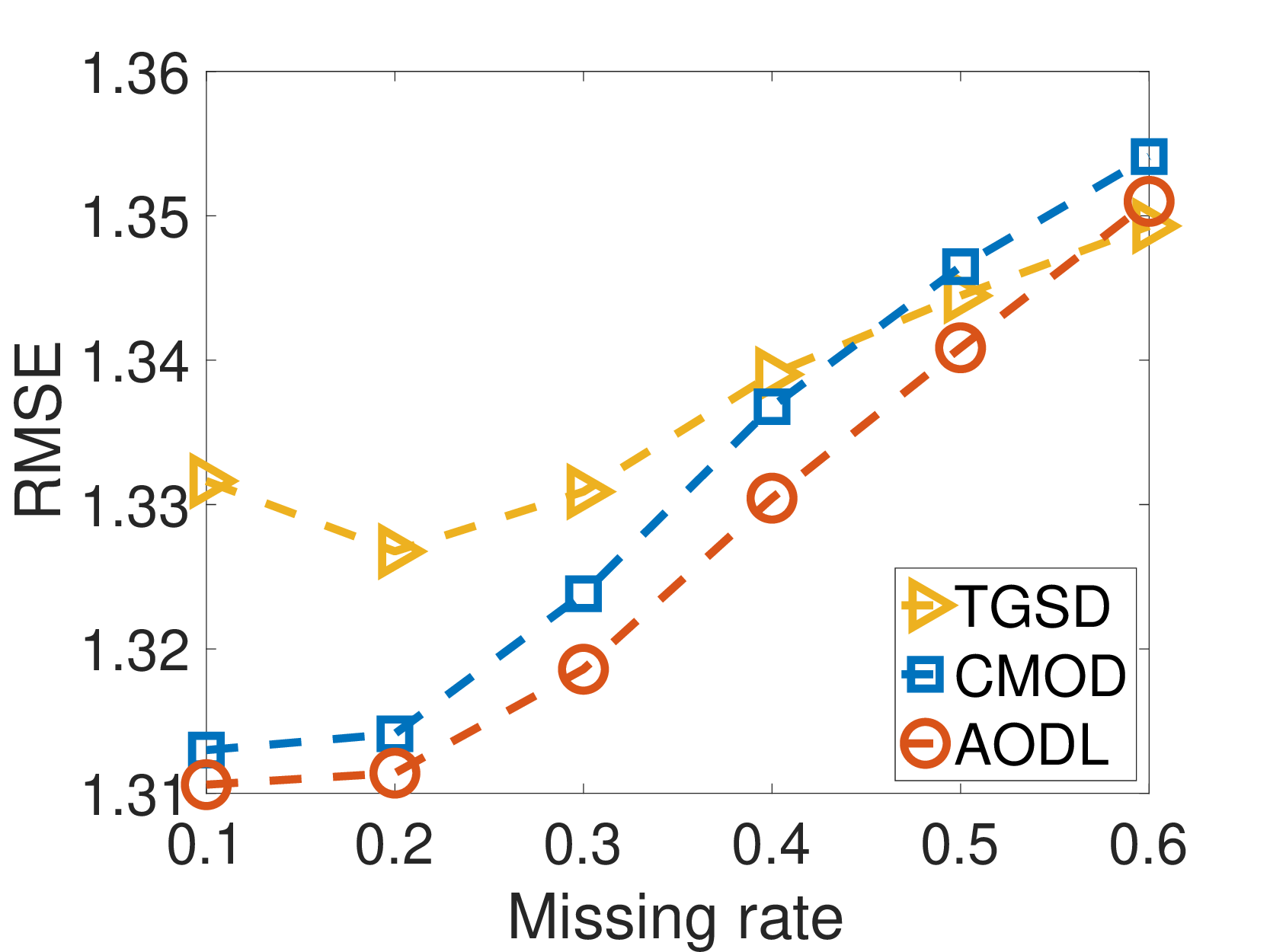}
        \label{fig:twitch_imp}
    }\hspace{-0.16in}
    \subfigure [Impute: Wiki]
    {
        \includegraphics[width=0.19\linewidth]{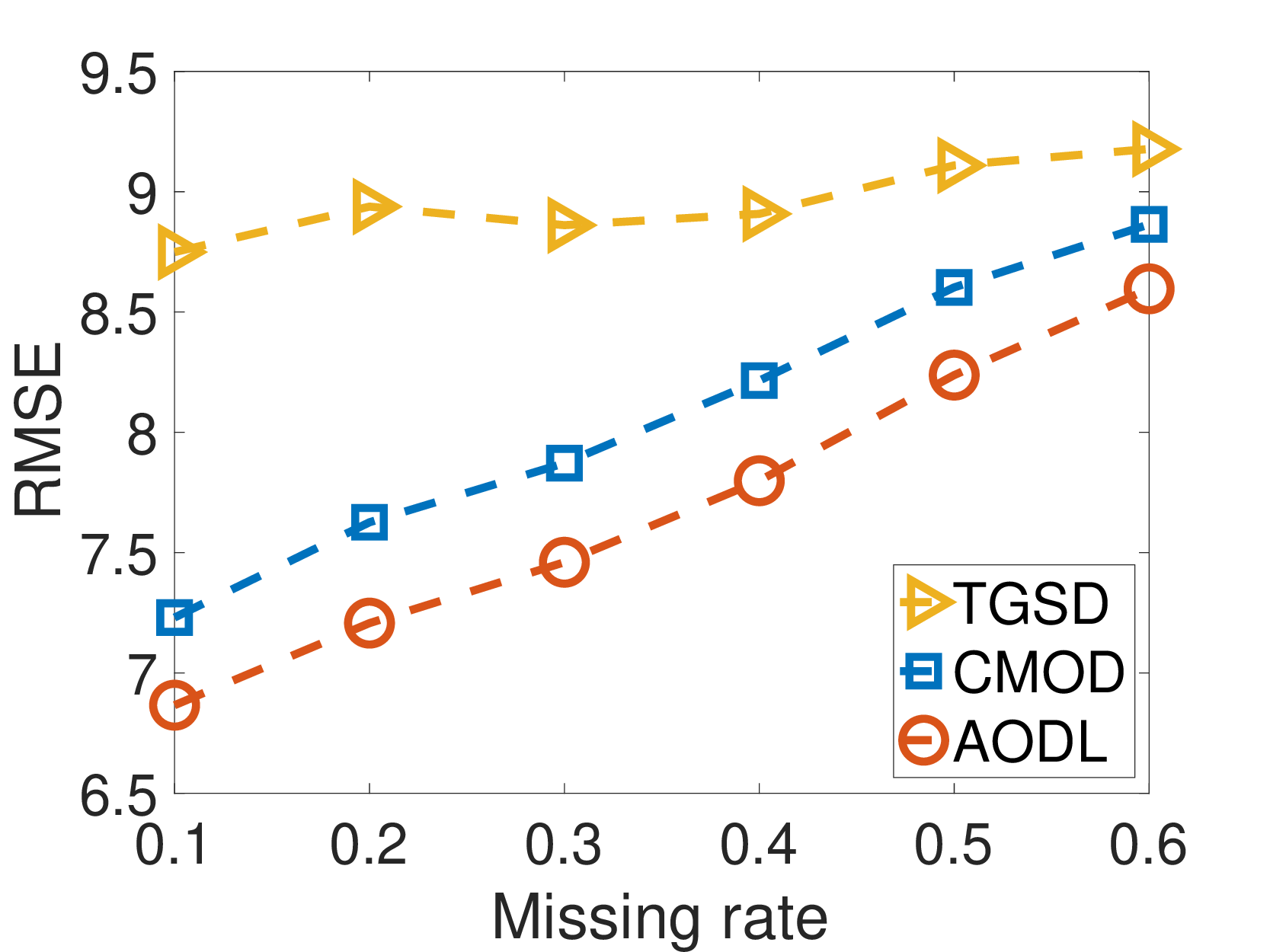}
        \label{fig:wiki_imp}
    }\hspace{-0.16in}
    \subfigure [Impute: MIT]
    {
        \includegraphics[width=0.19\linewidth]{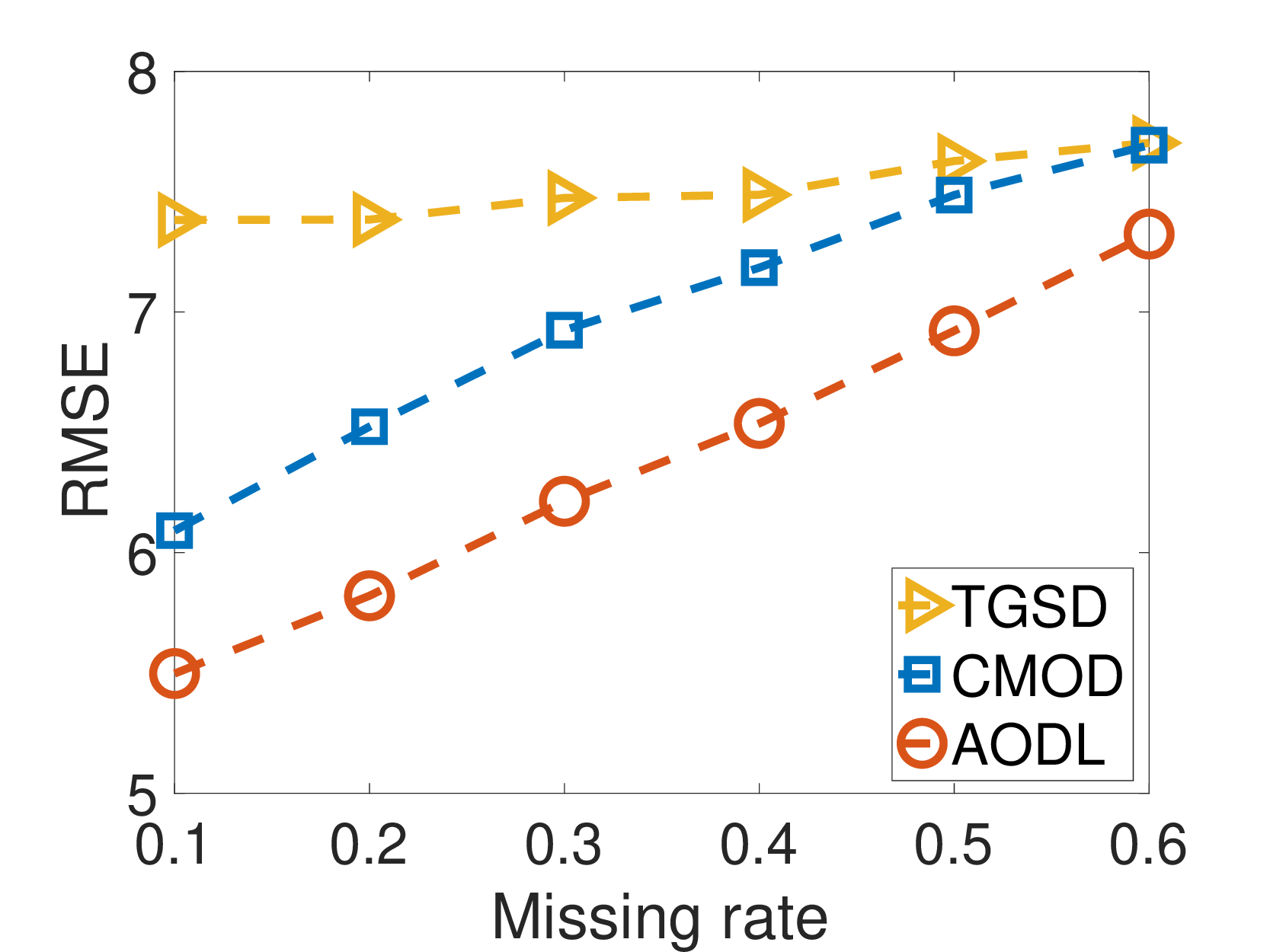}
        \label{fig:mit_imp}
    }\hspace{-0.16in}
    \subfigure [Impute: Air]
    {
        \includegraphics[width=0.19\linewidth]{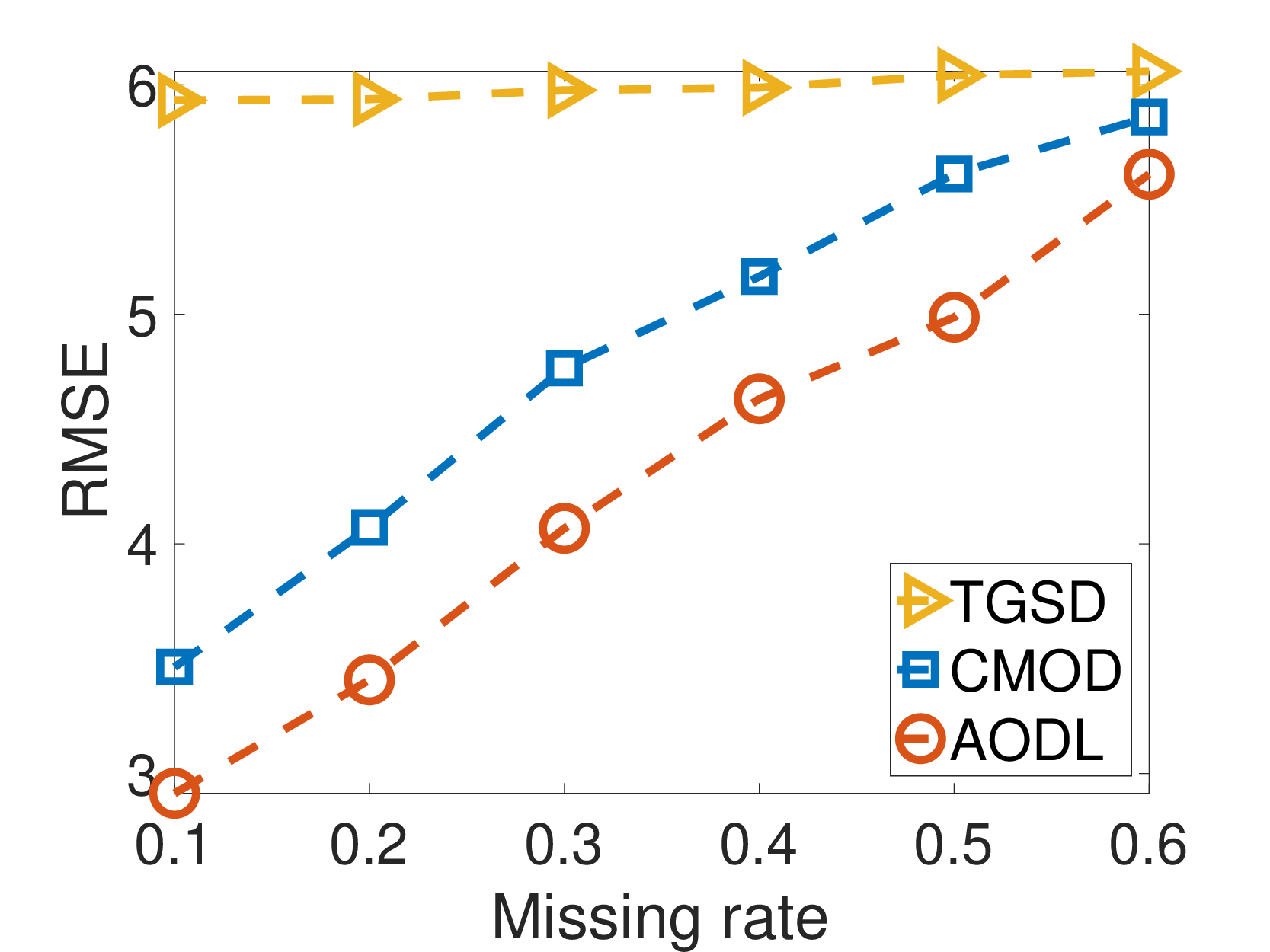}
        \label{fig:air_imp}
    }
    \caption{ \footnotesize
     Comparison of competing techniques for data reconstruction \subref{fig:real_road}-\subref{fig:real_air} and missing value imputation \subref{fig:road_imp}-\subref{fig:air_imp} on all real-world datasets. 
    }
    \label{fig:real_test}
\end{figure*}

\subsection{Experiments}
\noindent{\bf Quality and running time for fixed model size.} Sparse coding can be viewed as a compressive lossy reconstruction of the input data. We first compare the quality of reconstruction and running time of competing techniques on all datasets for a fixed model size. We report this comparison in Tbl.~\ref{table:datasets} where the maximal model size (Max. NNZ) is listed in the $8$-th column. 
TGSD is the fastest among competing techniques as it only performs sparse coding and no dictionary learning. However, its quality is dominated by \ourmeth in all but the Synthetic dataset since the learned dictionaries allow for a more accurate representation of the data. TGSD has better RMSE than \ourmeth on the synthetic dataset since we equip it with the ground truth dictionaries used for generating the data, while these dictionaries are not provided to any of the dictionary learning techniques. \ourmeth achieves the smallest error at fixed model size on all real-world datasets and its running time is similar to the ADMM version of CMOD. While CMOD-OMP is slightly better than CMOD-ADMM regarding RMSE, it requires orders of magnitude more time (10s of hours on some datasets) to converge and this gap grows with the NNZ. SeDiL's running time is inconsistent since the required number of iterations to converge to a low-RMSE solution varies widely across datasets while its best RMSE is similar to that of CMOD and dominated by \ourmeth.

\noindent{\bf Reconstruction quality vs size on real-world data.}
We next evaluate the trade-off between reconstruction error (RMSE) and coding coefficient size (NNZ) for all techniques on the real-world datasets in Figs.~\ref{fig:real_road}-\ref{fig:real_air}. Recall that all competing techniques employ (dense and square) dictionaries of the same size, and hence, the dictionary size is not reported as part of the NNZs. To obtain different points in the RMSE-NNZ space we vary the sparsity regularization hyper-parameters for all competing techniques and we also consider variants of \ourmeth with different rank $k$. 
\ourmeth outperforms baselines on all datasets by consistently producing more accurate models (lower RMSE) at the same level of NNZ. Note that a larger $k$ and hence larger coding coefficient matrices enables lower RMSE, but the RMSE reduction diminishes with $k$. In the Road (Fig.~\ref{fig:real_road}) and Wiki's (Fig.~\subref{fig:real_wiki}) datasets \ourmeth has the largest relative advantage. For example, to match the best achieved RMSE of CMOD (at $105k$ NNZ), \ourmeth requires an order of magnitude fewer (around 10k) coefficients, while in Wiki \ourmeth can save close to $80\%$ of the coefficients to match CMOD's quality ($7k$ vs $37k$ NNZ). 
The advantages of \ourmeth stem from i) the low-rank model which aligns well with conserved temporal behaviors for subgraphs (spatial sub-regions) and ii) its learned dictionaries specifically tailored to low-rank encoding matrices. Note that TGSD also employs a low-rank encoding model and it tends to work on par or even better than CMOD in low-NNZ regimes. However, data-driven dictionaries (even for a non-low-rank CMOD model) offer an advantage for higher NNZ. 

\noindent{\bf Missing value imputation.} We also evaluate the quality of learned dictionaries for missing value imputation. We employ \ourmeth handling missing values and develop a similar missing-value-aware version of CMOD-ADMM (details in Appendix~\ref{appendix:alg-missing}). TGSD supports missing value imputation by design. We manually remove a fraction of values in random locations from each data sample and report the RMSE between imputed and actual data for increasing fraction of missing values (ranging from $10\%$ to $60\%$). \ourmeth achieves the best RMSE among all competing techniques and across all missing value levels. CMOD is the second best technique demonstrating the benefits of dictionary learning compared to analytical dictionaries employed by TGSD. An exception to these trends is  Twitch (Fig.~\ref{fig:twitch_imp}) at high (0.5-0.6) rates of missing values, where TGSD works on par and even better than the CMOD and \ourmeth. This maybe due to Twitch data being sparser than other datasets limiting the benefits of data-driven dictionaries (CMOD and \ourmeth) due to insufficient observed data.

\begin{figure*}[t]
    \centering
    \subfigure [Error vs \# Samples]
    {
        \includegraphics[width=0.23\linewidth]{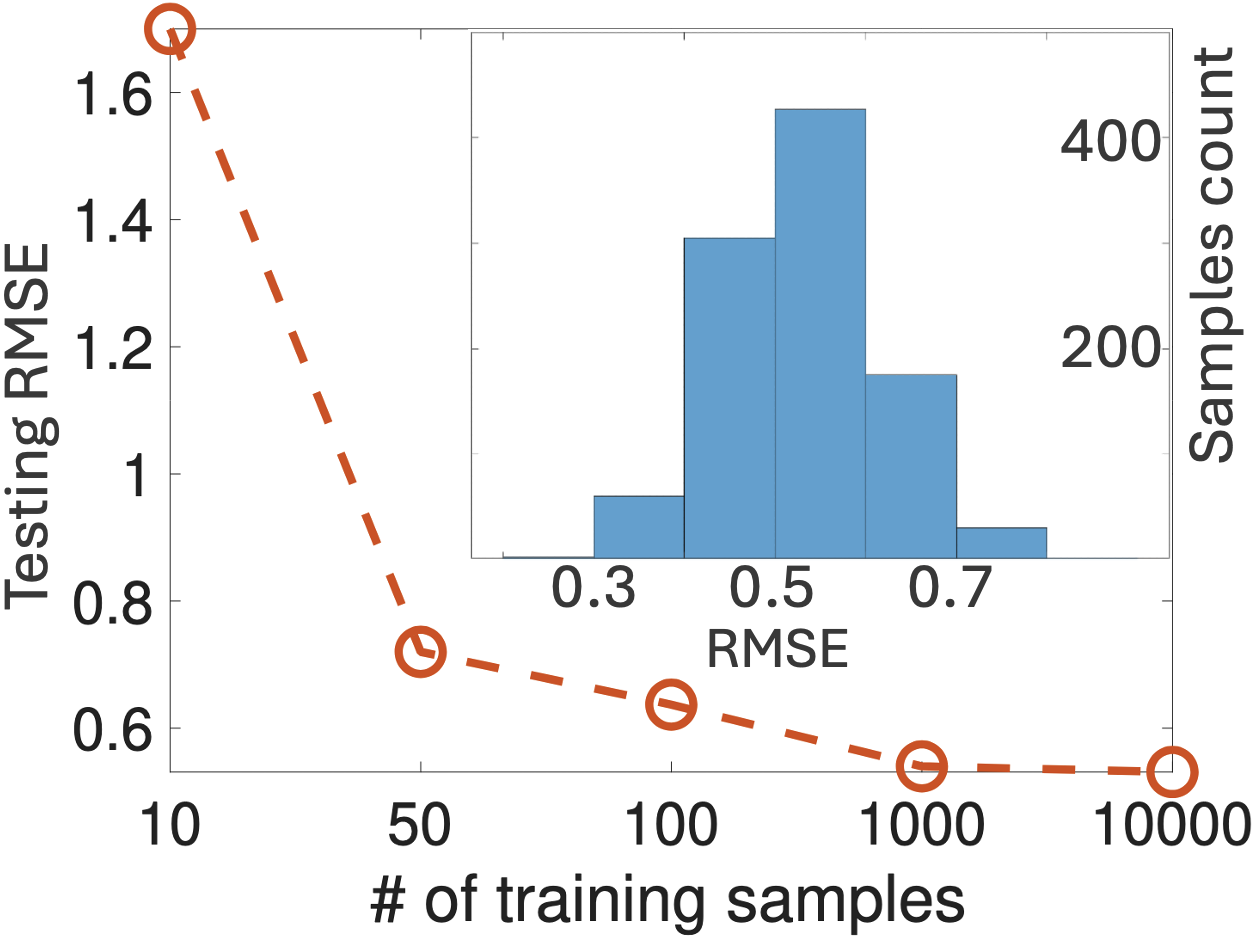}
        \label{fig:theory_error_vs_samples}
    }
    \subfigure [Convergence]
    {
        \includegraphics[width=0.23\linewidth]{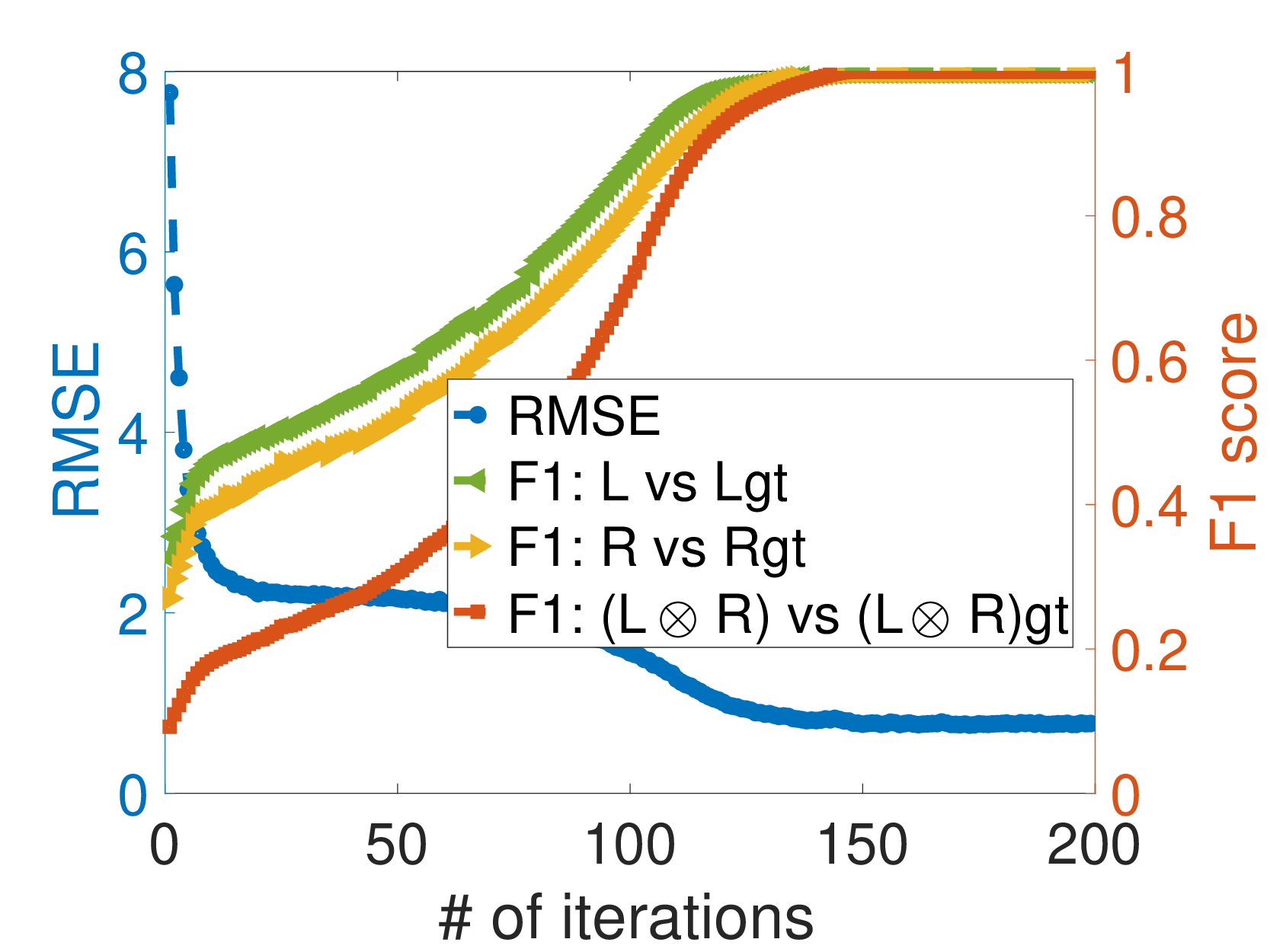}
        \label{fig:convergence_rmse_f1_vs_iteration}
    }
    \subfigure [Syn: $\kappa$ vs quality]
    {
        \includegraphics[width=0.23\linewidth]{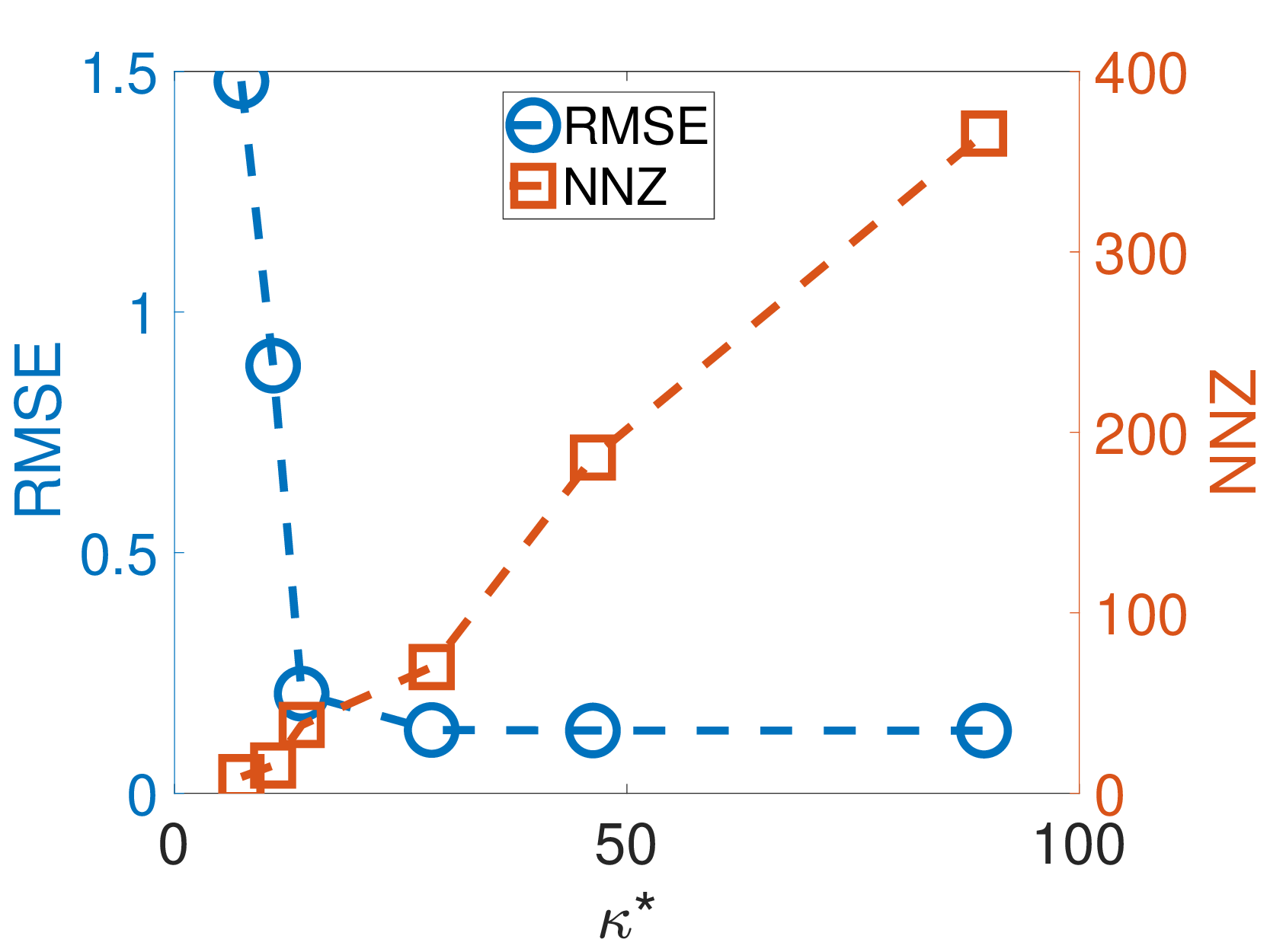}
        \label{fig:syn_kappa_test}
    }
    \subfigure [MIT: $\kappa$ vs quality]
    {
        \includegraphics[width=0.23\linewidth]{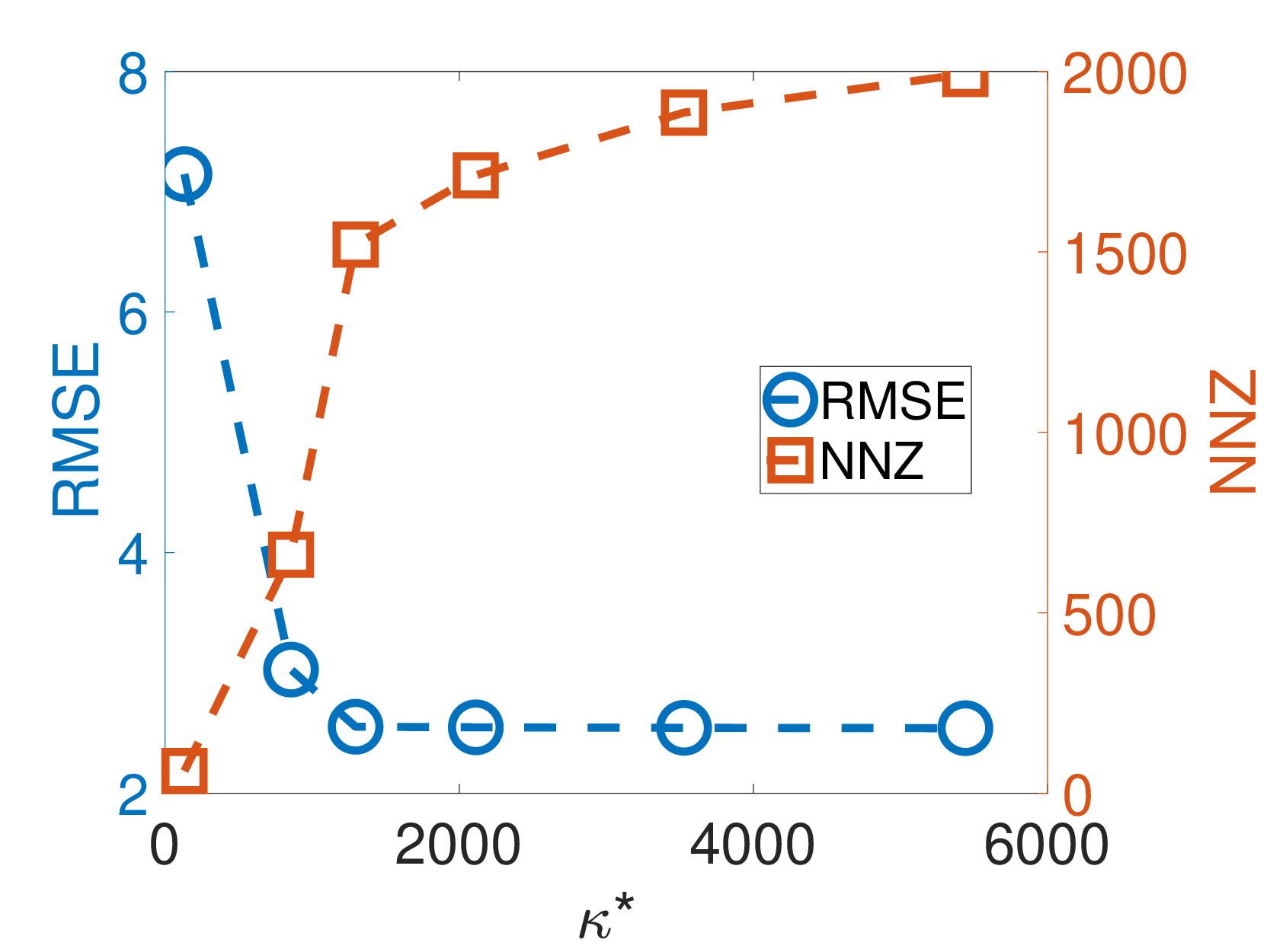}
        \label{fig:real_kappa_test}
    }
    \caption{\footnotesize
     \subref{fig:theory_error_vs_samples}: Average reconstruction error of unobserved (test) samples as a function of training samples and encoding error of the testing samples. The distribution of testing errors for $1k$ testing samples when $S = 1k$ is in the inset figure. The empirical error is {slightly smaller than the one predicted by the bound (0.64).} \subref{fig:convergence_rmse_f1_vs_iteration} Convergence as RMSE and F1 score as a function of \#iterations. F1 score is computed between (1) learned dictionaries (L, R, and their Kronecker product) and (2) GT dictionaries based on atom inner product alignment.\subref{fig:syn_kappa_test},\subref{fig:real_kappa_test}: 
     Control of sparsity parameter $\kappa_*$ of learned coding matrices on synthetic and MIT data via grid search in the space of regularization parameters $\lambda_1, \lambda_2$.
     Larger $\kappa_*$ relaxes the sparsity constraint, which allows for larger NNZ and lower RMSE. 
     }
    \label{fig:theory_test}
\end{figure*}

\noindent{\bf Theoretical bound evaluation (Thm.~\ref{thm:dictionary-learning-generalization-bound}).} We also study the performance of \ourmeth with different number of samples as an empirical validation of Thm.~\ref{thm:dictionary-learning-generalization-bound}.  Specifically, we treat \ourmeth as a proxy for the empirical risk minimization rule that leads to the sample complexity bound. Given a number of samples from a natural distribution for which the minimum possible risk is $0$, we demonstrate that \ourmeth achieves an error that is less than or equal to the bound. The experimental setup is as follows: (i) sample data generation: $N$, $M$, $P$, $Q$ are set to $20$, $30$, $20$, $30$, respectively. Both $L$ and $R$ are almost orthogonal with length-1 atoms. 
For each sample, the coefficient matrices $Y$ and $W$ contain 20 and 30 non-zero values (rank $k=3$), and we restrict $||Y||_1=||W||_1=10$. (ii) Theory parameters:  max Frobenius norm of the samples $C=10$, $1 - e^{-2} = 0.8647$ and $\kappa = 10$. 
The test is conducted in a training-testing manner where we (i) generate the training and testing samples using the same ground truth dictionaries and constraints on coefficients; (ii) next we learn the dictionaries from increasing number of training samples; and (iii) use the learned dictionaries to sparse code a fixed testing dataset to calculate the RMSE error. 

We report the average test reconstruction error as a function of the number of training samples as well as the distribution of sample errors (inset) in Fig.~\ref{fig:theory_error_vs_samples}. As expected, the test error decreases with the number of training samples and the error remains relatively stable beyond $1000$ samples. To support the theoretical bound, we need to show that the test error is less than or equal to the error predicted by the theory. {Substituting $S=1000$ and the remaining theory parameters mentioned above, we obtain an error (RMSE) bound of roughly $0.64$, which is slightly larger than the error in the figure.  Small perturbations of the confidence probability (governed by the parameter $x$) changes this number very little.} 
We simplified the data distribution for the purposes of this test in order to be able to reach a test error close to zero. The bar chart in Fig.~\ref{fig:theory_error_vs_samples} shows the test error distribution when $S=1000$ and empirical error of less than $1$ for all samples, which indicates that the learned dictionaries fit the data well. The gap between the test error achieved by our algorithm and the bound motivates further theoretical investigation discussed at the end of Appendix~\ref{sec:sample-complexity-proof}.

\noindent{\bf Convergence.} We next conduct empirical convergence analysis of \ourmeth on synthetic data. The synthetic parameters are set based on the default values outlined in Appendix~\ref{appendix:hyper}. The ground truth (GT) dictionaries are almost orthogonal. We are interested in quantifying the rate at which the learned dictionaries converge to the GT and quantify this in terms of F1 score. Specifically, we compute inner products between all pairs of learned and GT atoms in a dictionary and pick, without replacement, the top pairs in turn keeping the inner product value as a fractional success (TP) count. The F1 score is then computed based on precision and recall. We measure the F1 score as a function of the number of iterations for $L$, $R$ and the 2D atoms computed as the Kronecker products of the two dictionaries (Fig. \ref{fig:convergence_rmse_f1_vs_iteration} (green, yellow and red curves). \ourmeth recovers the GT dictionaries in about 140 iterations for this setting ($F1=1$). We also plot the corresponding RMSE as a function of the iterations in Fig.~\ref{fig:convergence_rmse_f1_vs_iteration} (blue curve). The RMSE drops significantly in the first few iterations, then becomes relatively stable and converges at around iteration $150$ when the GT dictionaries are recovered (note that the final RMSE>0 since the data contains noise). 

\noindent{\bf Constrained optimization via regularization.} 
Next, we show empirically that we can effectively control the sparsity (i.e., $\kappa_*$) of learned coding matrices via regularization.
We first predefine some target $\kappa_*$ values for a given dataset. When learning the dictionaries, we (i) grid search the regularizers ($\lambda_1, \lambda_2$) in \ourmeth; (ii) pick the regularizers that produce a $\kappa_* = max\{\|Y_s\|_1, \|W_s\|_1\}$ that is close to the target $\kappa_*$ values; and (iii) report the RMSE and NNZ v.s. $\kappa_*$ in Figs.~\ref{fig:syn_kappa_test},\subref{fig:real_kappa_test}. The RMSE decreases for increasing $\kappa_*$ (blue curves) while the NNZ increases (red curves). The model is less sparsity-constrained at larger $\kappa_*$ allowing denser coding matrices.  The fact that we succeed in finding $(\lambda_1, \lambda_2)$ that yield each target $\kappa_*$ (even those below the maximum $L_1$ norm of the coding matrices that generated the data) is an empirical observation that aligns with Thm.~\ref{thm:regularization-solves-constrained} (although Thm.~\ref{thm:regularization-solves-constrained} cannot be completely confirmed empirically, since we cannot efficiently solve the constrained problem). In a practical scenario with a desired budget ($\kappa$) of coefficients, one can perform bisection search on the regularizers to satisfy the desired sparsity level. 
%


\begin{figure*}[t]
\vsa
    \centering 
     \subfigure [\#1 time atom]
    {
        \includegraphics[width=0.145\linewidth]{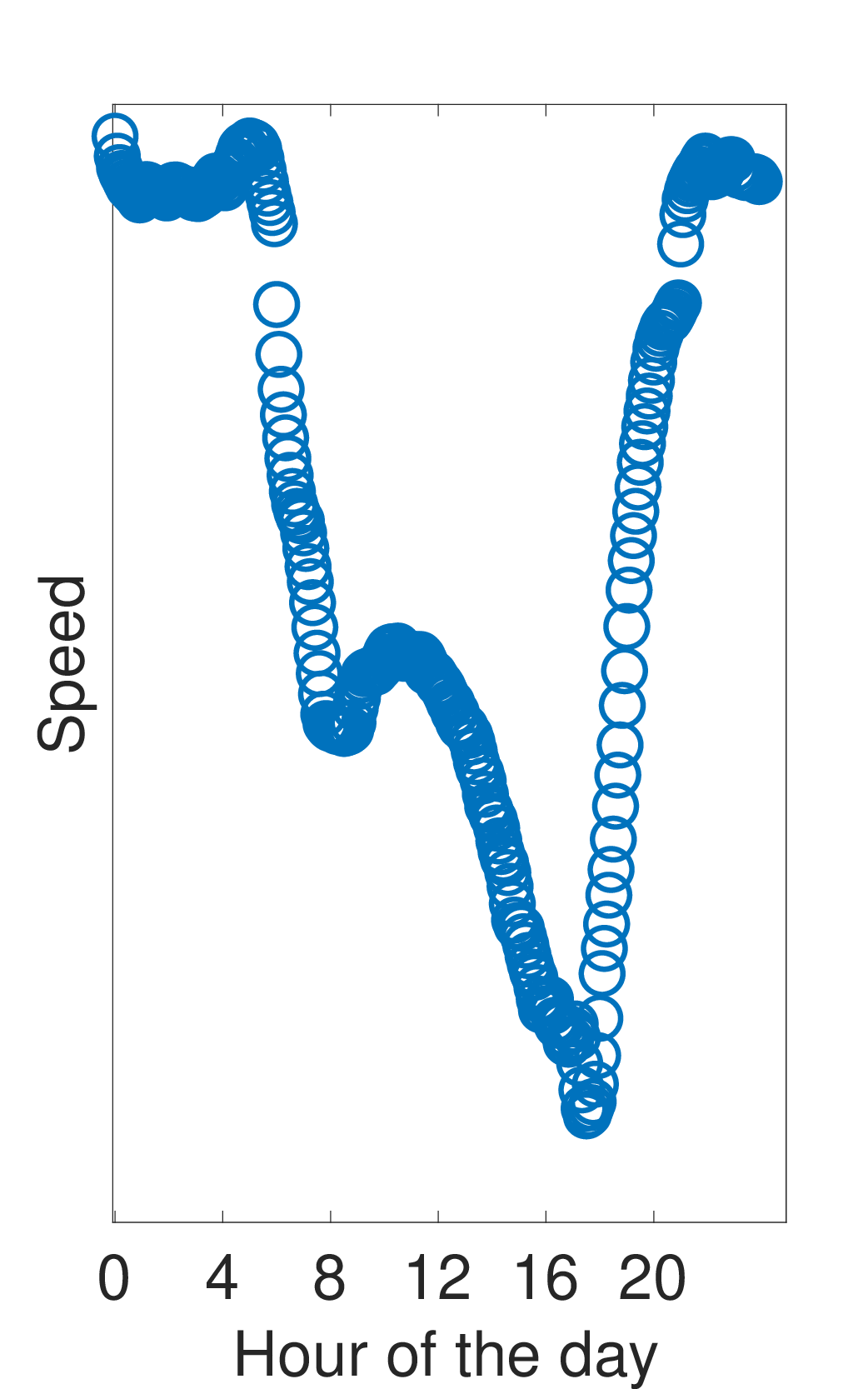}
        \label{fig:road_atom_tempora}
    }
    \subfigure [Use of \#1 time atom]
    {
        \includegraphics[width=0.31\linewidth]{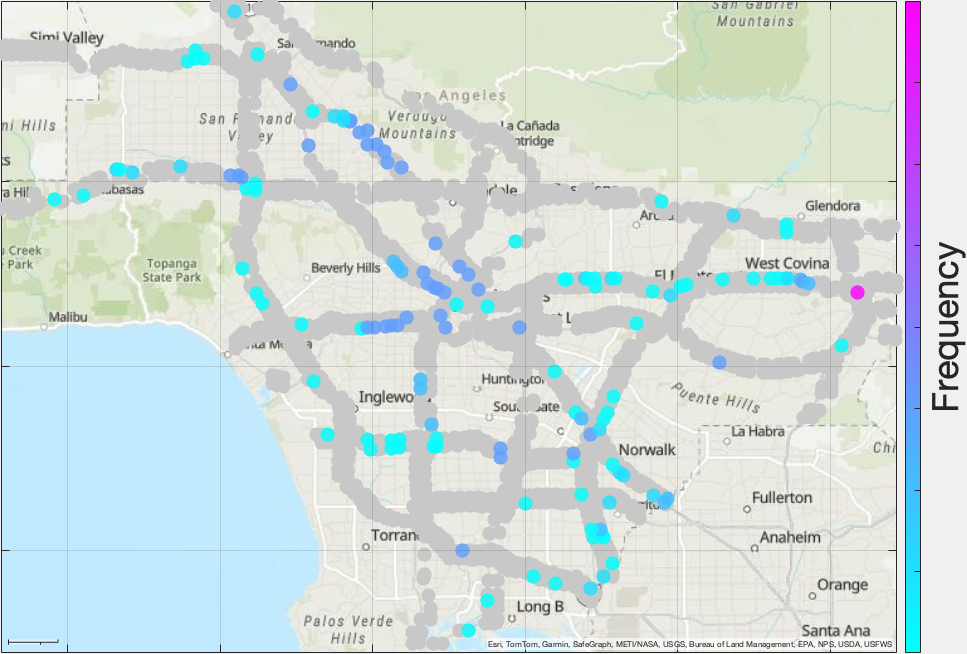}
        \label{fig:road_atom_tempora_freq}
    }
     \subfigure [\#1 spatial atom (\#1sa) ]
    {
        \includegraphics[width=0.31\linewidth]{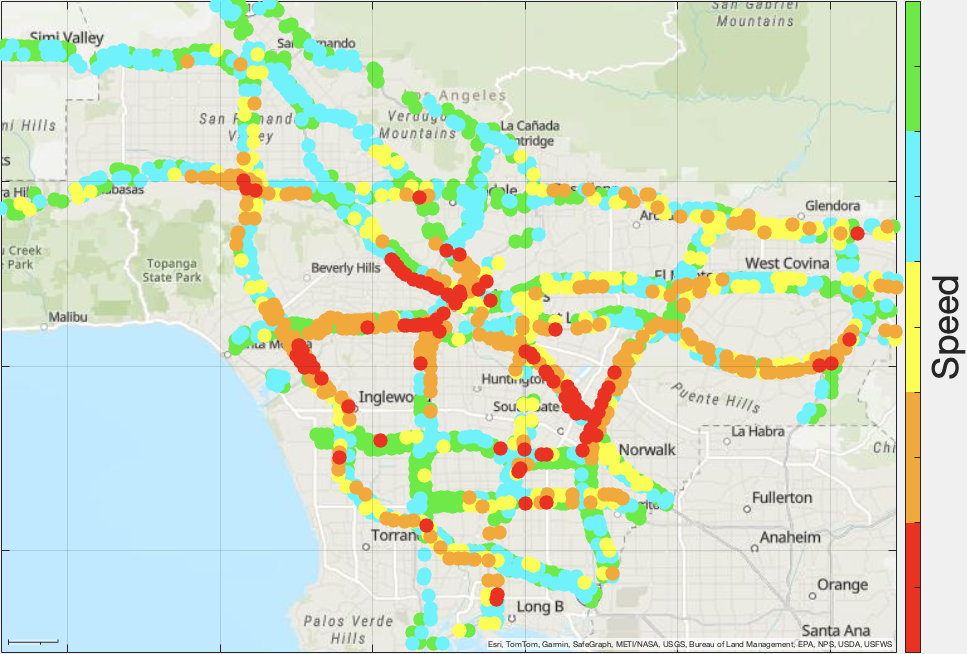}
        \label{fig:road_atom_spatial}
    }
    \subfigure [\#1sa in time]
    {
        \includegraphics[width=0.145\linewidth]{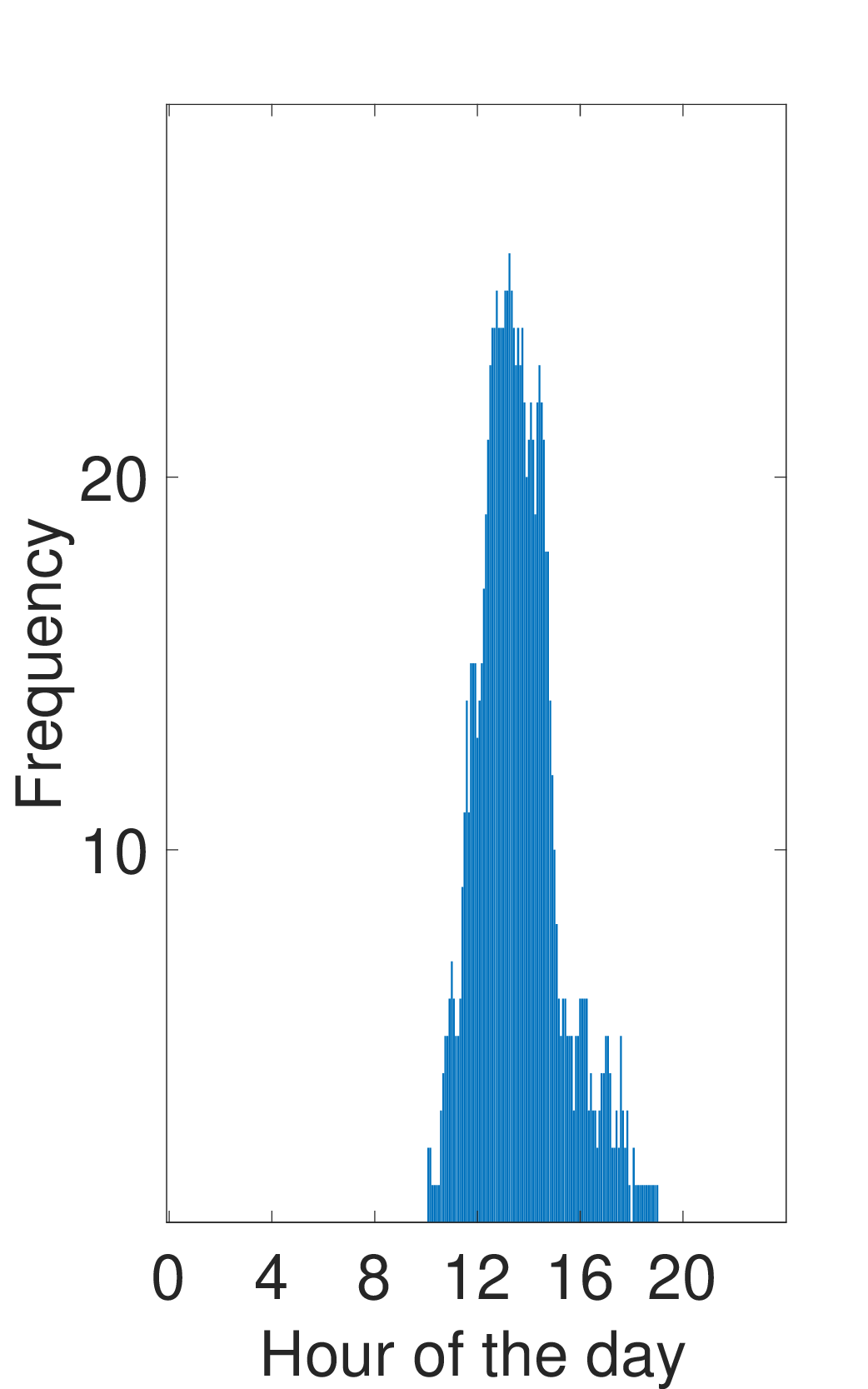}
        \label{fig:road_atom_spatial_freq}
    }\vsa
    \caption{\footnotesize 
     Most used temporal \subref{fig:road_atom_tempora} and spatial \subref{fig:road_atom_spatial} atoms learned on Road traffic dataset and their usage distribution in space \subref{fig:road_atom_tempora_freq} and time \subref{fig:road_atom_spatial_freq} respectively. \vsa
    }
    \label{fig:road_case}
\end{figure*}






\noindent{\bf Case study: Atoms learned from Road traffic data.} Our expectation is that the learned dictionaries by \ourmeth capture patterns that are representative of typical sample behavior. To qualitatively investigate this hypothesis, we visualize the learned atoms by \ourmeth that are most used for sparse coding. We learn the dictionaries $L$ (atoms are spatial patterns) and $R$ (atoms are temporal daily patterns) from the 30 daily samples of the \emph{Road dataset}. In addition to the atoms, \ourmeth has also estimated the encoding matrices $Y_s \in \R^{P \times k}, W_s \in \R^{k \times Q}$. For a given sample, if a row in $Y_s$ has at least one non-zero coefficient, we count the corresponding atom as being used for this sample and find the top atom used by the most number of samples. We similarly find the most frequently used right atom based on the columns in $W_s$ for each sample. We also investigate where and when the top temporal and spatial atoms are used respectively. Assuming that the top graph atom is $l$, we calculate its alignment $a(l)$ with each data timestep in $X_s \in \R^{N \times M}$ as follows: $a(l) = l^T X_s$. We retain the top 24 best aligned timesteps (spanning 2 hours) for each sample, and sum the total occurrence of these timesteps in all samples. This cumulative score is proportional to the frequency with which the top atom is used at different times of the day. We similarly quantify the locations whose daily time series best align with the top right (temporal) atom $r$. We then retain the top $20$ locations by alignment from each sample and quantify their frequency in space.


We visualize the top atoms based on the above frequency definitions in Fig.~\ref{fig:road_case}. The top temporal atom is plotted in Fig.~\ref{fig:road_atom_tempora} where the vertical axis is a proxy for (or the magnitude of) the speed as a function of time of the day (horizontal axis). This atom captures an expected daily commute pattern. High (average) speeds (or low congestion) occur at night (between 8pm and 6am) while the lowest speed coincide with morning (7am-8am) and afternoon (5pm-6pm) rush hours. The atom also has relatively low value for mid-day hours. We also visualize the typical locations where this atom is used in Fig.~\ref{fig:road_atom_tempora_freq} in which circles designate the locations of road sensors and the circle color designates how often the top temporal atom is used at a given location (more purple colors designates locations that employ the atom more frequently). Among the top locations are intersection of highways 10 and 57 and several in downtown LA (dark blues). 

We similarly visualize the top spatial atom in Fig.~\ref{fig:road_atom_spatial} where the atom values are color-coded. Redder colors correspond to low atom values (i.e., lower speeds) and higher values are color-coded by less red colors like blue and green. Heavy traffic ares areas such as downtown LA and segments of Highway 405 are as expected redder unlike areas far from downtown. We also plot the typical times of the day when the top spatial atom is being used to encode samples in Fig.~\ref{fig:road_atom_spatial_freq}. The top spatial atom is predominantly used to encode daily traffic with a peak in the afternoon hours. 

\vsa
\section{Proof of Theorem~\ref{thm:dictionary-learning-generalization-bound}}
\label{sec:sample-complexity-proof}

In this section, we prove Theorem~\ref{thm:dictionary-learning-generalization-bound}, which gives a generalization bound for two-way dictionary learning.  Omitted proofs of auxiliary lemmas and propositions are in Section~\ref{sec:proofs-omitted-upper-bound}.  

Our sample complexity analysis generalizes the one in~\cite{JMLR:v12:vainsencher11a} (see also~\cite{TOITSampleComplexity}), which is for the case of a single dictionary, with real-valued vector signals.  Our generalization proceeds by viewing our signal matrices as vectors in an appropriate space and dictionary pairs as linear operators on that space.


To derive a sample complexity bound, we will derive what is known as a \emph{uniform convergence bound}.  This is a high-probability bound on 
the \emph{generalization gap} function, defined as follows:
$ 
    \Psi(\X)
    := \sup_{h \in \Hyp} |R(h) - \hat{R}(h, \X)|.
$ 
The bound must hold regardless of the data-generating distribution $\Dist$.  Such a bound constitutes a \emph{generalization bound} for empirical risk minimization, which yields a sample complexity bound.

We first collect standard notions required for our proof.

For a metric space $M$, we denote its $\gamma$-covering number by $\CovNum(M, \gamma)$.
%
%
For the purpose of deriving a generalization bound, we must bound the covering numbers of the following \emph{loss class}, which is
a class of functions $\F := \{f_h:\Omega \to \R ~|~ h \in \Hyp\}$ induced by composing hypotheses in $\Hyp$ with the loss function.
\begin{definition}[Loss class associated with a hypothesis class]
    \label{def:loss-class}
    For a fixed loss function $\ell:\Hyp\times \Omega \to \R$ and a hypothesis class $\Hyp$, the 
    \emph{loss class} $\F$ induced by $\Hyp$ is given by
    $ 
        \F := \{ f_h(x) := \ell(h, x) ~|~ h \in \Hyp\}.
    $ 
\end{definition}
The relevant metric on $\F$ is the one induced by the $L_{\infty}$ norm: for $f \in \F$,
$ 
    \|f\|_{\infty}
    := \sup_{x \in \Omega} |f(x)|.
$ 
We note that every element of the loss class satisfies
$
    f_h(x) \in [0, B],
$ 
where $B := C^2$, the constraint on $\|x\|_F^2$ imposed by Theorem~\ref{thm:dictionary-learning-generalization-bound}.
This is because we can upper bound the minimization in the loss function by
setting $Y = 0, W=0$ in their respective spaces.

We next give a standard generalization bound~\cite{JMLR:v12:vainsencher11a} based on the $L_{\infty}$ covering numbers of a function class $\F$.  
\begin{lemma}[Generalization bound based on covering numbers of the loss class~\cite{JMLR:v12:vainsencher11a}]
    \label{lemma:covering-to-generalization}
    Let $\F$ denote a class of functions satisfying, for all $f$, $f(x) \in [0, B]$.  Then for all $x > 0$,
    we have that with probability at least $1- e^{-x}$ over $m$ samples $X_1, ..., X_m$ sampled iid from an arbitrary
    distribution $\Dist$, we have, for all $f \in \F$,
    \begin{align}
        &|\E_{X\sim \Dist}[f(X)] - \frac{1}{m}\sum_{j=1}^m f(X_j)| \\
        &\leq 2 \epsilon + B \cdot \left(  \sqrt{ \frac{\log \CovNum(\F, L_{\infty}, \epsilon)}{2m}} + \sqrt{\frac{x}{2m}}    \right).
    \end{align}
\end{lemma}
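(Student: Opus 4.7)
The plan is to use the standard three-step chaining argument: Hoeffding's inequality to control deviations at a single point in $\F$, a union bound over a finite $\epsilon$-cover of $\F$, and a Lipschitz-style approximation argument to extend to arbitrary $f \in \F$. Since functions in $\F$ take values in $[0, B]$ almost surely, we have a ready-made concentration inequality for each individual $f$.

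First, I would fix $\epsilon > 0$ and let $\F_\epsilon$ be a minimum-size $L_\infty$ $\epsilon$-cover of $\F$, so $|\F_\epsilon| = \CovNum(\F, L_\infty, \epsilon)$. For any fixed $g \in \F_\epsilon$, the values $g(X_1), \ldots, g(X_m)$ are iid and bounded in $[0,B]$, so Hoeffding gives
$$ P\!\left( \left| \E_{X \sim \Dist}[g(X)] - \tfrac{1}{m} \sum_{j=1}^m g(X_j) \right| > t \right) \leq 2 e^{-2 m t^2 / B^2}. $$

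Second, I would union-bound this tail inequality over the finite collection $\F_\epsilon$. Setting $t = B\sqrt{(x + \log \CovNum(\F, L_\infty, \epsilon))/(2m)}$ and using $\sqrt{a+b} \leq \sqrt{a} + \sqrt{b}$ (absorbing the harmless factor of $2$ from the two-sided tail into $x$, or equivalently adjusting the statement) yields that, with probability at least $1 - e^{-x}$, simultaneously for \emph{every} $g \in \F_\epsilon$,
$$ \left| \E[g(X)] - \tfrac{1}{m} \sum_{j=1}^m g(X_j) \right| \leq B\!\left( \sqrt{\tfrac{\log \CovNum(\F, L_\infty, \epsilon)}{2m}} + \sqrt{\tfrac{x}{2m}} \right). $$

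Third, I would lift the bound from the cover to all of $\F$. Given an arbitrary $f \in \F$, choose $g \in \F_\epsilon$ with $\|f - g\|_\infty \leq \epsilon$. Then the $L_\infty$ bound transfers directly to both the population and empirical means, $|\E f(X) - \E g(X)| \leq \epsilon$ and $|\tfrac{1}{m}\sum_j f(X_j) - \tfrac{1}{m}\sum_j g(X_j)| \leq \epsilon$, so the triangle inequality contributes an extra additive $2\epsilon$. Combining with step two gives the advertised bound uniformly in $f \in \F$.

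The main conceptual obstacle is essentially routine bookkeeping: one must verify that the Hoeffding constants, the $\log$-to-$\sqrt{}$ conversion, and the two-sided versus one-sided tail choice all yield exactly the displayed form with $1 - e^{-x}$ as the stated confidence (as opposed to $1 - 2e^{-x}$ or similar). Conceptually, nothing in the argument is subtle; the only real input beyond Hoeffding is the $L_\infty$-covering assumption, which provides a discretization fine enough that $2\epsilon$ absorbs the approximation error from replacing $f$ by its nearest cover element.
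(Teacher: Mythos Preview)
Your proposal is correct and follows the standard covering-number-plus-Hoeffding argument. Note, however, that the paper does not actually prove this lemma: it is quoted verbatim as a known result from \cite{JMLR:v12:vainsencher11a} and used as a black box in the proof of Theorem~\ref{thm:dictionary-learning-generalization-bound}. So there is no ``paper's own proof'' to compare against; your three-step outline (Hoeffding at a point, union bound over the $\epsilon$-cover, $2\epsilon$ approximation via the $L_\infty$ metric) is precisely the textbook derivation, and the only residual work is the constant-chasing you already flagged regarding the two-sided tail and the exact confidence level $1-e^{-x}$.
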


The $L_{\infty}$ covering number of $\F$ can be upper bounded using the covering numbers of $\Hyp$ along with a bound on the Lipschitz
constant of $f_h$ as a function of $h$.  In order to make this precise, we must specify a norm on $\Hyp$.  Recall the normalization 
conditions on the outer products of columns of $L$ with rows of $R^T$.  These translate to an operator norm condition on the mappings induced by pairs $(L, R)$.  We have the following
standard definition.
\begin{definition}[Operator norms of a linear mapping]
    \label{def:operator-norms}
    Let $V, W$ be normed spaces with norms $\|\cdot \|_V, \|\cdot\|_W$, and let $F:V\to W$ be a linear operator.
    The operator norm $\|F\|_{op,V,W}$ is defined to be
    $ 
        \sup_{x \in V} \frac{ \|Fx\|_W }{ \|x\|_W }.
    $ 
\end{definition}
It is well known that the $L_{1}\to L_{2}$ operator norm of a matrix is equal to the maximum $L_2$ norm of any of its columns. 
We thus will use the following norm for $\Hyp$: for any $(L, R) \in \Hyp$,
\begin{align}
    \label{expr:hyp-norm}
    \|(L, R)\|_{op,1,2}
    := 
    \max_{Z} \frac{\|LZR^T\|_F}{\|Z\|_{L_1}}.
\end{align}
The advantage of this norm is that it turns $\Hyp$ into a unit ball in a finite-dimensional Banach space, for which covering number bounds are known.  Specifically, we have the following lemma. 
\begin{lemma}[Covering numbers of Banach space balls (Proposition 5 of~\cite{CuckerSmale2002})]
    \label{lemma:basic-covering-number}
    Let $\Banach$ denote a Banach space with norm $\|\cdot\|$, with dimension $\dim(\Banach)$.
    Then the $\epsilon$-covering number of a ball $B(0, R)$ with radius $R$ is upper bounded as follows:
    \begin{align}
        \log \CovNum(B(0, R), \epsilon)
        \leq \dim(\Banach) \log\left( \frac{4R}{\epsilon}\right).
    \end{align}

    Since the dimension of $\Hyp$ is given by $NP + MQ$, this results in the bound
    $ 
        \log \CovNum(\Hyp, \epsilon)
        \leq (NP + MQ) \log\left( \frac{4}{\epsilon}\right).
    $ 
\end{lemma}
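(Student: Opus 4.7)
The plan is to prove the lemma in two steps: first I will reproduce the classical volume/packing argument behind the generic Banach-space bound (the part attributed to Cucker--Smale), and then I will verify that the dictionary hypothesis class $\Hyp$ fits into that template as a unit ball in an $(NP+MQ)$-dimensional normed space under the operator norm defined in Eq.~\ref{expr:hyp-norm}.

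For the first step, I would take a maximal $\epsilon$-separated family $\{p_1,\dots,p_N\}\subseteq B(0,R)$; maximality forces it to also be an $\epsilon$-net, so $\CovNum(B(0,R),\epsilon)\le N$. The open balls $B(p_i,\epsilon/2)$ are pairwise disjoint and sit inside $B(0,R+\epsilon/2)$, and in a $d$-dimensional Banach space the Lebesgue volume scales as the $d$-th power of the radius, so $N(\epsilon/2)^d\le (R+\epsilon/2)^d$. This gives $N\le (1+2R/\epsilon)^d\le (4R/\epsilon)^d$ whenever $\epsilon\le 2R$ (the case $\epsilon>2R$ is trivial since one point covers $B(0,R)$), and taking logarithms yields the first displayed inequality.

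For the second step, I will view pairs $(L,R)\in\R^{N\times P}\times\R^{M\times Q}$ as an $(NP+MQ)$-dimensional parameter space equipped with the operator seminorm from Eq.~\ref{expr:hyp-norm}. The crucial identity is the standard fact that the $L_1\!\to\!L_2$ operator norm of the linear map $Z\mapsto LZR^T$ equals the largest $L_2$ column norm of its Kronecker matrix representation $R\otimes L$, whose columns are exactly the outer products $L_iR_j^T$ (with Frobenius norm $\|L_i\|_2\|R_j\|_2$). Thus $\|(L,R)\|_{op,1,2}=\max_{i,j}\|L_i\cdot(R^T)_j\|_2$, so the defining normalization condition of $\Hyp$ is precisely $\|(L,R)\|_{op,1,2}\le 1$ and $\Hyp$ sits inside the unit ball. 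Applying Step~1 with radius $R=1$ then yields the claimed $(NP+MQ)\log(4/\epsilon)$ bound.

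The hard part, as I see it, is not any deep estimate but the bookkeeping needed because Eq.~\ref{expr:hyp-norm} is only a seminorm---e.g., $(L,0)$ and $(0,R)$ are nonzero parameter vectors of zero operator norm, so $\Hyp$ is not literally the unit ball of an $(NP+MQ)$-dimensional Banach space. I would address this by quotienting out the null space of the seminorm (which can only reduce the ambient dimension and hence strengthens the bound), or equivalently by rescaling inside each scale-equivalence class $(L,R)\sim(\alpha L,\alpha^{-1}R)$ to obtain a genuine Frobenius bound on $L$ and $R$ separately. Either route preserves the dimension count and keeps the Cucker--Smale volume estimate applicable, so the lemma follows.
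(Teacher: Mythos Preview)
The paper does not supply its own proof: the first display is quoted as Proposition~5 of Cucker--Smale, and for the second the paper simply asserts that the operator norm in Eq.~(\ref{expr:hyp-norm}) ``turns $\Hyp$ into a unit ball in a finite-dimensional Banach space'' of dimension $NP+MQ$ and reads off the bound. Your Step~1 is exactly the standard packing/volume argument behind the cited proposition and is correct.

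Your Step~2 is more scrupulous than the paper's one-line application, and you are right that $\Hyp$ is not literally the unit ball of an $(NP+MQ)$-dimensional normed space. But the obstruction is worse than ``only a seminorm'': the underlying map $(L,R)\mapsto T_{L,R}$, $T_{L,R}Z:=LZR^T$, is \emph{bilinear}, so the functional in Eq.~(\ref{expr:hyp-norm}) is $2$-homogeneous under $(L,R)\mapsto(tL,tR)$ and fails the triangle inequality, and its zero set $\{(L,0)\}\cup\{(0,R)\}$ is a union of two subspaces rather than one. There is thus no linear kernel to quotient by, and the image of $(L,R)\mapsto T_{L,R}$ in operator space is not a linear subspace either, so your first proposed fix (quotienting) does not go through as stated.

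Your second fix---normalising along the orbit $(\alpha L,\alpha^{-1}R)$---is the one that works, and it is not equivalent to the first. After rescaling one may assume $\max_i\|L_i\|_2\le 1$ and $\max_j\|R_j\|_2\le 1$, so $(L,R)$ lies in a product of two genuine $L_1\!\to\!L_2$ unit balls of dimensions $NP$ and $MQ$. Covering each factor to radius $\epsilon/2$ via your Step~1 and using the telescoping identity $LZR^T-L'ZR'^T=LZ(R-R')^T+(L-L')ZR'^T$ shows that the product net is an $\epsilon$-net for $\Hyp$ in the operator metric, giving $\log\CovNum(\Hyp,\epsilon)\le(NP+MQ)\log(8/\epsilon)$, i.e.\ the stated bound up to the harmless constant $8$ in place of $4$.
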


We next turn to translating the covering number bounds for $\Hyp$ to bounds for $\F$.  The following well known lemma is the vehicle for this translation.
\begin{lemma}[Lipschitz bound for covering numbers]
    \label{lemma:lipschitz-covering}
    Let $M_1, M_2$ be two metric spaces.  Suppose that $f:M_1 \to M_2$
    is a $\lambda$-Lipschitz function.  Then 
    $ 
        \CovNum(f(M_1), \epsilon)
        \leq \CovNum(M_1, \epsilon/\lambda).
    $ 
\end{lemma}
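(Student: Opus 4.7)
The plan is to push an optimal covering of $M_1$ forward through $f$ and verify that it is a valid covering of $f(M_1)$ of the stated radius. Concretely, I would begin by fixing a minimum-cardinality $\epsilon/\lambda$-covering $S \subseteq M_1$; by Definition~\ref{def:covering-numbers} such a covering exists and satisfies $|S| = \CovNum(M_1, \epsilon/\lambda)$. The candidate covering of $f(M_1)$ will be the image set $f(S) \subseteq M_2$, so the bound $|f(S)| \leq |S|$ is immediate regardless of whether $f$ is injective.

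The key step is to verify that $f(S)$ is an $\epsilon$-covering of $f(M_1)$ in the metric $d_2$ of $M_2$. To do this, I would fix an arbitrary $y \in f(M_1)$ and write $y = f(x)$ for some $x \in M_1$. By the covering property of $S$, there exists $z \in S$ with $d_1(x, z) \leq \epsilon/\lambda$. Applying the $\lambda$-Lipschitz hypothesis on $f$ then gives
\begin{equation*}
  d_2(f(x), f(z)) \;\leq\; \lambda \cdot d_1(x, z) \;\leq\; \lambda \cdot \frac{\epsilon}{\lambda} \;=\; \epsilon,
\end{equation*}
so $f(z) \in f(S)$ witnesses the $\epsilon$-covering condition for $y$. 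Since $y$ was arbitrary, $f(S)$ is indeed an $\epsilon$-covering of $f(M_1)$.

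Combining the two observations yields
\begin{equation*}
  \CovNum(f(M_1), \epsilon) \;\leq\; |f(S)| \;\leq\; |S| \;=\; \CovNum(M_1, \epsilon/\lambda),
\end{equation*}
which is exactly the claim. There is essentially no substantive obstacle here; the only point worth flagging is that $f(S)$ need not itself be a \emph{minimum} $\epsilon$-covering of $f(M_1)$ (it may admit redundancy when $f$ identifies distinct points of $S$, or when its image is smaller than expected), but since the lemma only asserts an upper bound on $\CovNum(f(M_1), \epsilon)$, this potential slack in $|f(S)|$ is harmless and in fact only strengthens the conclusion.
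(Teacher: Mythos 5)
Your proof is correct and is exactly the standard pushforward argument: the paper itself states this lemma as ``well known'' and offers no proof, and the argument you give (push a minimal $\epsilon/\lambda$-covering of $M_1$ through $f$ and use the Lipschitz bound to verify the $\epsilon$-covering property of the image) is precisely the canonical one it implicitly relies on. Your closing remark that $f(S)$ need not be a minimum covering, and that this slack only helps the upper bound, is also accurate.
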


In the next lemma, we bound the Lipschitz constant for the map $G:\Hyp \to \F$ defined by
$G(h) := f_h$.  We recall that the norm on $\F$ is the $L_{\infty}$ norm.
\begin{lemma}[Lipschitz constant bound for the hypothesis to loss function mapping]
    \label{lemma:lipschitz-bound}
    The mapping $G$ is such that, for all $h, h' \in \Hyp$,
    $ 
        \|G(h) - G(h')\|_{\infty}
        \leq 2\cdot \kappa \|h - h'\|_{op,1,2}.
    $ 
\end{lemma}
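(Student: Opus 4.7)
The plan is to prove a uniform-in-$X$ Lipschitz bound on $h \mapsto \ell(h, X)$. The main structural obstacle is that $f_h(X) = \ell(h, X)$ is itself defined as a minimum over the feasible coding pairs $(Y, W)$, so the optimizers at $h$ and $h'$ differ and need not depend continuously on the hypothesis. I will address this with an \emph{envelope trick}: for any fixed $X \in \Omega$ and $h, h' \in \Hyp$, assume without loss of generality that $f_h(X) \ge f_{h'}(X)$, pick any minimizer $(Y', W')$ certifying $f_{h'}(X)$, and exploit that the sparsity constraint $\max\{\|Y\|_1, \|W\|_1\} \le \kappa$ is \emph{independent} of the dictionary pair. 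Consequently $(Y', W')$ is a feasible candidate for the $h$-minimization, giving the one-sided bound
\[
f_h(X) - f_{h'}(X) \;\le\; \|X - LY'W'R^T\|_F^2 - \|X - L'Y'W'R'^T\|_F^2.
\]

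Next, I would convert the right-hand side into an operator-norm expression by factoring the difference of squared Frobenius norms. Setting $a := \|X - LY'W'R^T\|_F$ and $b := \|X - L'Y'W'R'^T\|_F$, the factorization $a^2 - b^2 = (a-b)(a+b)$ together with the reverse triangle inequality gives $|a - b| \le \|LY'W'R^T - L'Y'W'R'^T\|_F$. Viewing $(L, R)$ as the linear operator $Z \mapsto LZR^T$, this is precisely $\|(h - h')(Y'W')\|_F$, which by the definition of the operator norm in~(\ref{expr:hyp-norm}) is bounded by $\|h - h'\|_{op,1,2} \cdot \|Y'W'\|_{L_1}$.

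It remains to absorb the two auxiliary factors $\|Y'W'\|_{L_1}$ and $a+b$ into a constant depending only on $\kappa$ and $C$. For the former, summing the entrywise bound $|(Y'W')_{ij}| \le \sum_\ell |Y'_{i\ell}|\,|W'_{\ell j}|$ over $i, j$ yields $\|Y'W'\|_{L_1} \le \|Y'\|_1\,\|W'\|_1 \le \kappa^2$. For the latter, the atom normalization $\|L_i \cdot (R^T)_j\|_2 \le 1$ on $\Hyp$ implies $\|h\|_{op,1,2} \le 1$, since
\[
\|LZR^T\|_F \;=\; \Big\|\sum_{ij} Z_{ij}\, L_i R_j^T\Big\|_F \;\le\; \sum_{ij} |Z_{ij}|\,\|L_i R_j^T\|_F \;\le\; \|Z\|_{L_1}.
\]
Consequently $\|LY'W'R^T\|_F, \|L'Y'W'R'^T\|_F \le \kappa^2$, and together with $\|X\|_F \le C$ this gives $a + b \le 2(C + \kappa^2)$.

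Combining the estimates (and swapping the roles of $h, h'$ to dispense with the WLOG assumption) produces $|f_h(X) - f_{h'}(X)| \le 2(C + \kappa^2)\kappa^2\,\|h - h'\|_{op,1,2}$ uniformly in $X$, i.e.\ a Lipschitz constant of the order stated in the lemma, with the $(C + \kappa^2)$ factor matching the range bound $B = (C + \kappa^2)^2$ used in the subsequent covering-number computation for Theorem~\ref{thm:dictionary-learning-generalization-bound}. The truly delicate step is the envelope argument in the first paragraph; everything downstream is a mechanical application of the triangle inequality, Hölder-type bounds on the $L_1$ norm of a product, and the operator-norm inequality built into the definition of $\|\cdot\|_{op,1,2}$.
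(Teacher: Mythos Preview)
Your argument follows the same skeleton as the paper's proof: the envelope trick (exploiting that the feasible set $\max\{\|Y\|_1,\|W\|_1\}\le\kappa$ is dictionary-independent, so the optimizer for $h'$ is admissible for $h$), then the reverse triangle inequality, then the operator-norm inequality combined with $\|YW\|_1 \le \|Y\|_1\|W\|_1 \le \kappa^2$. The one substantive difference is in the passage from $|a^2 - b^2|$ to $|a-b|$: the paper writes $|a^2-b^2|\le 2|a-b|$ directly, whereas you correctly factor $a^2-b^2=(a-b)(a+b)$ and bound $a+b\le 2(C+\kappa^2)$ using the atom normalization and $\|X\|_F\le C$. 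This yields the Lipschitz constant $2(C+\kappa^2)\kappa^2$ instead of the paper's $2\kappa^2$ (the $2\kappa$ in the lemma statement is itself inconsistent with their proof and with Proposition~\ref{prop:loss-class-covering-numbers}). Your handling of this step is the rigorous one; propagating the extra $(C+\kappa^2)$ factor through Proposition~\ref{prop:loss-class-covering-numbers} and Lemma~\ref{lemma:covering-to-generalization} would only add a harmless $\log(C+\kappa^2)$ term inside the square root in Theorem~\ref{thm:dictionary-learning-generalization-bound}.
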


Applying Lemma~\ref{lemma:basic-covering-number}, Lemma~\ref{lemma:lipschitz-covering}, and Lemma~\ref{lemma:lipschitz-bound} we get the following bound on the covering numbers of $\F$.
\begin{proposition}[Upper bound on the covering numbers of $\F$]
    \label{prop:loss-class-covering-numbers}
    We have the following bound on the covering numbers of $\F$:
    \begin{align}
        \log \CovNum(\F, \gamma)
        \leq  \log \CovNum(\Hyp, \gamma/(2\kappa^2))
        = (NP + MQ) \log\left( \frac{8\kappa^2}{\gamma}\right).
    \end{align}
\end{proposition}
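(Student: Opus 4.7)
The proposition is a direct composition of the three preceding ingredients: the Lipschitz bound of Lemma~\ref{lemma:lipschitz-bound}, the covering-number-under-Lipschitz-map transfer of Lemma~\ref{lemma:lipschitz-covering}, and the Banach-ball covering bound of Lemma~\ref{lemma:basic-covering-number}. Nothing new needs to be constructed; the task is mainly to substitute parameters carefully and to justify that $\mathcal{H}$ is an instance of the normed ball to which Lemma~\ref{lemma:basic-covering-number} applies.

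First, I would recall that $\mathcal{F} = G(\mathcal{H})$ under the mapping $G(h) := f_h = \ell(h, \cdot)$, where we have already established in Lemma~\ref{lemma:lipschitz-bound} that $G$ is Lipschitz from $(\mathcal{H}, \|\cdot\|_{op,1,2})$ into $(\mathcal{F}, \|\cdot\|_\infty)$ with Lipschitz constant $2\kappa^2$ (the statement of that lemma writes $2\kappa$ but its proof in fact yields $2\kappa^2$ via the bound $\|YW\|_1 \leq \|Y\|_1\|W\|_1 \leq \kappa^2$; we use the $2\kappa^2$ version here). Applying Lemma~\ref{lemma:lipschitz-covering} with $\lambda = 2\kappa^2$ immediately yields
\begin{equation*}
  \mathcal{N}(\mathcal{F}, \gamma) \;\leq\; \mathcal{N}\!\left(\mathcal{H}, \frac{\gamma}{2\kappa^2}\right).
\end{equation*}

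Second, I would observe that $\mathcal{H}$ sits inside the finite-dimensional Banach space of matrix pairs $(L, R) \in \mathbb{R}^{N\times P}\times \mathbb{R}^{M\times Q}$ equipped with the operator norm defined in~\eqref{expr:hyp-norm}, whose dimension is $NP + MQ$. The normalization assumption $\|L_i (R^T)_j\|_2 \leq 1$ for every column/row pair forces $\|(L,R)\|_{op,1,2} \leq 1$, so $\mathcal{H}$ is contained in the unit ball of this Banach space (this containment is all we need, since covering numbers are monotone in the set being covered). Lemma~\ref{lemma:basic-covering-number} with $R = 1$ and $\epsilon = \gamma/(2\kappa^2)$ then gives
\begin{equation*}
  \log \mathcal{N}\!\left(\mathcal{H}, \frac{\gamma}{2\kappa^2}\right) \;\leq\; (NP + MQ)\log\!\left(\frac{8\kappa^2}{\gamma}\right),
\end{equation*}
and chaining the two inequalities produces the desired bound.

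The only real subtlety, and the one that deserves a line of justification rather than a citation, is the step that converts the column-wise normalization on $(L, R)$ into containment in a unit ball of the operator norm $\|\cdot\|_{op,1,2}$. I expect this to be the main (and essentially the only) obstacle: I would verify it by writing $LZR^T = \sum_{i,j} Z_{ij}\, L_i (R^T)_j$ and applying the triangle inequality in Frobenius norm, so that $\|LZR^T\|_F \leq \sum_{i,j}|Z_{ij}|\,\|L_i (R^T)_j\|_2 \leq \|Z\|_1$, which gives $\|(L,R)\|_{op,1,2} \leq 1$. Everything else is routine substitution.
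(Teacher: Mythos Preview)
Your proposal is correct and follows essentially the same route as the paper, which simply states that the bound follows by applying Lemma~\ref{lemma:basic-covering-number}, Lemma~\ref{lemma:lipschitz-covering}, and Lemma~\ref{lemma:lipschitz-bound}. You have in fact supplied more detail than the paper does, including the explicit verification that the column-wise normalization forces $\|(L,R)\|_{op,1,2}\le 1$ and the observation that the Lipschitz constant should be $2\kappa^2$ rather than the $2\kappa$ appearing in the statement of Lemma~\ref{lemma:lipschitz-bound}.
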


Applying the bound in Proposition~\ref{prop:loss-class-covering-numbers} to Lemma~\ref{lemma:covering-to-generalization},
we finally get the generalization bound given in Theorem~\ref{thm:dictionary-learning-generalization-bound}.
\section{Proof of Theorem~\ref{thm:generalization-bound-entropy}}
\label{sec:renyi-entropy-bound-proof}

The proof of Theorem~\ref{thm:generalization-bound-entropy} is the same as that of Theorem~\ref{thm:dictionary-learning-generalization-bound} except for two parts: upper bounding $\|YW\|_1$ after (\ref{expr:using-submultiplicativity}) and computing a probability-$1$ upper bound $B$ on the loss function.

In particular, steps after (\ref{expr:using-submultiplicativity}) require modification because
we require a tighter bound on the Lipschitz constant of the loss function, and we derive this from a tighter bound than submultiplicativity of the $L_1$ norm for products of matrices affords us.

To that end, we have
\begin{align}
    \|YW\|_1
    &= \sum_{i=1}^P\sum_{j=1}^Q \left| \sum_{\ell=1}^k Y_{i,\ell}W_{\ell,j} \right| \\
    &\leq \sum_{i=1}^P\sum_{j=1}^Q \sum_{\ell=1}^k \left|  Y_{i,\ell}W_{\ell,j} \right| \\
    &= \sum_{\ell=1}^k \sum_{i=1}^P |Y_{i,\ell}| \sum_{j=1}^Q |W_{\ell,j}| \\
    &= \innerproduct{c(Y)}{r(W)} \\
    &\leq \|c(Y)\|_2 \|r(W)\|_2,
\end{align}
where the first inequality is the triangle inequality, and the second equality is by interchanging summations.  The second inequality is by Cauchy-Schwarz.

Next, by multiplying and dividing by $\|c(Y)\|_1 \|r(Y)\|_1 = \|Y\|_1 \|W\|_1$, we get
\begin{align}
    \|c(Y)\|_2 \|r(W)\|_2
    &= \|Y\|_1 \|W\|_1  \|\hat{c}(Y)\|_2 \|\hat{r}(W)\|_2 \\
    &\leq \kappa^2 \|\hat{c}(Y)\|_2 \|\hat{r}(W)\|_2 \\ 
    &= \kappa^2 \exp(\log( \|\hat{c}(Y)\|_2 \|\hat{r}(W)\|_2)) \\
    &= \kappa^2 \exp( -\frac{ H_2(\hat{c}(Y)) + H_2(\hat{r}(W))}{2}).
\end{align}
Finally, applying the constraint $\rho \leq H_2(\hat{c}(Y)) + H_2(\hat{r}(W))$
completes the modification of the upper bound on the Lipschitz constant of the loss function.

Turning to calculation of the value $B$, we cannot upper bound the loss function
by plugging in $Y = 0, W=0$, because the stochastic vectors $\hat{c}(Y)$ and $\hat{r}(W)$ are not well-defined.  We instead plug in 
$Y = \alpha \OneMatrix_{P\times k}, W = \alpha \OneMatrix_{k \times Q}$, where $\OneMatrix_{n\times m}$ denotes the matrix of all $1$s in $\R^{n\times m}$ and
$0 < \alpha < \frac{\kappa}{\max\{P, Q\}k}$ so that the sparsity constraint is satisfied.  Note that the R\'enyi entropies of $\hat{c}(Y)$ and $\hat{r}(W)$ are maximized, so that the R\'enyi entropy constraint is also satisfied.
Then $YW = \alpha^2 k^2 \OneMatrix_{P\times Q}$.
This yields an upper bound on the loss function of
\begin{align}
    \|X - LYWR^{T}\|_{F}^2
    &\leq (\|X\|_F + \|LYWR^T\|_F)^2 \\
    &\leq (C + \alpha^2 k^2 PQ)^2 \\
    &= C^2 + \alpha^2\cdot(2C k^2 PQ + \alpha^2 k^4 (PQ)^2).
\end{align}
The first inequality is the triangle inequality.  The second is using the upper bound on the Frobenius norm of $X$ and the operator norm of the dictionary.  Then,
choosing $\alpha$ sufficiently small implies that we can set
$B = C^2 + \hat{\alpha}$ for any $\hat{\alpha} > 0$, so that we can set
$B = C^2$.

This completes the proof of Theorem~\ref{thm:generalization-bound-entropy}.

\section{Proof of Theorem~\ref{thm:sample-complexity-lower-bound}} 
\label{sec:sample-complexity-lower-bound-proof}

    Here we prove the sample complexity lower bound, Theorem~\ref{thm:sample-complexity-lower-bound}.  Omitted proofs of lemmas and propositions are in Section~\ref{sec:proofs-omitted-lower-bound}.

    In order to derive a sample complexity lower bound, we must exhibit
    a ``hard'' data-generating distribution.  We will choose one such
    that, given $s$ samples, a successful dictionary learner can be used
    to implement a successful \emph{test} for a multiple hypothesis testing problem in which hypotheses are a collection of dictionary atoms, and observations are their noise-perturbed versions.  We prove an error probability lower bound for this testing problem using Fano's inequality.

    We define the following set of dictionaries.
    \begin{align}
        \Dicts  := \{ &(L, R) ~|~ \forall i,j, \|L_i\|_F = \|R_j\|_F = 1, \\
        &\mu(L\tensor R) \leq \mu = \Theta(1) \}. 
    \end{align}
    Here, $\mu(M)$ is the \emph{pairwise coherence} of a matrix $M$; that is, it is the maximum absolute value of the dot product between any two columns of $M$.

    We next define the sense in which we approximate dictionaries.  We consider two dictionaries to be equivalent if one can be obtained from the other by permutation of atoms.  This motivates the following dictionary equivalence class metric.
    \begin{definition}[Metric on the space of dictionary equivalence classes]
        \label{def:metric-on-dictionaries}
        Let $D_1, D_2$ be the Kronecker product forms of two dictionaries in $\Dicts$.
        We denote by 
        $\ddict(D_1, D_2)$ the following distance:
        $ 
            \ddict(D_1, D_2)
            := \min_{\Pi} \|D_1 - D_2 \cdot \Pi\|_F,
        $ 
        where the minimization is over all possible column permutation matrices.
    \end{definition}

    To define the hard data-generating distribution, we construct a packing $\Packing$
    of the space of dictionaries $\Dicts$.
    We guarantee the existence of a large packing in the next lemma.
    \begin{lemma}[Existence of a large packing of $\Dicts$]
        \label{lemma:packing-existence}
        There exists a $\ddict$-packing $\Packing$ with distance $\sqrt{PQ}\cdot \gamma$ of $\Dicts$ with cardinality
        \begin{align}
            |\Packing| 
            \geq \left(\frac{C}{\gamma} \right)^{\Omega(NP + MQ)}/(PQ)!.
        \end{align}
    \end{lemma}
    \paragraph{The hard data-generating distribution}
    Throughout, we fix the rank constraint $k = 1$ and the sparsity
    constraint $\kappa = 1$.

    Consider the following data-generating distribution for a sample
    matrix $X \in \R^{N\times M}$: a ground-truth dictionary $(L_*, R_*)$
    is fixed from $\Packing$.
    We generate a coefficient matrix $Z_X \in \R^{P\times Q}$
    by selecting an element $(i, j) \in [P]\times [Q]$ uniformly at random
    and placing a $1$ there.  The remaining entries are set to $0$.
    We then denote by $\noiseterm \in \R^{N\times M}$ a matrix
    whose entries are iid $\Normal(0, \sigma^2)$ random variables, for some 
    $\sigma > 0$.  We will choose $\sigma = \frac{1}{\log(PQ)}$ with foresight.
    Then $X = LZ_XR^{T} + \noiseterm$.  We also denote by $X^* := X - \noiseterm$ the noiseless version of $X$.
    In other words, $X$ is a random dictionary atom perturbed entrywise by
    mean-$0$ Gaussian noise, and $X^*$ is its noiseless version.

    \paragraph{A successful dictionary learner can be used as a successful multiple hypothesis tester from noisy samples}

    We first show that if we have $s$ samples $X := (X_1, ..., X_s)$ from 
    the hard distribution, for some unknown $D_*$, then an $(\epsilon, \delta)$-PAC dictionary learner $\Alg$ can be used to 
    estimate $D_*$ to arbitrary precision in the sense of Definition~\ref{def:metric-on-dictionaries}.  This is the content of Proposition~\ref{prop:learning-to-estimation} below.  We show this by
    showing that the \emph{excess risk} $\ExcessRisk(h) := R(h) - \inf_{h_* \in \Hyp} R(h_*)$ of a hypothesis $h$ is lower bounded by
    the squared distance between the hypothesis and the ground truth dictionary.


    \begin{proposition}[A PAC dictionary learner estimates atoms from noisy samples]
        \label{prop:learning-to-estimation}
        Suppose that $\Alg$ is an $(\epsilon, \delta)$-PAC learning rule 
        with $s$ samples $X$, with rank and sparsity constraints $k, \kappa$, respectively.  Then with probability at least $1-\delta$ with respect to the hard distribution with ground-truth dictionary $D_*$, $\Alg(X) \in \Packing$ satisfies
        $ 
            \ddict^2(\Alg(X), D_*) \leq \Theta(PQ \epsilon).
        $ 
    \end{proposition}

    \paragraph{A sample complexity lower bound for the multiple hypothesis testing problem}

    Our next goal is to prove an error probability lower bound for the multiple hypothesis testing problem associated with $\Packing$.
    \begin{proposition}[Testing error probability lower bound]
        \label{prop:estimation-lower-bound}
        Let $F$ be a tester for dictionaries in $\Packing$ from a
        dataset $X := \{X_s\}_{s=1}^S$ sampled from $\Dist$ with ground truth dictionary $D_*$.  We have that with probability
        at least
        \begin{align}
            1 - \frac{S}{\Omega(\sigma^2 (NP + MQ) \cdot \log(C/\gamma))},
        \end{align}
        for some positive constant $C$, $F(X) \neq D_*$.
        In particular, to drive the probability of error to $0$,
        we require $S = \Omega(\sigma^2\cdot (NP + MQ))$.
    \end{proposition}

    \paragraph{Putting everything together for the learning lower bound}
    We can now combine Propositions~\ref{prop:learning-to-estimation}
    and Proposition~\ref{prop:estimation-lower-bound} to prove Theorem~\ref{thm:sample-complexity-lower-bound}, the lower bound for dictionary learning in the rank-constrained setting.

    Let $\Alg$ be a learning rule as in the statement of the theorem.  We will
    use it to construct a tester for the multiple hypothesis testing problem associated with the packing $\Packing$ with minimum distance $\sqrt{PQ}\gamma$.  In particular, given a dataset $X$ generated by a ground-truth dictionary $D_* \in \Packing$,
    $\Alg$ outputs a dictionary $D \in \Packing$.  The test then outputs
    a dictionary $\hat{D} \in \Packing$ in the set $\argmin_{d \in \Packing} \ddict(d, D)$.  From Proposition~\ref{prop:learning-to-estimation}, we have
    that with probability at least $1-\delta$, 
    $ 
        \ddict(D, D_*) 
        \leq C\cdot \sqrt{PQ\epsilon}.
    $ 
    This implies by the triangle inequality that
    \begin{align}
        \ddict(\hat{D}, D_*)
        &\leq \ddict(\hat{D}, D) + \ddict(D, D_*) \\
        &\leq \sqrt{PQ}\gamma/2 + C \sqrt{PQ\epsilon}.
    \end{align}
    Provided that $\sqrt{\epsilon} < \gamma/2$, we have that
    $ 
        \ddict(\hat{D}, D_*) < \sqrt{PQ}\gamma,
    $ 
    which implies by the minimum distance property of the packing that
    $\hat{D} = D_*$, so that the test constructed from $\Alg$ is correct with
    probability at least $1-\delta$.

    From Proposition~\ref{prop:estimation-lower-bound}, we then have that
    $ 
        \delta 
        \geq 1 - \frac{S\log(PQ)}{\Omega((NP+MQ)\log(1/\epsilon)}.
    $ 
    This completes the proof of Theorem~\ref{thm:sample-complexity-lower-bound}.

\section{Conclusion}
In this paper we introduced \ourmeth, a 2D dictionary learning model with low rank sparse coding for 2D data samples. We established  theoretical sample complexity bounds (both upper and lower) for our proposed problem that elucidate the effect on sample complexity of rank and sparsity constraints on coding matrices in the dictionary learning problem. We also proposed an optimization approach, called \ourmeth, which learns both the low-rank coding matrices and the dictionaries by alternating optimization.  We showed that the objective function values of the alternating optimization converge and that a solution to the $L_1$-regularized problem solves the original dictionary learning problem with $L_1$ constraints.  We demonstrated the quality of \ourmeth on five real-world datasets in comparison to analytical dictionary baselines as well as dictionary learning methods. \ourmeth outperformed all state-of-the-art baselines in data reconstruction and missing value imputation tasks. Compared to the best dictionary learning baselines, \ourmeth obtained up to $10\times$ model size reduction for the same representation quality in real-world datasets. It also outperformed baselines for missing value imputation. The atoms learned by our model represented intuitive road traffic patterns.


%


 
%

\bibliographystyle{IEEEbib}
\bibliography{reference}

\newpage
\section*{Appendix}

In this Appendix we provide supplemental content including omitted proofs, optimization steps for \ourmeth and baselines, as well as \ourmeth in the presence of missing values, additional evaluation results, and hyperparameter tuning protocol to aid reproducibility.

\section{Omitted proofs of lemmas used in the proof of Theorem~\ref{thm:dictionary-learning-generalization-bound}}
\label{sec:proofs-omitted-upper-bound}

\subsection{Proof of Lemma~\ref{lemma:lipschitz-bound}}
\label{sec:proof-lemma-lipschitz-bound}
    We expand out the definition of $G$ and the $L_{\infty}$ norm:
    \begin{align}
        &\|G(L, R) - G(L', R')\|_{\infty} \\
        &= \max_{X} | \min_{Y,W} \|X - LYWR^{T}\|_F^2 - \min_{Y',W'} \|X - L'Y'W'R'^T\|_F^2 |
        \label{expr:starting-lipschitz}
    \end{align}
    Now, note that $\min_{Y,W} \|X - L'Y'W'R'^T\|_F^2 \leq \|X - L'YWR'^T\|_{F}^2$, where $(Y, W)$ minimizes the
    $(L, R)$ term in the above expression.  Thus, if we can upper bound
    \begin{align}
        |\|X - LYWR^{T}\|_F^2 - \|X - L'YWR'^T\|_F^2|
    \end{align}
    for arbitrary $Y, W$, this gives us an upper bound on (\ref{expr:starting-lipschitz}).  We next have
    \begin{align}
        &|\|X - LYWR^{T}\|_F^2 - \|X - L'YWR'^T\|_F^2| \\
        &\leq 2 |\|X - LYWR^{T}\|_F - \|X - L'YWR'^T\|_F| \\
        &\leq 2\| LYWR^T - L'YWR'^T\|_{F} \\
        &\leq 2 \|(L, R) - (L', R')\|_{op,1,2} \cdot  \|YW\|_{1} \label{expr:using-submultiplicativity} \\
        &\leq 2 \cdot \|(L, R) - (L', R')\|_{op,1,2}\cdot  \|Y\|_{1} \|W\|_1 \\
        &\leq 2 \cdot \kappa^2  \|(L, R) - (L', R')\|_{op,1,2}.
    \end{align}
    This implies the claimed Lipschitz bound.

\section{Omitted proofs of lemmas used in the proof of  Theorem~\ref{thm:sample-complexity-lower-bound}}
\label{sec:proofs-omitted-lower-bound}

\subsection{Proof of Lemma~\ref{lemma:packing-existence}}
\label{sec:proof-lemma-packing-existence}
        \textbf{Step 1: Reduce the problem of packing of the quotient space to packing of the ground space.}
        We first claim the following relationship between the packing number of $\Dicts$ and that of the space $\DictsOrd$ of \emph{ordered} dictionaries (of which $\Dicts$ is a quotient space with respect to equivalence under permutation of atoms):
        \begin{align}
            \PackingNum(\Dicts, \sqrt{PQ}\gamma) 
            \geq \PackingNum(\DictsOrd, \sqrt{PQ}\gamma)/(PQ)!. 
            \label{expr:reduction-to-dictsord}
        \end{align}

        The proof is as follows:
            in any packing of $\DictsOrd$, there are at most $(PQ)!$ members of each dictionary equivalence class.    So taking a maximal packing of $\DictsOrd$ and deleting redundant dictionaries leaves a packing of $\Dicts$ with cardinality at least the packing number of $\DictsOrd$ divided by $(PQ)!$.

        Thus, it is sufficient for us to lower bound the packing number
        of $\DictsOrd$.  In what follows, it will also be useful to consider
        the set $\DictsOrdUnconstrained$ of ordered dictionaries without the
        mutual coherence constraint:
        \begin{align}
            \DictsOrdUnconstrained 
            := \{ L \tensor R ~|~ \|L\|_F = \|R\|_F = 1\}.
            \label{expr:dicts-ord-unconstrained}
        \end{align}
        We also denote by $\DictsOrdUnconstrainedL$ and $\DictsOrdUnconstrainedR$ the spaces of left and right ordered dictionaries unconstrained by mutual coherence, respectively.

        \textbf{Step 2: Lower bound the packing number of the Kronecker product space by (approximately) the product of the packing numbers of the left and right dictionary spaces.}
            We next show that if there exist Frobenius norm packings $\Packing_L, \Packing_R$ of the left and right dictionary spaces with distance $\sqrt{P}\gamma$ and $\sqrt{Q}\gamma$,
            respectively, then there exists a packing $\Packing$ of $\DictsOrd$
            with cardinality $|\Packing_L|\cdot |\Packing_R|$ and minimum distance $\sqrt{PQ}\gamma \cdot (1 - o(1))$ as $\gamma\to 0$.

            The proof is as follows:
                we consider the following packing:
                \begin{align}
                    \Packing := \{ L \tensor R ~|~ L \in \Packing_L, R \in \Packing_R \}.
                \end{align}

                The cardinality of $\Packing$ is
                $ 
                    |\Packing| = |\Packing_L| \cdot |\Packing_R|,
                $ 
                by the unique factorization property of the tensor product.

                We next need to verify the distance property of $\Packing$.
                To this end, let $L_1 \tensor R_1$ and $L_2 \tensor R_2 \in \Packing$.
                Then we have, by direct calculation,
                \begin{align}
                    &\| L_1 \tensor R_1 - L_2 \tensor R_2\|_F^2 \\
                    &= \|L_1\|_F^2 \cdot \|R_1\|_F^2 - 2 \Tr(L_1^T L_2) \Tr(R_1^T R_2).
                    \label{expr:packing-dist-1}
                \end{align}
                Now, for each $b \in \{1, 2\}$, 
                $ 
                    \|L_b\|_F^2 = P, \|R_b\|_F^2 = Q,
                $ 
                by the column normalization condition on the dictionaries.
                We thus have
                \begin{align}
                    (\ref{expr:packing-dist-1})
                    = 2PQ - 2 \Tr(L_1^T L_2) \Tr(R_1^T R_2).
                    \label{expr:packing-dist-2}
                \end{align}

                We next show that the two trace expressions in (\ref{expr:packing-dist-1}) have simpler expressions in terms of the distance between the marginal dictionaries.
                In particular, we claim that
                \begin{align}
                    \Tr(L_1^T L_2) = P - \frac{1}{2} \|L_1 - L_2\|_F^2, \\
                    \Tr(R_1^T R_2) = Q - \frac{1}{2} \|R_1 - R_2\|_F^2.
                    \label{expr:trace-to-distance}
                \end{align}
                We show this only for $\Tr(L_1^T L_2)$, as the proof for $R_1^TR_2$
                is entirely analogous.
                We have
                \begin{align}
                    &\Tr(L_1^T L_2) \\
                    &= \Tr((L_1 - L_2+ L_2)^T (L_2 - L_1 + L_1)) \\
                    &= \Tr((L_1 - L_2)^T (L_2 - L_1))  \\
                    &~~~+ \Tr(L_1^TL_1) + \Tr(L_2^TL_2) - \Tr(L_1^TL_2) \\
                    &= - \|L_1 - L_2\|_F^2 + 2P - \Tr(L_1^TL_2).
                \end{align}
                Rearranging and dividing both sides by $2$ completes the derivation of (\ref{expr:trace-to-distance}). 

                We thus have
                \begin{align}
                    &(\ref{expr:packing-dist-2})
                    = 2PQ - 2 \cdot (P - \frac{1}{2} \|L_1 - L_2\|_F^2)(Q - \frac{1}{2} \|R_1 - R_2\|_F^2) \\
                    &= P \|R_1 - R_2\|_F^2 + Q \|L_1 - L_2\|_F^2 \\
                        &~~~
                        - \frac{1}{2}\|L_1 - L_2\|_F^2 \|R_1 - R_2\|_F^2.
                    \label{expr:packing-dist-3}    
                \end{align}

                Next, we appeal to the distance property of the left and right packings.
                $ 
                    (\ref{expr:packing-dist-3})
                    \geq PQ \gamma^2 - \frac{1}{2}PQ\gamma^4
                    = PQ \gamma^2 (1 - \frac{1}{2} \gamma^2).
                $ 
                This completes the proof of the claim reducing the problem of constructing $\Packing$ from left and right dictionary packings.  In particular, we have shown, for any $\hat{\gamma}$ small enough and $> 0$,
                \begin{align}
                    &\PackingNum(\DictsOrd, \sqrt{PQ}\hat{\gamma}\sqrt{1 - \hat{\gamma}/2})  \\
                    &\geq \PackingNum(\DictsOrdL, \sqrt{P}\hat{\gamma})\PackingNum(\DictsOrdR, \sqrt{Q}\hat{\gamma}).
                    \label{expr:reduction-to-marginal-packings}
                \end{align}

        \textbf{Step 3: Lower bound the packing number of the left dictionary space.}
            Next, we claim that the $\sqrt{P}\gamma$-packing number of the space $\DictsOrdL$ of left
            dictionaries 
            is at least 
            $\left( \frac{C}{\gamma} \right)^{\Omega(NP)}$, and the $\sqrt{Q}\gamma$-packing number of the space of right dictionaries is at least $\left(\frac{C}{\gamma} \right)^{\Omega(MQ)}$.  
            The proof for both cases is the same, so we focus only on the
            one for the left dictionaries.

            Letting $B(r)$ denote the Frobenius ball with radius $r$ in $\R^{N\times P}$, we have that
            \begin{align}
                &\PackingNum(\DictsOrdL, \sqrt{P}\gamma) 
                \geq \frac{\Vol(\DictsOrdL)}{\Vol(B(\sqrt{P}\gamma)} \\
                &= \frac{\Vol(\DictsOrdL)}{\Vol(\DictsOrdUnconstrainedL)}\cdot
                    \frac{\Vol(\DictsOrdUnconstrainedL)}{B(\sqrt{P}\gamma)}.
            \end{align}
            By a standard concentration of measure argument, the ratio
            $\frac{\Vol(\DictsOrdL)}{\Vol(\DictsOrdUnconstrainedL)}$
            is $1-o(1)$, and the second ratio is $(C/\gamma)^{(N-1)P}$
            for some positive constant $C$, by known asymptotics for
            volumes of balls.  The same derivation implies analogous asymptotics for the packing numbers of $\DictsOrdR$.  This implies
            \begin{align}
                \PackingNum(\DictsOrdL, \sqrt{P}\gamma)
                &= (1 - o(1))\cdot (C/\gamma)^{(N-1)P} \\
                \PackingNum(\DictsOrdR, \sqrt{Q}\gamma)
                &= (1 - o(1))\cdot (C/\gamma)^{(N-1)Q}.
                \label{expr:marginal-ordered-packing-nums}
            \end{align}

        \textbf{Step 5: Put everything together.}
            We put everything together as follows:
            from (\ref{expr:reduction-to-marginal-packings}) and 
            (\ref{expr:marginal-ordered-packing-nums}), we have
            \begin{align}
                \PackingNum(\DictsOrd, \sqrt{PQ}\hat{\gamma}\sqrt{1-\hat{\gamma}^2/2})
                \geq (C_1/\hat{\gamma})^{(N-1)P + (M-1)Q},
            \end{align}
                 for some constant $C_1 > 0$.
                Setting $\gamma := \hat{\gamma}\sqrt{1-\hat{\gamma}/2}$
                and noting that $\gamma \geq const \cdot \hat{\gamma}$ provided that $\hat{\gamma}$ is small enough,  this implies
                that
                \begin{align}
                    \PackingNum(\DictsOrd, \sqrt{PQ}\gamma) 
                    \geq (C/\gamma)^{(N-1)P + (M-1)Q},
                \end{align}
                for some $C>0$.

                Finally, applying (\ref{expr:reduction-to-dictsord})
                completes the proof of Lemma~\ref{lemma:packing-existence}.
\subsection{Proof of Proposition~\ref{prop:learning-to-estimation}}
\label{sec:proof-prop-learning-to-estimation}

    To prove Proposition~\ref{prop:learning-to-estimation}, we need some 
    lemmas. 
    The first lemma gives a characterization of the excess risk of a 
    hypothesis in terms of the expected squared Frobenius distance between a random 
    ground truth dictionary atom and the set of matrices in the range of
    the hypothesis.  To state it, we define the \emph{range} of a dictionary.
    \begin{align}
        \range((L, R)) := \{ X ~|~ X = LZR^{T}, Z \in \CoeffMat(k, \kappa) \}.
    \end{align}
    This is simply the set of matrices $X$ that can be represented exactly by
    the dictionary $(L, R)$.
    We also define $\FullRange((L, R))$ to be the range of the given dictionary, except that we drop the rank constraint.  We note that
    $\range((L, R)) \subset \FullRange((L, R))$,  and $\FullRange((L, R))$ is a polytope whose vertices correspond to the
    atoms of the dictionary.
    
    \begin{lemma}[Excess risk lower bound by polytope projection]
        \label{lemma:excess-risk-formula}
        Let $D \in \Packing$.  Let $X^*$ be a uniformly random atom chosen from
        the ground truth dictionary $D_*$.
        Then
        \begin{align}
            &\ExcessRisk(D)  \\
            &\geq \E_X[\min_{Y \in \range(D)} \|X^* - Y\|_F^2]
            - O( \sigma \sqrt{\log(PQ)}) \\
            &= \E_X[\min_{Y \in \range(D)} \|X^* - Y\|_F^2] - O\left(\frac{1}{\sqrt{\log(PQ)}}\right).
        \end{align}
    \end{lemma}
    \begin{proof}
        Below, we use the following inner product: $\innerproduct{A}{B} = \Tr(A^TB)$.
    
        \noindent{\textbf{Step 1: Compute an explicit expression for $R(D)$: }}
        To compute an explicit expression for $\Delta R(D)$, we first compute one for $R(D)$:
        \begin{align}
            &R(D) 
            := \E_X[ \min_{Y \in \range(D)} \|X - Y\|_F^2 ] \\
            &= \E_X[ \min_{Y \in \range(D)} \|X^*+\noiseterm - Y\|_F^2] \\
            &= \E_X[ \min_{Y \in \range(D)} \innerproduct{X^*-Y}{X^*-Y}  \\
            &~~~~~~~~~~~~~~+ 2\innerproduct{\noiseterm}{X^*-Y} + \|\noiseterm\|_F^2 ].
            \label{expr:RD-explicit}
        \end{align}
        Here, the first equality is by definition of the risk, the second is by definition of $X^*$, and the third is by expression of the squared Frobenius norm as an inner product and then use of linearity of the inner product in its arguments.

        To write down $\ExcessRisk(D)$, we will denote by $Y^* \in \range(D_*)$ the analogous quantity in $R(D_*)$ to $Y$ in the above expression.  We then have
        \begin{align}
            \ExcessRisk(D) 
            &\geq \E_{X^*}[ \min_{Y \in \range(D)} \|X^* - Y\|_F^2    \\
                &+ 2 \innerproduct{\noiseterm}{X^* - Y} - 2\innerproduct{\noiseterm}{X^* - Y^*}] \\
            &\geq  \E_{X^*}[ \min_{Y \in \range(D)} \|X^* - Y\|_F^2]  \\
                & - 2 \sup_{\substack{Y \in \range(D) \\ Y^* \in range(D_*)}}| \E_{\noiseterm}[\innerproduct{\noiseterm}{Y - Y^*}]|.
                \label{expr:reduction-to-overfitting}
        \end{align}
        The first inequality is using the inequality $\min_{z}[ A(z) + B(z)] \geq \min_{z} A(z) + \min_{z} B(z)$.  The second inequality is by the inequality $A(z) + B(z) \geq A(z) - \sup_{z} B(z)$.
        We now define 
        \begin{align}
            \Overfitting(D)
            := 2 \sup_{\substack{Y \in \range(D) \\ Y^* \in range(D_*)}}| \E_{\noiseterm}[\innerproduct{\noiseterm}{Y - Y^*}]|.
        \end{align}

        \noindent{\textbf{Step 2: Upper bounding $\Overfitting(D)$ via Gaussian widths: }}
        We claim that $\Overfitting(D) = O(\sigma \sqrt{\log(PQ)})$
        for large enough $P, Q$.
        To show this, we proceed as follows:
        \begin{align*}
            \Overfitting(D)
            &= 2 \sup_{\substack{Y \in \range(D) \\ Y^* \in range(D_*)}}| \E_{\noiseterm}[\innerproduct{\noiseterm}{Y + Y^*}]| \\
            &\leq 2\sup_{\substack{Y \in \range(D) \\ Y^* \in range(D_*)}}
                [|\E_{\noiseterm}[\innerproduct{\noiseterm}{Y}|
                 + |\E_{\noiseterm}[\innerproduct{\noiseterm}{Y^*}|] \\
            &\leq 2 ( 
                | \sup_Y \E_{\noiseterm}[\innerproduct{\noiseterm}{Y}]  |
                + | \sup_{Y^*} \E_{\noiseterm}[\innerproduct{\noiseterm}{Y^*}]  |
            ) \\
            &\leq 2 (
                \gwidth(\range(D)) + \gwidth(\range(D_*))
            ).
        \end{align*}
        Here, the first equality is by closure of $\range(D_*)$ under negation.  The first inequality is the triangle inequality.  The second inequality is using the fact that $\sup_x |A(x)| \leq |\sup_{x} A(x)|$.  The last inequality is using the fact that
        $\sup_x \E[A(x)] \leq \E[\sup_x A(x)]$ and the definition of Gaussian width.

        \noindent{\textbf{Step 3: Upper bounding Gaussian width: } }
        To complete the proof, we upper bound the Gaussian width
        of $\range(D)$.  The same upper bound holds for $\gwidth(\range(D_*))$.  By definition of $\range(D)$, $Y$
        takes the form $LZR^T$, for a matrix $Z \in \CoeffMat(k=1, \kappa=1)$.
        This implies that $\rank(Y) = 1$ and $\|Y\|_1 \leq 1$.  Thus,
        we can write $Y$ as an outer product: $Y = \ell^T r$, for
        vectors $\ell, r$ with $L_1$ norm bounded by $1$ (this because the dictionary atoms are themselves operator norm-constrained, by assumption).

        Thus, we have
        \begin{align}
            &\gwidth(\range(D)) 
            = \E_{\noiseterm}[\sup_{Y\in\range(D)}\innerproduct{\noiseterm}{Y}] \\
            &\leq \E_{\noiseterm}[ \sup_{\|\ell\|_1, \|r\|_1 \leq 1} \ell^T\cdot \noiseterm \cdot r] 
            = \E_{\noiseterm}[\|\noiseterm\|_{\infty}] \\
            &= O(\sigma \sqrt{\log(PQ)}).
        \end{align}
        Here, the first equality is the definition of Gaussian width.  The first inequality is by expanding the set over which we take the supremum to all pairs of $L_1$-constrained vectors.  The second equality is by the dual norm characterization of the $L_{\infty}$ norm of a matrix.  The final equality is by a known bound on the
        expected maximum of a set of $PQ$ $\Normal(0, \sigma^2)$-distributed random variables.

        The same derivation holds for $\gwidth(D_*)$.  Thus, we have shown
        that $\Overfitting(D) = O(\sigma \sqrt{\log(PQ)})$.  Using our choice of $\sigma$ and (\ref{expr:reduction-to-overfitting}) completes the proof.

    \end{proof}

    The second lemma gives the promised bound on $\ddict^2$ in terms of the 
    excess risk.
    \begin{lemma}[Excess risk is lower bounded by distance to ground truth]
        \label{lemma:excess-risk}
        We have, for any dictionary $D \in \Packing$ and for the ground truth dictionary $D_*$,
        $ 
            \ddict^2(D, D_*) 
            \leq O(PQ \ExcessRisk(D)).
        $ 
    \end{lemma}
    \begin{proof}

        We start with Lemma~\ref{lemma:excess-risk-formula}. 
        Summing over all dictionary atoms $X^*$ of $D_*$, we get
        \begin{align}
            &PQ \ExcessRisk(D) \\
            &\geq \sum_{X^*} \min_{Y\in\range(D)} \|X^*- Y\|_F^2 
            - O(\sigma PQ\sqrt{\log(PQ)}).
        \end{align}
        Here, we have used the fact that the expectation defining $\ExcessRisk(D)$ is with respect to a uniformly random $X^*$
        among dictionary atoms of $D_*$.

        We focus now on lower bounding $\min_{Y\in\range(D)} \|X^*- Y\|_F^2$.  We expand this using properties of inner products:
        \begin{align}
            &\|X^* - Y\|_F^2 
            = \innerproduct{X^*}{X^*} - 2\innerproduct{X^*}{Y} + \innerproduct{Y}{Y} \\
            &= 1 - 2\innerproduct{X^*}{Y} + \innerproduct{Y}{Y}.
            \label{expr:intermediate-inner-product}
        \end{align}
        The second equality is by the dictionary atom normalization. 

        To proceed, since $Y \in \range(D)$, we can write $Y$ as
        $Y = Dz$, for some $z \in \CoeffMat(k, \kappa)$.   

        To lower bound (\ref{expr:intermediate-inner-product}), 
        we first upper bound $\innerproduct{X^*}{Y}$:
        \begin{align}
            \innerproduct{X^*}{Y}
            = \innerproduct{z}{D^T x^*} 
            &\leq \|z\|_1 \cdot \|D^T \cdot X^*\|_{\infty}   \\
            &\leq \|D^T \cdot X^*\|_{\infty}.
        \end{align}
        where the first equality is using the definition of $Y$ and
        properties of the inner product.  The first inequality is H\"older's
        inequality.  The second inequality is by the $L_1$ constraint on $z$.

        Next, let $d$ be the atom of $D$ that minimizes $\|X^* - d\|_F^2$.
        Using properties of the inner product, we can show that
        \begin{align}
            \|D^T \cdot X^*\|_{\infty}
            \leq 1 - \frac{1}{2}\|X^* - d\|_F^2.
        \end{align}

        Next, we lower bound $\|Y\|_F^2$.  To do this, we write it as
        $\|Y\|_F^2 = z^TD^TDz$.  Since the atoms of $D$ are approximately orthonormal, we define $M := D^TD - I$ and write $D^TD = I + M$, where $I$ is the identity matrix.  Intuitively, $M$ should be approximately $0$ in all entries.  Then
        \begin{align}
            z^TD^TDz
            = z^T(I + M)z
            = \|z\|_2^2 + z^TMz
            \geq \|z\|_2^2 - |z^TMz|,
        \end{align}
        by simple algebra.  Now, let $\mu$ denote the mutual coherence
        of the dictionary $D$.  Then
        $ 
            |z^TMz|
            \leq \mu \|z\|_1^2 - \mu \|z\|_2^2.
        $ 
        This implies
        \begin{align}
            \|Y\|_F^2
            \geq \|z\|_2^2 + \mu \|z\|_2^2 - \mu \|z\|_1^2
            = (1+\mu) \|z\|_2^2 - \mu \|z\|_1^2.
        \end{align}

        Putting everything together, we get
        \begin{align}
            &\|X^* - Y\|_F^2 \\
            &\geq 1 - 2\left(1 - \frac{1}{2}\|X^* - d\|_F^2 \right)
                + (1+\mu) \|z\|_2^2 - \mu \|z\|_1^2.
        \end{align}
        Minimizing over all $z$ subject to the constraint defining it,
        we finally get the desired result.  We thus have shown that
        \begin{align}
            PQ\ExcessRisk(D)
            \geq \ddict^2(D, D_*) - O(\sigma PQ \sqrt{\log(PQ)}),
        \end{align}
        and using our choice of $\sigma$, we find that the second term
        in the above expression is $o(PQ)$, while $\ddict^2(D, D_*) = \Omega(PQ)$ by the fact that both $D, D_* \in \Packing$.  Thus, we
        can ignore the second term at the expense of introducing a positive constant on the left-hand side of the inequality.  This completes the proof of Lemma~\ref{lemma:excess-risk}.
    \end{proof}

    Lemma~\ref{lemma:excess-risk} has an easy consequence: the ground truth dictionary $D_*$ is the unique global minimizer of the risk.  Moreover, empirical risk minimization recovers it with sufficient data. 

    We finally have enough to prove Proposition~\ref{prop:learning-to-estimation} reducing the multiple hypothesis testing problem to PAC learning.
    \begin{proof}[Proof of Proposition~\ref{prop:learning-to-estimation}]
        Let $D := \Alg(X)$.  Since $\Alg$ is $(\epsilon, \delta)$-PAC,
        we have that with probability at least $1-\delta$, 
        $\ExcessRisk(D) \leq \epsilon$.  Then from Lemma~\ref{lemma:excess-risk}, we get
        \begin{align}
            \ddict^2(D, D_*) 
            \leq O(PQ \ExcessRisk(D)) 
            \leq O(PQ \epsilon).
        \end{align}
        This completes the proof.
    \end{proof}

%

\subsection{Proof of Proposition~\ref{prop:estimation-lower-bound}}
\label{sec:proof-prop-estimation-lower-bound}
    To prove Proposition~\ref{prop:estimation-lower-bound}, we will need
    a standard corollary of Fano's inequality~\cite{Cover2006}.
    \begin{lemma}[Fano's inequality \cite{Cover2006}]
        \label{lemma:fano}
        Consider a finite set $\Omega$, a family of probability distribution $\{\Dist_\omega\}_{\omega\in \Omega}$, a random variable $J\sim \Uniform(\Omega)$,
        and, conditioned on $J$, $Z \sim \Dist_J$.  Let
        $F(Z) \in \Omega$ be an arbitrary estimator of the index $J$.  Then
        \begin{align}
            p_{err} \geq 1 - \frac{\sup_{j,k} D_{KL}(\Dist_j ~\|~ \Dist_k) + \log(2)}{\log(|\Omega|)},
        \end{align}
        where
        $ 
            p_{err} := \Pr[ F(Z) \neq J ]
        $ 
        is the probability of error for the estimator $F$.
    \end{lemma}

    We will use $\Omega := \Packing$.  
    To apply Fano's inequality, we need  
    an upper bound on the KL divergence between the observation distributions induced by pairs of dictionaries in $\Packing$.
    This is the content of the next lemma.
    \begin{lemma}[KL divergence upper bound in terms of dictionary distance]
        \label{lemma:kl-divergence-upper-bound}
        For any pair of dictionaries $D_1, D_2 \in \Packing$, 
        let $X, \hat{X}$ denote two observation matrices generated according
        to $\Dist$ with the respective dictionaries.
        Then
        $ 
            D_{KL}(X ~\|~ \hat{X})
            \leq \frac{2}{\sigma^2}.
        $ 
    \end{lemma}
    \begin{proof}
        By definition of $\Dist$, we have that $X$ and $\hat{X}$ are both Gaussian matrices with mean equal to a dictionary atom from their respective dictionaries and diagonal covariance matrix with all nonzero entries equal to $\sigma^2$ (so we write the covariance matrix as $\sigma^2 I$).  Call the expected values $\mu$ and $\hat{\mu}$.  This implies, via the standard formula for the KL divergence between Gaussian vectors,
        \begin{align}
            &D_{KL}(X ~\|~ \hat{X}) \\
            &= \frac{1}{2}[
                \log\frac{\det(\sigma^2 I)}{\det(\sigma^2I)}
                - NM  \\
                &~~~+ \Tr(\frac{1}{\sigma^2}\cdot \sigma^2 I)
                + (\hat\mu - \mu )^T \sigma^{-2} I (\hat{\mu} - \mu)
            ] \\
            &= \frac{1}{2}[
                -NM + NM + \frac{1}{\sigma^2}\|\hat{\mu} - \mu\|_F^2 
            ] \\
            &\leq \frac{1}{2\sigma^2} \cdot 4 = \frac{2}{\sigma^2}.
        \end{align}
        Here, the only nontrivial step is the inequality, which is justified by the triangle inequality applied to the Frobenius norm,
        followed by appealing to the fact that all of the dictionary atoms are normalized in Frobenius norm, completing the proof.
    \end{proof}

    We can now present the proof of Proposition~\ref{prop:estimation-lower-bound}, the lower bound for the multiple hypothesis testing problem.
    \begin{proof}[Proof of Proposition~\ref{prop:estimation-lower-bound}]
        We use Lemma~\ref{lemma:fano} with $\Omega$ given by the set of
        hard distributions indexed by elements of $\Packing$.  We use Lemma~\ref{lemma:kl-divergence-upper-bound} to upper bound the numerator and Lemma~\ref{lemma:packing-existence} to lower bound the denominator.  This completes the proof.
    \end{proof}

\section{Proof of Theorem~\ref{THM:REGULARIZATION-SOLVES-CONSTRAINED}}  
\label{sec:equivalence}
Here we describe in detail the relationship between the reduction of the constrained optimization problem for sparse coding to the regularized one.

We start with a lemma.
\begin{lemma}[A solution to an $L_1$ regularized problem is a solution for the constrained one]
    \label{lemma:regularized-solves-constrained}

    Suppose that $f:\R^{d_1} \times \R^{d_2} \times \cdots \times \R^{d_k} \to \R$. 
    For $\gamma \in [0, \infty)^k$, define
    \begin{align}
        F(\gamma) := \argmin_{(x_1, ..., x_k) \in \R^{d_1} \times \cdots \times \R^{d_k}} f(x) + \sum_{j=1}^k \gamma_j \|x_j\|_1
    \end{align}
    and
    \begin{align}
        G(\kappa) := \argmin_{x_1^k  \in \R^{d_1} \times \cdots \times \R^{d_k} ~|~ \forall j\|x_j\|_1 \leq \kappa} f(x_1^k).
        \label{expr:constrained-general}
    \end{align}

    Then for every $\gamma \geq \bf{0}$, there exists a $\kappa$ such that
    $F(\gamma) \subseteq G(\kappa)$.

\end{lemma}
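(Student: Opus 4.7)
The plan is to construct $\kappa$ directly from any $x^* \in F(\gamma)$ and then verify $x^* \in G(\kappa)$ via a contradiction argument that rides on the optimality of $x^*$ for the regularized problem. The natural candidate is $\kappa := \max_{j=1,\dots,k} \|x_j^*\|_1$, which makes $x^*$ trivially feasible for $G(\kappa)$ since $\|x_j^*\|_1 \leq \kappa$ for every $j$. The only remaining task is to rule out any feasible $y$ that beats $x^*$ on $f$.

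For the optimality step, I plan to suppose for contradiction that some $y = (y_1, \dots, y_k)$ with $\|y_j\|_1 \leq \|x_j^*\|_1$ for every $j$ satisfies $f(y) < f(x^*)$. Combining this hypothesis with the defining inequality of $x^* \in F(\gamma)$ yields
\begin{equation*}
    f(x^*) + \sum_{j=1}^k \gamma_j \|x_j^*\|_1 \leq f(y) + \sum_{j=1}^k \gamma_j \|y_j\|_1 \leq f(y) + \sum_{j=1}^k \gamma_j \|x_j^*\|_1,
\end{equation*}
where the rightmost inequality uses $\gamma_j \geq 0$ together with the componentwise norm bound. Cancelling the identical penalty on both sides gives $f(x^*) \leq f(y)$, contradicting $f(y) < f(x^*)$. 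Hence no such $y$ exists and $x^* \in G(\kappa)$, at least under the componentwise reading of the feasible set.

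The main obstacle I expect to face is the gap between scalar-$\kappa$ feasibility and componentwise bounds. The cancellation above requires $\|y_j\|_1 \leq \|x_j^*\|_1$ for each $j$, whereas $G(\kappa)$ as written only enforces $\|y_j\|_1 \leq \max_j \|x_j^*\|_1$; a feasible $y$ could in principle inflate one block past $\|x_j^*\|_1$ while shrinking another and still beat $x^*$ on $f$ without contradicting $x^*$'s regularized optimality. I would close this gap either by reading the constraint componentwise, i.e., using $\kappa_j := \|x_j^*\|_1$ per coordinate block (natural for the paper's $k = 2$ application with $Y$ and $W$ constrained separately, which is exactly what Theorem~\ref{thm:regularization-solves-constrained} requires), or by sharpening $\kappa$ so that all active blocks share a common $L_1$ mass at the solution. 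Specialising $k = 2$ with $x_1 = Y$ and $x_2 = W$ and sharing the regularizers $(\lambda_1, \lambda_2)$ across samples then recovers the sparse-coding statement of Theorem~\ref{thm:regularization-solves-constrained}.
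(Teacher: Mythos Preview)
Your proposal is correct and follows essentially the same route as the paper: take $x^{(0)} \in F(\gamma)$, compare it against an arbitrary optimizer $x^{(1)} \in G(\kappa)$, use $\gamma_j \geq 0$ together with the feasibility bound to cancel the penalty terms, and conclude $f(x^{(0)}) \leq f(x^{(1)})$. The gap you flag between scalar and componentwise $\kappa$ is resolved in the paper exactly as you suggest---it sets $\kappa_j := \|x_j^{(0)}\|_1$ per block rather than a single $\max$---so your anticipated fix is the paper's actual choice.
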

\begin{proof}

    Let $x^{(0)} \in F(\gamma)$ for some $\gamma$.  Set $\kappa_j := \|x_j^{(0)}\|_1$ for all $j$.  Let $x^{(1)} \in G(\kappa)$.
    By optimality of $x^{(0)}$ for the regularized problem, we have
    \begin{align}
        f(x^{(0)}) + \sum_{j=1}^k \gamma_j \|x_j^{(0)}\|_1
        \leq f(x^{(1)}) +  \sum_{j=1}^k \gamma_j \|x_j^{(1)}\|_1. 
        \label{expr:intermediate1}
    \end{align}
    By optimality of $x^{(1)}$ for the constrained problem, we have $\|x_j^{(1)}\|_1 \leq \kappa_j = \|x_j^{(0)}\|_1$ for all $j$.
    This implies by (\ref{expr:intermediate1}) that
    \begin{align}
        f(x^{(0)}) + \sum_{j=1}^k \gamma_j \|x^{(0)}_j\|_1
        \leq f(x^{(1)}) + \sum_{j=1}^k \gamma_j \|x^{(0)}_j\|_1,
    \end{align}
    which implies
    $ 
        f(x^{(0)}) \leq f(x^{(1)}).
    $ 
    Since $x^{(0)}$ is feasible for the constrained problem, this implies that $x^{(0)}$ is a solution for it,
    meaning that $x^{(0)} \in G(\kappa)$, as desired.
    
    We have thus shown that for each $\gamma$, there exists $\kappa$ such that
    $F(\gamma) \subseteq G(\kappa)$.  Note that we did not use convexity of $f$ anywhere.
    %
\end{proof}

    Now, it is easy to check the following monotonicity property of $F(\gamma)$:
    if $\gamma_j \leq \hat{\gamma}_j$ for each $j$, and if we define
    $\kappa := \min\{ k ~|~ F(\gamma) \subseteq G(k)\}$
    and $\hat{\kappa} := \min\{ k ~|~ F(\hat{\gamma}) \subseteq G(k)\}$, then
    $ 
        \kappa \geq \hat{\kappa}
    $ 
    and
    $ 
        F(\hat{\gamma}) \subseteq F(\gamma).
    $ 
    Moreover, setting $\gamma = \bf{0}$ eliminates the constraint, meaning that
    $F(\bf{0}) \subseteq G(\infty)$.  
    This allows for a bisection search to select each regularization parameter to enforce desired
    constraints.  It also has the following consequence: for every $\kappa > 0$, there exists
    a $\gamma$ for which $F(\gamma) \subseteq G(\kappa)$. 

We can apply Lemma~\ref{lemma:regularized-solves-constrained} to our problem formulation as follows to show that it can be solved via the regularization approach.
The constrained problem takes the form
\begin{align}
    &\argmin_{L,R} \sum_{j=1}^S \ell((L, R), X_j) \\
    &= \argmin_{L, R} \sum_{j=1}^S \min_{Y_j, W_j ~:~ \|Y_j\|_1, \|W_j\|_1 \leq \kappa} \| X_j - L Y_j W_j R^T \|_{F}^2.
    \label{expr:constrained}
\end{align}
Using Lemma~\ref{lemma:regularized-solves-constrained}, we can solve the inner minimization by solving the regularized version: defining
\begin{align}
    &Y_{*,j}, W_{*,j}  \\
    &:= \argmin_{Y_j, W_j} \|X_j - LYWR^T\|_{F}^2 + \lambda_1 \|Y_j\|_1 + \lambda_2 \|W_j\|_1,
\end{align}
for appropriately chosen $\lambda_1(L, R, X_j)$ and $\lambda_2(L, R, X_j)$,
we get that
\begin{align}
    (\ref{expr:constrained})
    = \argmin_{L, R} \sum_{j=1}^S \| X_j - LY_{*,j}W_{*,j}R^T \|_{F}^2.
\end{align}

This shows that solving the sparse coding problem via the regularized objective function provides an exact solution to the original problem, which completes the proof.

\section{Proof of Theorem~\ref{THM:CONVERGENCE-AODL}}  
\label{sec:convergence-of-alternating-minimization}

%
Here we prove that the sequence of objective function values in our alternating minimization scheme converges.  This follows from the fact that any monotone decreasing sequence of real numbers that is bounded below converges to its infimum.  To apply this to our case, we simply establish that the sequence of iterates of the dictionaries $(L^{(k)}, R^{(k)})$ and the sparse coding matrices $(Y^{(k)}, W^{(k)}$ yields a monotone decreasing sequence of objective function values.  The objective function is bounded below by $0$, and so this establishes convergence.

For convenience, we introduce notation for the objective function:
we denote the objective of the inner minimization by $J_{coding,s}(X, L, R, Y, W)$, and then the entire objective function becomes
\begin{align}
    J(X, L, R, Y, W)
    := \sum_{s=1}^S J_{coding,s}(X, L, R, Y, W).
\end{align}
For fixed $L, R, X$, replacing $Y_s, W_s$ by \\$\argmin_{Y_s, W_s}J_{coding,s}(X, L, R, Y, W)$ decreases the value of the objective, and the same holds when we replace $L, R$ by
$\argmin_{L, R} J(X, L, R, Y, W)$.
Thus, the value of the objective function is non-increasing in each iteration of the algorithm.  This implies convergence of the sequence of objective function values, which completes the proof.


\section{\ourmeth algorithm and derivation details}
\label{appendix:alg}
\noindent{\bf Initialization of $L$ and $R$.}
We experiment with two different approaches to initialize the dictionaries in Step 3 of Alg. \ref{alg:aodl}: normally random and on tensor decomposition. In the former approach we sample each element interdependently from a normal distribution $\mathcal{N}(0, 1)$, and then normalize atoms to unit length. In the latter, we stack all samples in a 3-way tensor and employ Tucker decomposition, which decomposes a tensor into a product of 3 factor matrices and a core tensor with user-defined inner dimensions. We use the mode-1 factor with selected second dimension $P$ to initialize $L$, and mode-2 factor with selected second dimension $Q$ to initialize $R$. Note that the Tucker initialization of the dictionaries works only when we are aiming to learn a complete or under-complete dictionaries since Tucker requires $P \leq N, Q \leq M$. 
We used random initialization for all experiments (and all baselines that learn dictionaries) with synthetic data; and tensor decomposition initialization for all real-world experiments (and all baselines that learn dictionaries). These decisions were based on empirically faster convergence (fewer iterations) for all competing methods with the corresponding initialization schemes.


\noindent{\bf Stage I: Sparse coding.} Given fixed dictionaries, we estimate the sparse coding coefficients one sample at a time. Since for any $X_s$, the problem is convex in $Y_s$ and $W_s$ individually,
we can employ the Alternating Direction Method of Multipliers (ADMM)~\cite{Boyd:2011:DOS:2185815.2185816} to solve for $Y_s, W_s$ similar to~\cite{TGSD}. We first introduce intermediate variables $U_s = Y_s, V_s = W_s$, and rewrite the objective for sample $s$ as:

\begin{equation}
    \begin{aligned}
         \underset{Y_s, W_s, U_s, V_s} {\mathrm{argmin}} \hspace{0.05cm}  & || X_s - L Y_s W_s R^T ||_F^2 + \lambda_1 ||U_s||_1 + \lambda_2 ||V_s||_1 \\
         s.t. \hspace{0.05cm} &Y_s = U_s, W_s = V_s
    \end{aligned} 
    \label{eq:obj_l1_admm}
\end{equation}

We form the corresponding Lagrangian function:

\begin{equation}
    \begin{aligned}
         &\mathcal{L}(Y_s, W_s, U_s, V_s)  \\
         = & || X_s - L Y_s W_s R^T ||_F^2 + \lambda_1 ||U_s||_1 + \lambda_2 ||V_s||_1  \\
         &+ \frac{\rho_1}{2} || U_s - Y_s + \frac{\Gamma_1}{\rho_1} ||_F^2 + \frac{\rho_2}{2} || V_s - W_s + \frac{\Gamma_2}{\rho_2} ||_F^2.
    \end{aligned} 
    \label{eq:obj_lag}
\end{equation}

We alternate between direct updates of $Y_s, W_s, U_s$ and $V_s$ obtained by setting gradients w.r.t. each variable to zero. 
To update $Y_s$, we have the following optimization problem:
\begin{equation}
    \begin{aligned}
         \underset{Ys} {\mathrm{argmin}} || X_s - L Y_s W_s R^T ||_F^2 + \frac{\rho_1}{2} || U_s - Y_s + \frac{\Gamma_1}{\rho_1} ||_F^2
    \end{aligned} 
    \label{eq:update_Ys}
\end{equation}

Setting the gradient with respect to $Y_s$ to zero, we obtain:
\begin{equation}
    \begin{aligned}
         2 L^T L Y_s B B^T + \rho_1 Y_s = 2 L^T X_s B^T + \rho_1 U_s + \Gamma_1,
    \end{aligned} 
    \label{eq:update_Ys_grad}
\end{equation}
where $B = W_s R^T$.
To simplify we use the following eigendecompositions: $L^T L = Q_1 \Lambda_1 Q_1^T, B B^T = Q_2 \Lambda_2 Q_2^T$, and set $\Pi_1 = 2 L^T X_s B^T + \rho_1 U_s + \Gamma_1$. We can then rewrite the above equation as:
\begin{equation}
    \begin{aligned}
         2 L^T L Y_s B B^T + \rho_1 Y_s =& \Pi_1 \\
         2 Q_1 \Lambda_1 Q_1^T Y_s Q_2 \Lambda_2 Q_2^T + \rho_1 Y_s =& \Pi_1 \\
         2 Q_1^T Q_1 \Lambda_1 Q_1^T Y_s Q_2 \Lambda_2 Q_2^T Q_2 + \rho_1 Q_1^T Y_s Q_2 =& Q_1^T \Pi_1 Q_2 \\
         2 \Lambda_1 Q_1^T Y_s Q_2 \Lambda_2 + \rho_1 Q_1^T Y_s Q_2 =& Q_1^T \Pi_1 Q_2\\
    \end{aligned} 
\end{equation}
Allowing $E_1 = Q_1^T Y_s Q_2$, we obtain $2 \Lambda_1 E_1 \Lambda_2 + \rho_1 E_1 = Q_1^T \Pi_1 Q_2$, and an element-wise solution for $E_1$ as follows:
\begin{equation}
    \begin{aligned}
         {[E_1]}_{i,j} = \frac{{[Q_1^T \Pi_1 Q_2]}_{i,j}}{2 {[\Lambda_1]}_{i,i} {[\Lambda_2]}_{j,j} + \rho_1},
    \end{aligned} 
\end{equation}
and $Y_s$ can then be recovered as $Y_s = Q_1 E_1 Q_2^T$. We follow a similar procedure to derive an analogous update for $W_s$.


The optimization sub-problems for $U_s$ and $V_s$ are:
\begin{equation}
    \begin{aligned}
         \underset{U_s} {\mathrm{argmin}} \hspace{0.05cm}& \lambda_1 ||U_s||_1 + \frac{\rho_1}{2} || U_s - Y_s + \frac{\Gamma_1}{\rho_1} ||_F^2 \\
         \underset{V_s} {\mathrm{argmin}} \hspace{0.05cm}& \lambda_2 ||V_s||_1 + \frac{\rho_2}{2} || V_s - W_s + \frac{\Gamma_2}{\rho_2} ||_F^2,
    \end{aligned} 
\end{equation}
with existing closed-form solutions due to~\cite{Lin2013TheAL}:
\begin{equation}
    \begin{aligned}
         {[U_s]}_{i,j} =& \text{sign}({[H_1]}_{i,j}) \times \max{(|{[H_1]}_{i,j}| - \frac{\lambda_1}{\rho_1}, 0)}\\
         {[V_s]}_{i,j} =& \text{sign}({[H_2]}_{i,j}) \times \max{(|{[H_2]}_{i,j}| - \frac{\lambda_2}{\rho_2}, 0)},
    \end{aligned} 
\end{equation}
where $H_1 = Y_s - \frac{\Gamma_1}{\rho_1}, H_2 = W_s - \frac{\Gamma_2}{\rho_2}$.

The overall LRSC algorithm is listed in Alg.~\ref{alg:lrcs}. We first initialize all variables by sampling from a normal distribution $\mathcal{N}(0, 1)$ (Step 3) and then iterate over the derived 0-gradient updates for each variable in turn: update $Ys$ (line 5-9); update $Ws$ (line 10-14); update $U_s, V_s$ (line 15-17); and update $\Gamma_1, \Gamma_1$ in the end. The eigendecomposition steps (line 6 and line 11) are the most expensive steps (cubic in their input) since eigendecomposition compared to matrix multiplications requires iterations (depending on the solver). As a result, eigendecomposition is dominating the running time of each iteration of LRSC with a complexity of $O(P^3 + Q^3 + k^3).$

\begin{algorithm} [h]
\footnotesize
    \caption{LRSC}
        \begin{algorithmic}[1]
        \State {\bf Input:} A single samples $X_s$, dictionaries $L, R$, encoding rank $k$ and sparsity params. $\lambda_1, \lambda_2$
        \State {\bf Output:} Encodings $Y_s, W_s$
        \State Initialize $Y_s, W_s, U_s, V_s, \Gamma_1, \Gamma_2$ randomly
        \Repeat
            \State $B = W_s R^T$
            \State $Q_1 \Lambda_1 Q_1^T = eig(L^TL); ~~Q_2 \Lambda_2 Q_2^T = eig(BB^T)$
            \State $\Pi_1 = 2 L^T X_s B^T + \rho_1 U_s + \Gamma_1$
            \State ${[E_1]}_{i,j} = \frac{{[Q_1^T \Pi_1 Q_2]}_{i,j}}{2 {[\Lambda_1]}_{i,i} {[\Lambda_2]}_{j,j} + \rho_1}$
            \State $Y_s = Q_1 E_1 Q_2^T$
            \State $A = L Y_s$
            \State $Q_3 \Lambda_3 Q_3^T = eig(A^TA); ~~Q_4 \Lambda_4 Q_4^T = eig(R^TR)$
            \State $\Pi_2 = 2 A^T X_s R + \rho_2 V_s + \Gamma_2$
            \State ${[E_2]}_{i,j} = \frac{{[Q_3^T \Pi_3 Q_4]}_{i,j}}{2 {[\Lambda_4]}_{i,i} {[\Lambda_3]}_{j,j} + \rho_2}$
            \State $W_s = Q_3 E_2 Q_4^T$
            \State $H_1 = Y_s - \frac{\Gamma_1}{\rho_1}, H_2 = W_s - \frac{\Gamma_2}{\rho_2}$
            \State ${[U_s]}_{i,j} = \text{sign}({[H_1]}_{i,j}) \times \max{(|{[H_1]}_{i,j}| - \frac{\lambda_1}{\rho_1}, 0)}$
            \State ${[V_s]}_{i,j} = \text{sign}({[H_2]}_{i,j}) \times \max{(|{[H_2]}_{i,j}| - \frac{\lambda_2}{\rho_2}, 0)}$
            \State $\Gamma_1 = \Gamma_1 + \rho_1 (U_s - Y_s)$
            \State $\Gamma_2 = \Gamma_2 + \rho_2 (V_s - W_s)$
        \Until{Convergence or fixed max iterations}
        \end{algorithmic}
    \label{alg:lrcs}
\end{algorithm}

\noindent{\bf Stage II: Dictionary updates.} We employ gradient projection for dictionary updates given fixed $Y_s, W_s, \forall s \leq S$ with objective  
\begin{equation}
    \begin{aligned}
        \underset{L, R} {\mathrm{argmin}} \sum_{s=1}^{S} (&|| X_s - L Y_s W_s R^T ||_F^2,
    \end{aligned} 
    \label{eq:obj_LR}
\end{equation}
and by setting gradients w.r.t. $L$ to zero, we obtain:
\begin{equation}
    \begin{aligned}
         \sum_{s=1}^{S} -2 (X_s - L Y_s W_s R^T) R W_s^T Y_s^T = 0,
    \end{aligned} 
\end{equation}
with a closed-form solution for $L$:
\begin{equation}
    \begin{aligned}
         L = (\sum X_s R W_s^T Y_s^T)(\sum Y_s W_s R^T R W_s^T Y_s^T)^{-1}.
    \end{aligned} 
\end{equation}
$R$'s update is derived in a similar manner and is also listed in Alg.~\ref{alg:aodl}
Atoms of both dictionaries are normalized by \emph{normalize($\cdot$)} in Alg.~\ref{alg:aodl} so the magnitude in representing samples is fully represented in the coding matrices as opposed to the dictionaries. 

The dictionary updates involve matrix multiplications and matrix inversions, which run in cubic time in the input size. However, matrix inversion is in general much slower than regular multiplication. Thus, the matrix inversion term is dominating the running time with a complexity of $O(T + P^3 + Q^3)$, where $T$ is the product of the maximum $3$ values among $\{N, M, P, Q, k\}$. $T$ represents the run time of matrix multiplication inside the inversion term.  $P^3$ is the inversion in the solution of $L$, and $Q^3$ is the inversion in the solution of $R$.

\section{CMOD-ADMM derivation}
\label{appendix:cmod-admm}

While the original paper introducing the CMOD method employs 2D-OMP for the sparse coding step, our experimental analysis (Tbl.~\ref{table:datasets}) demonstrated that the reliance on OMP limits the method's scalability as NNZ grows. For this reason we derive and employ a version of CMOD with ADMM sparse coding in the sparse coding subproblem (with fixed dictionaries $L$ and $R$) is as follows:
\begin{equation}
    \begin{aligned}
         \underset{Z} {\mathrm{argmin}} \sum_{s=1}^{S} (&|| X_s - L Z_s R^T ||_F^2 + \lambda_1 ||Z_s||_1
    \end{aligned} 
    \label{eq:obj_missing_appendix}
\end{equation} 

To obtain an ADMM solution we introduce a proxy variable $U_s=Z_s$ and obtain the following Lagrangian for sample $s$:

\begin{equation}
    \begin{aligned}
         &\mathcal{L}(Z_s, U_s) 
         =  || X_s - L Z_s R^T ||_F^2 + \lambda_1 ||U_s||_1  \\
         &+ \frac{\rho}{2} || U_s - Z_s + \frac{\Gamma}{\rho} ||_F^2 
    \end{aligned} 
    \label{eq:obj_lag_Z}
\end{equation}

The above equation is similar to Eq.~\ref{eq:update_Ys}. By simply replacing $B$ with $R$, $Y_s$ with $Z_s$, we can obtain closed-form updates for $Z_s$ and $U_S$ using the same steps. 

\section{\ourmeth (and CMOD) with missing values}
\label{appendix:alg-missing}

\noindent{\bf \ourmeth with missing values.} The missing values objective for our problems is:

\begin{equation}
    \begin{aligned}
         \underset{L, R, Y, W} {\mathrm{argmin}} \sum_{s=1}^{S} (&|| \Omega_s \odot (X_s - L Y_s W_s R^T) ||_F^2 + \lambda_1 ||Y_s||_1 + \lambda_2 ||W_s||_1),
    \end{aligned} 
    \label{eq:obj_missing}
\end{equation}
where $\Omega_s$ is a sample-specific missing value $1-0$ mask and $\odot$ denotes the element-wise product.
To optimize the missing values objective from Eq.~\ref{eq:obj_missing} we introduce additional proxy variables $D_s = X_s, U_s = Y_s, V_s = W_s$, arriving at the following objective:
\begin{equation}
    \begin{aligned}
         \underset{L, R, Y, W} {\mathrm{argmin}} \sum_{s=1}^{S} (&|| D_s - L Y_s W_s R^T ||_F^2 + \lambda_1 ||U_s||_1 \\
         &+ \lambda_2 ||V_s||_1 + \lambda_3 || \Omega_s \odot (D_s - X_s)||_F^2) \\
         s.t. ~~D_s = X_s, &Y_s = U_s, W_s = V_s,
    \end{aligned} 
    \label{eq:obj_missing_appendix}
\end{equation}

We form the Lagrangian function for sample $s$:

\begin{equation}
    \begin{aligned}
         &\mathcal{L}(D_s, Y_s, W_s, U_s, V_s)  \\
         = & || D_s - L Y_s W_s R^T ||_F^2 + \lambda_1 ||U_s||_1 \\
         &+ \lambda_2 ||V_s||_1  + \lambda_3 || \Omega_s \odot (D_s - X_s)||_F^2 \\
         & + \frac{\rho_1}{2} || U_s - Y_s + \frac{\Gamma_1}{\rho_1} ||_F^2 + \frac{\rho_2}{2} || V_s - W_s + \frac{\Gamma_2}{\rho_2} ||_F^2.
    \end{aligned} 
    \label{eq:obj_lag_missing}
\end{equation}

To update $D_s$, we have the following optimization problem:
\begin{equation}
    \begin{aligned}
         \underset{D_s} {\mathrm{argmin}} || D_s - L Y_s W_s R^T ||_F^2 + \lambda_3 || \Omega_s \odot (D_s - X_s)||_F^2
    \end{aligned} 
    \label{eq:obj_D}
\end{equation}

Taking the gradient and setting the equation to zero, we get

\begin{equation}
    \begin{aligned}
         D_s = (L Y_s W_s R^T + \lambda_3 \Omega_s \odot X_s ) \oslash (I + \lambda_3 \Omega_s), 
    \end{aligned} 
    \label{eq:solve_D}
\end{equation}
where $\oslash$ is element-wise division.

The update for other variables $Y_s, W_s, U_s, V_s$ are exactly the same as in Alg. \ref{alg:lrcs}, we will omit here.

To update the dictionaries, we have the following objective function which differs slightly from the one in the main paper:
\begin{equation}
    \begin{aligned}
         \underset{L, R} {\mathrm{argmin}} \sum_{s=1}^{S} (&|| D_s - L Y_s W_s R^T ||_F^2.
    \end{aligned} 
    \label{eq:obj_LR_missing}
\end{equation}
By setting $\partial f / \partial L=0$, we get a closed-form solution for $L$:
\begin{equation}
    \begin{aligned}
         L = (\sum D_s R W_s^T Y_s^T)(\sum Y_s W_s R^T R W_s^T Y_s^T)^{-1},
    \end{aligned} 
\end{equation}
and $R$ can be solved for similarly. Both dictionaries are normalized to unit atom length.



\noindent{\bf CMOD with missing values.} Note that the sparse coding stage of CMOD-ADMM is similar to our objective with the key difference of a single sample-wise encoding matrix $Z_s$. To derive a missing-value-aware version of CMOD-ADMM we employ the same ADMM approach as the one outlined above for \ourmeth, with proxy variables for $Z_s$ and $X_s$ only: 
\begin{equation}
    \begin{aligned}
         \underset{Z} {\mathrm{argmin}} \sum_{s=1}^{S} (&|| D_s - L Z_s R^T ||_F^2  \\
         &+ \lambda_1 ||U_s||_1 + \lambda_2 || \Omega_s \odot (D_s - X_s)||_F^2) \\
         s.t. ~~D_s = X_s, &Z_s = U_s.
    \end{aligned} 
    \label{eq:obj_missing_Z_appendix}
\end{equation} 
Updates for the above objective when the dictionaries are fixed are obtained in the same manner as those for \ourmeth. For the dictionary update stage, again, we just replace $X_s$ with $D_s$.

\section{Additional experiments} 

\begin{figure*}[t]
    \centering 
    \subfigure [SNR vs NNZ]
    {
        \includegraphics[width=0.23\linewidth]{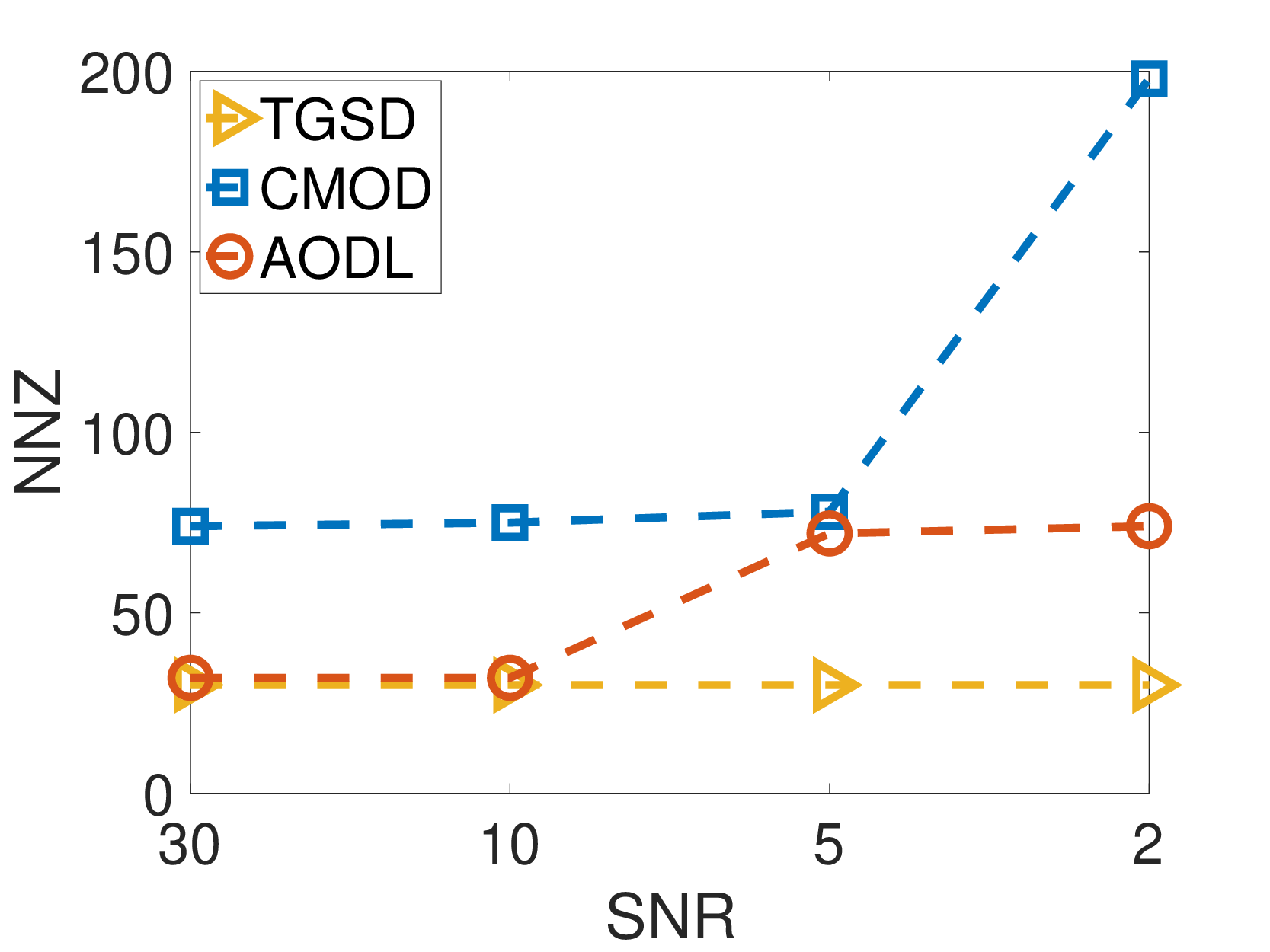}
        \label{fig:snr_nnz_f1}
    }
    \subfigure [NNZ vs RMSE]
    {
        \includegraphics[width=0.23\linewidth]{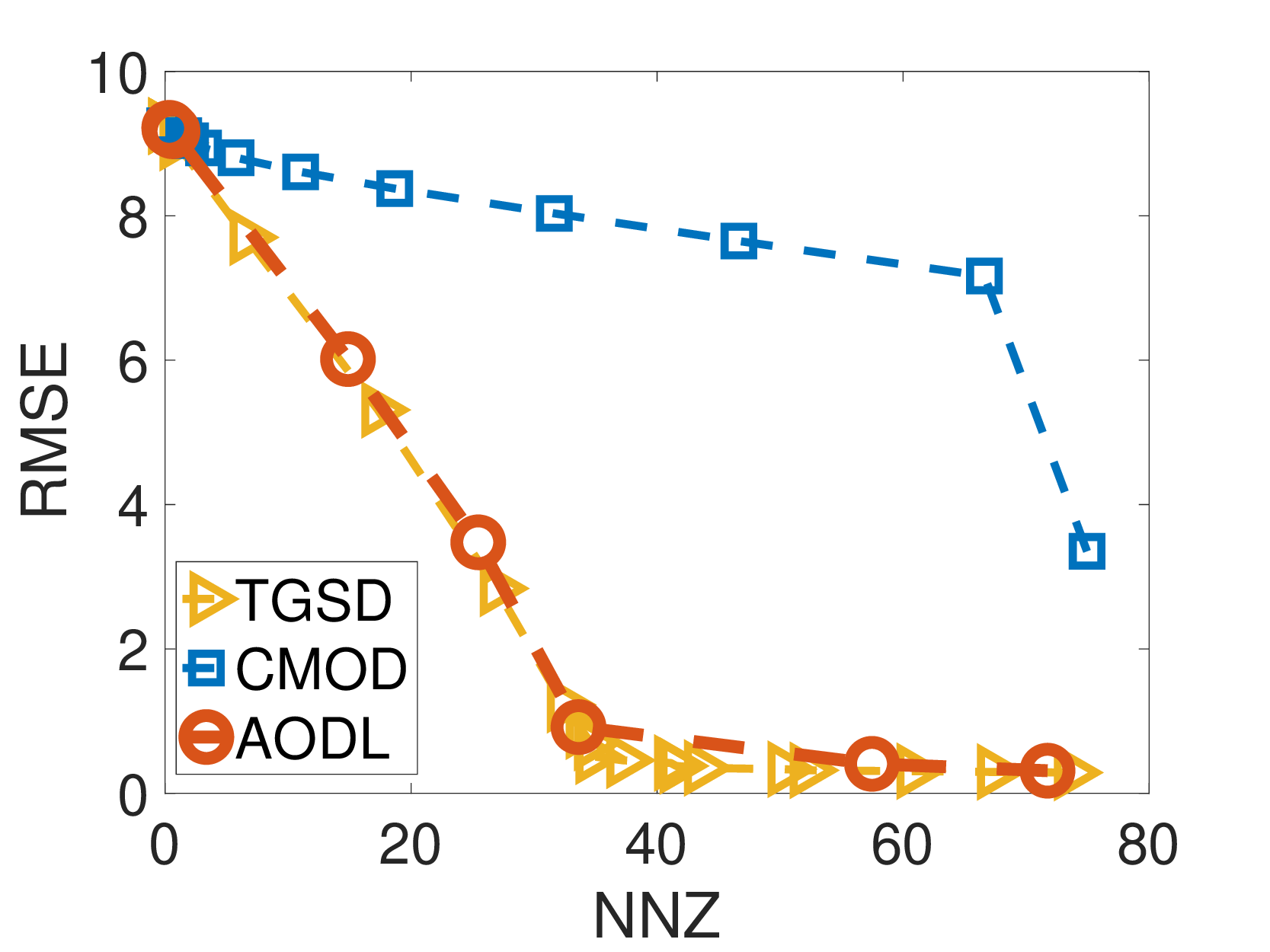}
        \label{fig:snr30_rmse}
    }
    \subfigure [NNZ vs RMSE (noisy)]
    {
        \includegraphics[width=0.23\linewidth]{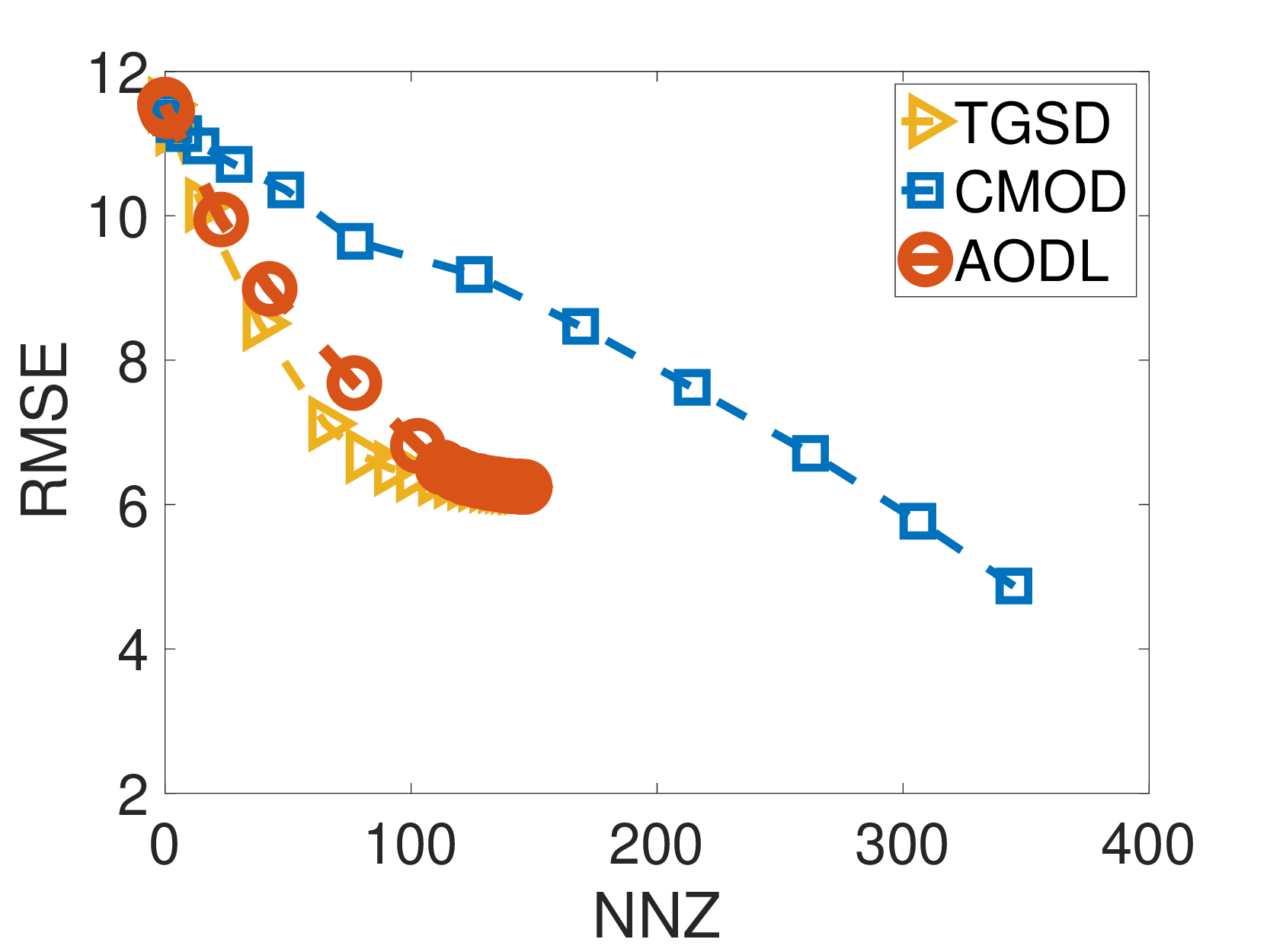}
        \label{fig:snr2_rmse_noisy}
    }
    \subfigure [NNZ vs RMSE (clean)]
    {
        \includegraphics[width=0.23\linewidth]{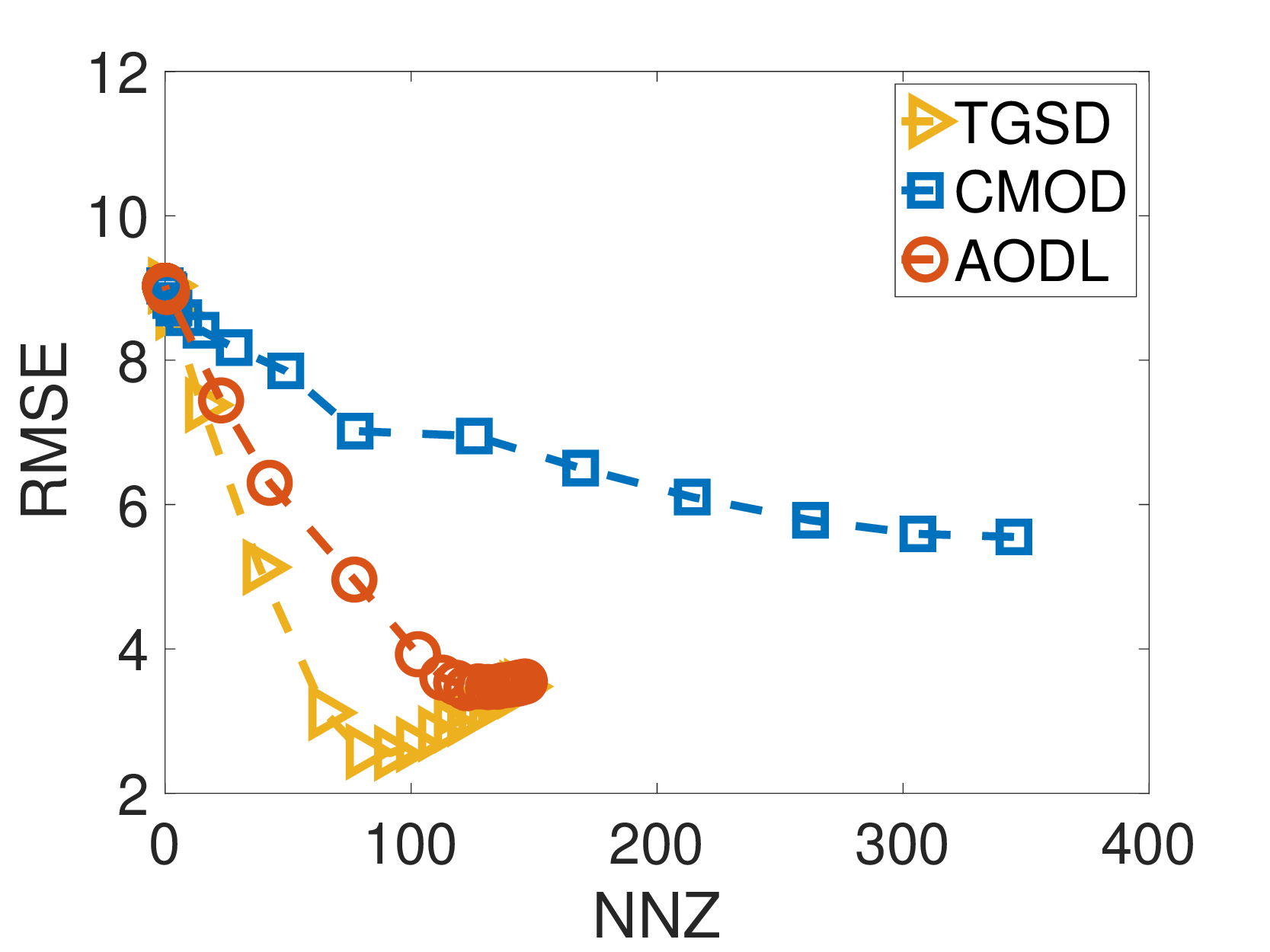}
        \label{fig:snr2_rmse_clean}
    }
    \caption{\footnotesize Evaluation on noisy synthetic data. \subref{fig:snr_nnz_f1}: SNR vs NNZ for settings in which CMOD and AODL recover the GT dictionaries. \subref{fig:snr30_rmse}: NNZ vs RMSE for $SNR = 30$; \subref{fig:snr2_rmse_noisy} NNZ vs RMSE for SNR = 2 while representing the noisy data (clean + noise) and \subref{fig:snr2_rmse_clean} NNZ vs RMSE for $SNR =2$ w.r.t. the clean data only. 
    \vsa\vsa}
    \label{fig:syn_dict_size}
\end{figure*}

\noindent{\bf Synthetic data: dictionary recovery and data reconstruction in the presence of noise.}  Next we evaluate the ability of \ourmeth (and baselines) to encode data in the presence of noise and also their ability to recover dictionaries similar to the ground truth (GT) dictionaries used to generate the data. In all synthetic tests, TGSD has the advantage of encoding with the GT dictionaries and as a result could be viewed as a GT baseline that CMOD and \ourmeth can approach, but not necessarily beat.

In Fig.~\ref{fig:snr_nnz_f1} we vary the noise level quantified as SNR (signal to noise ratio) in the range $[30, 10, 5, 2]$(dB). Each sample of the ground truth signal is produced via the low-rank encoding model with a total of $30$ NNZ coefficients and by employing random GT dictionaries. We report the NNZ of each method when for hyperparameter settings with which both CMOD and \ourmeth can perfectly recover the GT dictionaries (best matching pairs of GT and learned atoms have a cosine similarity exceeding $0.99$). While both CMOD and  \ourmeth can both recover the GT dictionaries, \ourmeth produces this result with less NNZs in its encoding (with largest advantage gap in the noisiest setting SNR = 2). Both \ourmeth and CMOD use more coefficients than TGSD to represent the data (the latter uses GT dictionaries), however, thanks to \ourmeth's low rank model it requires fewer coefficients. We further visualize the alignment of learned and GT dictionaries from this experiment in Fig. \ref{fig:syn_dict_alignment}. The NNZ are fixed (up to 80), we plot the alignment of the learned dictionaries with the GT dictionaries. It is clear that \ourmeth can almost perfectly recover the GT dictionaries. Meanwhile, the dictionaries learned by CMOD is much noisier. 
For more complex scenarios, GT dictionaries may not be identifiable.

In Fig.~\ref{fig:snr30_rmse}, we plot the reconstruction error (RMSE) of all methods at different sparsity levels for SNR=30db. \ourmeth is closely aligned with TGSD which uses the GT dictionaries while CMOD requires more coefficients to achieve the same RMSE levels. 
Figs.~\ref{fig:snr2_rmse_noisy} and \ref{fig:snr2_rmse_clean} present the RMSE v.s. NNZ trade-off in a much noisier setting (SNR = 2db). While the methods are executed on noisy data (sparse coding from GT dictionary + noise) we seek to quantify the quality of fit to the ``clean'' component of the samples as well as the noisy samples. Thus, we report the RMSE computed with respect to noisy data (i.e. clean + noise) in Fig.~\ref{fig:snr2_rmse_noisy}; and with respect to only the clean component of the data in Fig.~\ref{fig:snr2_rmse_clean}. CMOD requires significantly more coefficients than \ourmeth to achieve the same RMSE levels. This is likely due to the low-rank encoding model in \ourmeth acting as a noise filter. Fig.~\ref{fig:snr2_rmse_clean} also suggests that CMOD likely uses coefficients to represent noise since its quality in clean data is worse than that in noisy data at high NNZs. \ourmeth is able to capture the clean component in the data much better, and its curve is closer to that of TGSD which employs the GT dictionaries.

\begin{figure*}[t]
    \centering 
    \subfigure [L alignment (AODL)]
    {
        \includegraphics[width=0.22\linewidth]{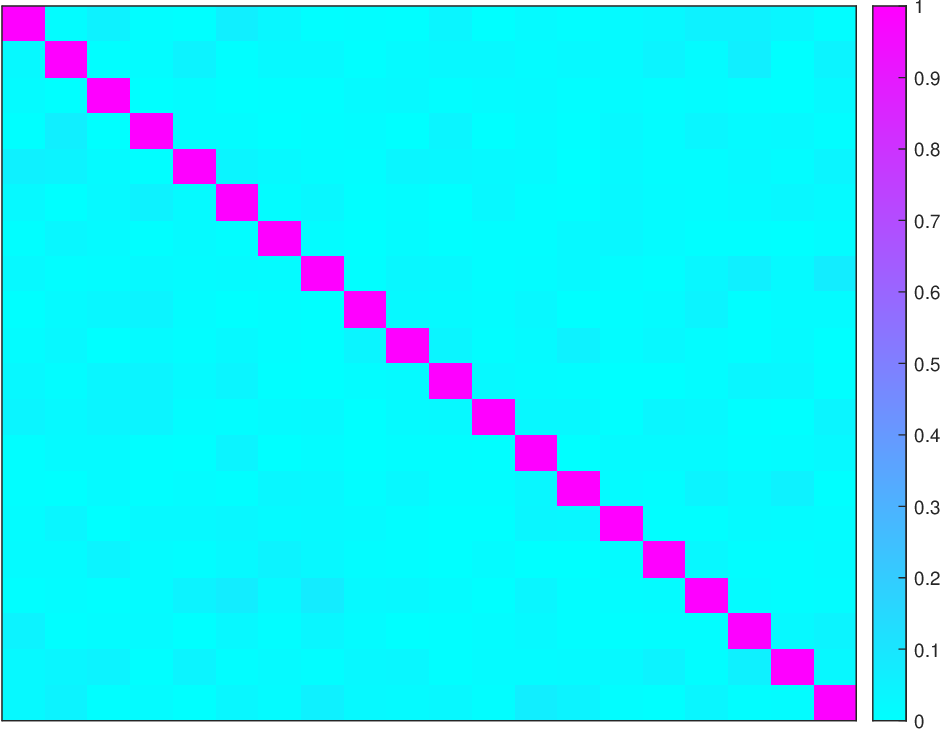}
        \label{fig:l_aodl}
    }
    \subfigure [L alignment (CMOD)]
    {
        \includegraphics[width=0.22\linewidth]{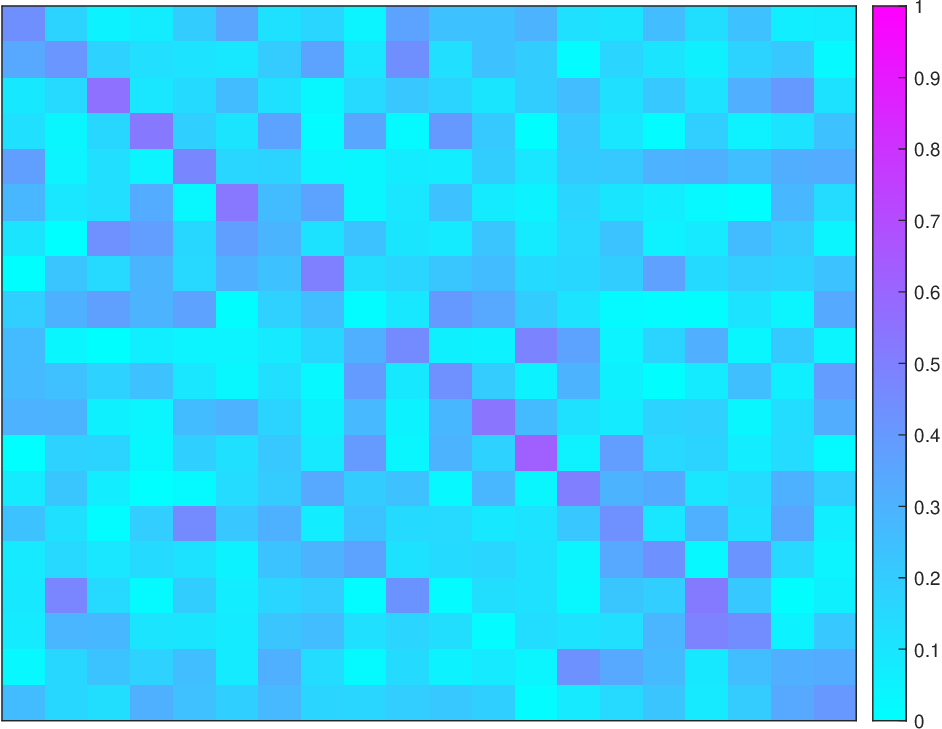}
        \label{fig:l_cmod}
    }
    \subfigure [R alignment (AODL)]
    {
        \includegraphics[width=0.22\linewidth]{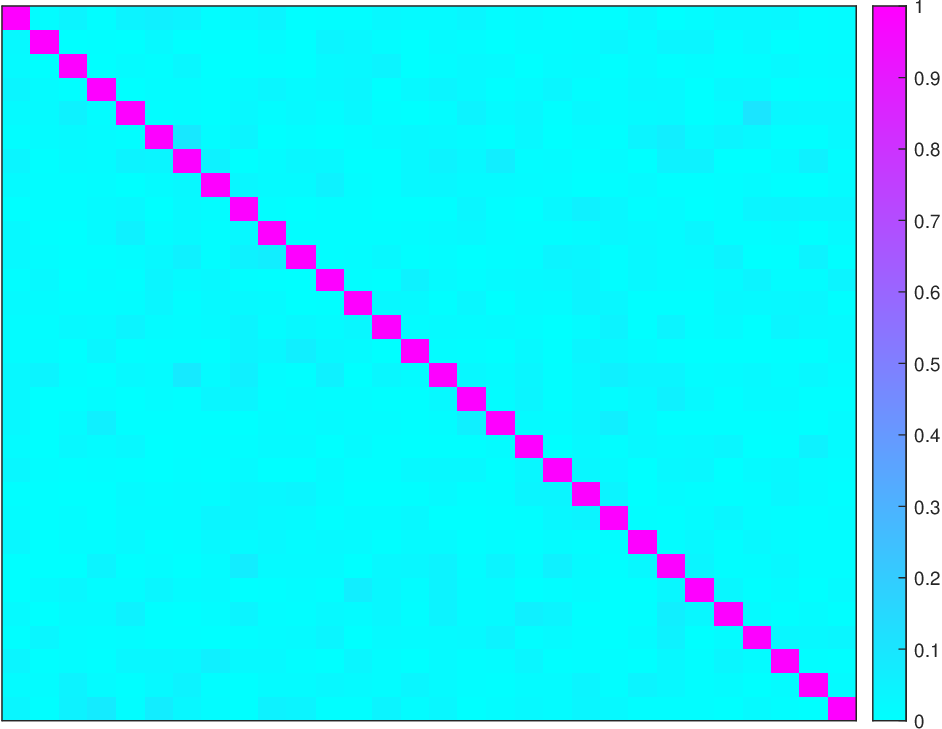}
        \label{fig:r_aodl}
    }
    \subfigure [R alignment (CMOD)] 
    {
        \includegraphics[width=0.22\linewidth]{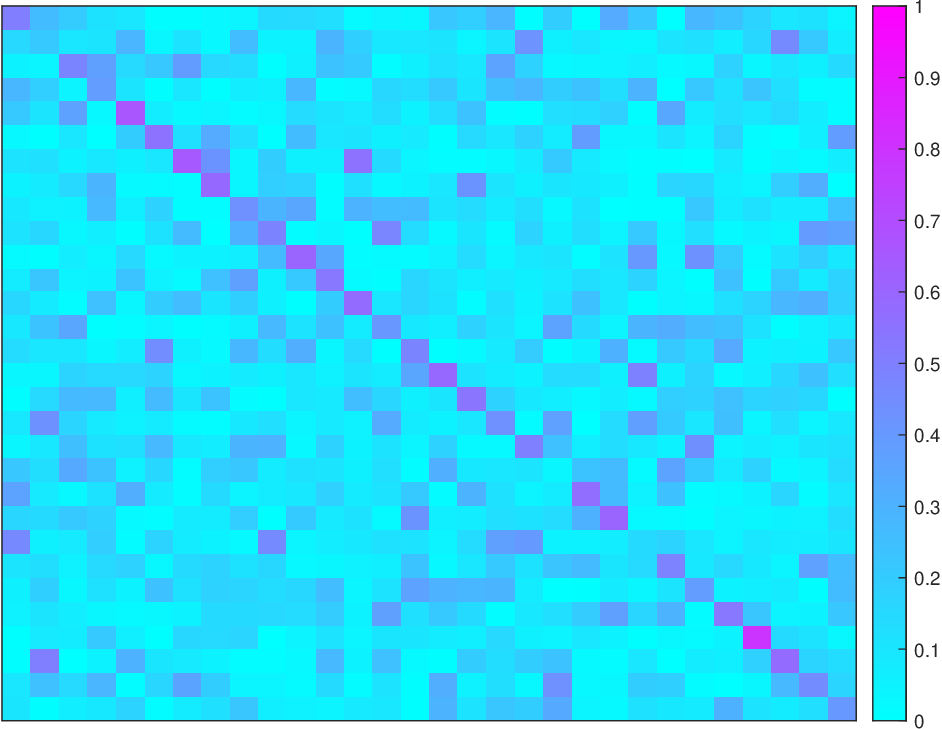}
        \label{fig:r_cmod}
    }
    \caption{ \footnotesize
    Alignment of the learned dictionary atoms (measured as inner products) with the ground truth dictionary atoms (when SNR = 2, NNZ for both methods are up to 80) in synthetic data. Identity matrix (1s on diagonal and 0s off-diagonal) corresponds to perfect atom recovery.
    }
    \label{fig:syn_dict_alignment}
\end{figure*}

\begin{table*}[t]
\tiny
\setlength\tabcolsep{5 pt}
\centering
 \begin{tabular}{|c|c|p{0.13\linewidth}|p{0.1\linewidth}|p{0.1\linewidth}|p{0.06\linewidth}|p{0.06\linewidth}|p{0.06\linewidth}|p{0.06\linewidth}|p{0.06\linewidth}|} 
 \hline
 Method & Parameters &  Range & Synthetic & Theoretical test & Road & Twitch & Wiki & MIT & Air\\
 \hline
 TGSD  & $\lambda_1, \lambda_2, k$ & $[10^{-3}, \cdots, 10^{3}]$, $[10^{-3}, \cdots, 10^{3}]$, $[3, 15, 30, 45]$ & $\lambda_1, \lambda_2$ vary, $k = 3$ & NA & Vary & Vary & Vary & Vary & Vary\\
 \hline
SeDiL  & $q, \mu, \lambda, \beta, \text{iter}$ & $[1,2,10]$, $[10, 10^2, 10^3]$, $[10, \cdots, 10^5]$, $[0.3, 0.5, 0.8]$, $[500,1k, 5k, 10k]$ & $1, 10^2, 10^2$, $0.8, 500$ & NA & $2$, $10^2$, $10^5$, $0.5$, $5k$ & $2$, $10^2$, $10^2$, $0.8$, $5k$ & $2$, $10^2$, $10^2$, $0.8$, $5k$ & $2$, $10^2$, $10^4$, $0.8$, $5k$ & $10$, $10^2$, $10^3$, $0.8$, $1k$\\
 \hline
  CMOD-OMP & $T_0$ & $[35, 1k, 1.5k, 3k]$ & 35 & NA & 3k & 3k & 1.5k & 1.5k & 1k\\
  \hline
 CMOD & $\lambda_1$ & $[10^{-3},  \cdots, 10^{3}]$ & Vary & NA & Vary & Vary & Vary & Vary & Vary\\
 \hline
 \ourmeth  & $\lambda_1, \lambda_2, k$ & $[10^{-3}, \cdots, 10^{3}]$, $[10^{-3}, \cdots, 10^{3}]$, $[3, 5, 15, 30, 45]$ & $\lambda_1, \lambda_2$ vary, $k = 3$ & $\lambda_1, \lambda_2$ vary, $k = 5$ & Vary & Vary & Vary & Vary & Vary\\
 \hline
\end{tabular}
\caption{\footnotesize Parameters for competing methods where $\lambda_1, \lambda_2$ are sparsity parameters for ADMM sparse solver; $T_0$ is the targeting number of coefficients for 2D-OMP;  $k$ is the rank parameter of TGSD. Some methods are not included in the theoretical test the corresponding cells are marked as NA for Not Applicable. Ranges for tested values are listed in the Range column. 
}
\label{table:method_params}
\end{table*}

\noindent{\bf \ourmeth vs Random Dictionaries} To demonstrate the learned dictionary does help in producing better reconstruction error since low rank model by nature provide better NNZ. Instead of comparing \ourmeth with the ground truth dictionaries, we compare it with random generated dictionaries. In this test, the basic settings are the same with synthetic test settings. We generated two random dictionaries that serve as the left and the right dictionaries, and we use them directly with our low rank sparse coding model, which is named RAND. We can clearly see in Fig \ref{fig:ab_test}, with the same level of NNZ, RAND's reconstruction error is much higher than \ourmeth.

\begin{figure}[t]
    
   \centering
   \subfigure[\ourmeth v.s. Random dictionaries]
   {
    \includegraphics[width=0.45\linewidth]{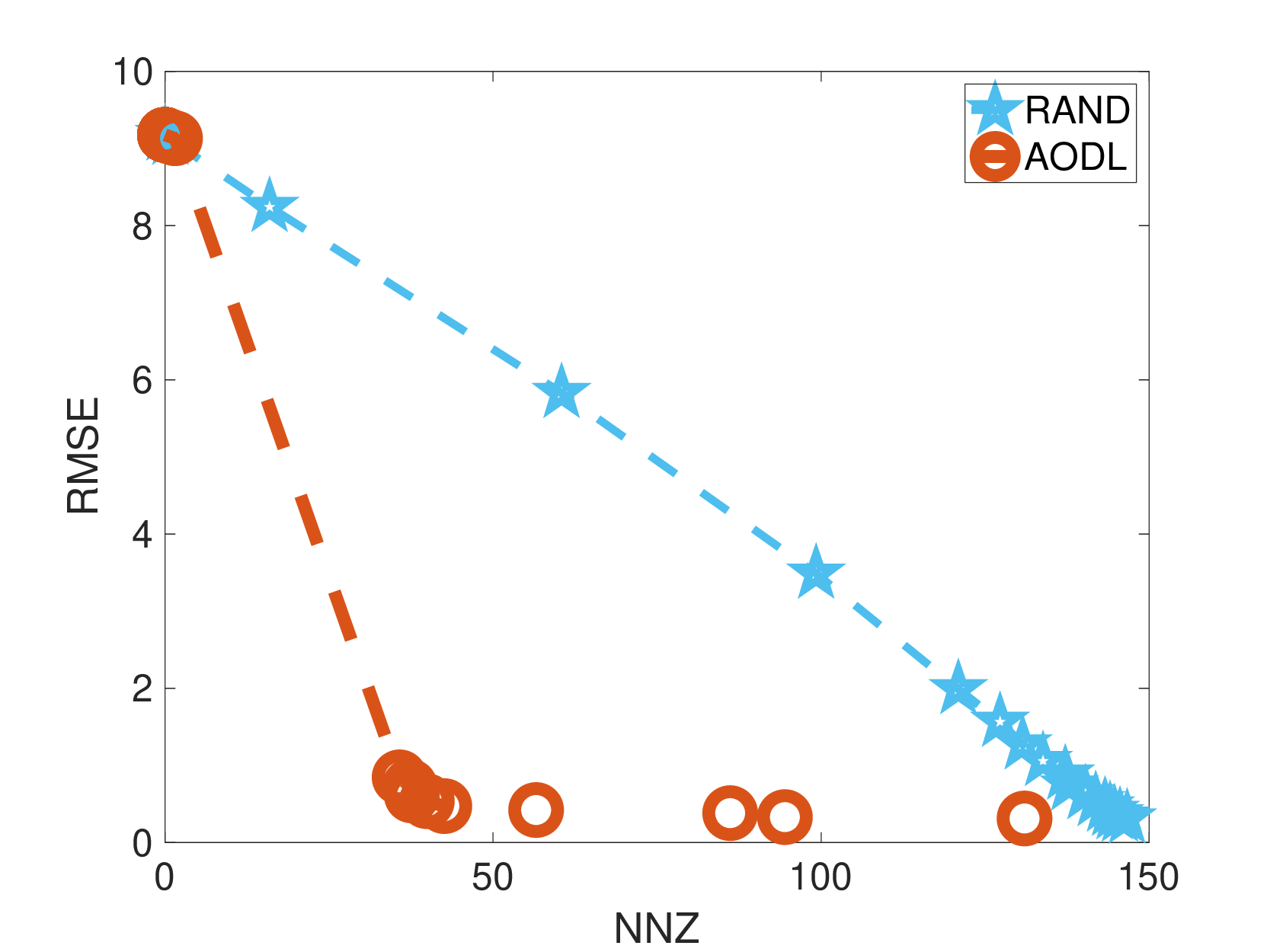}
    \label{fig:aodl_rand}
     }   
     \subfigure[Slicing Twitch data over time]
   {
    \includegraphics[width=0.45\linewidth]{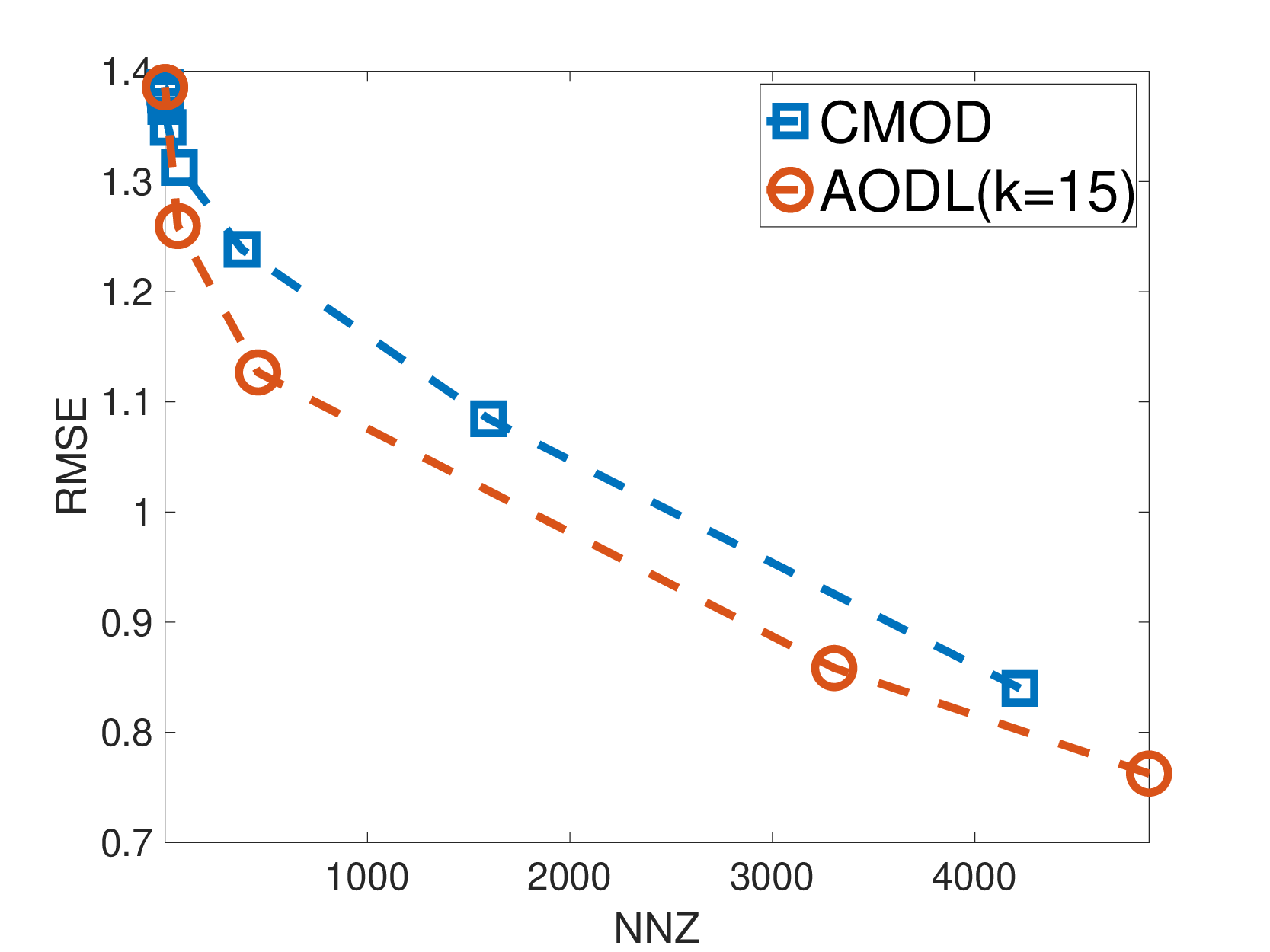}
    \label{fig:twitch_slice_short}
     }  
    \caption{\subref{fig:aodl_rand} \footnotesize Comparison of \ourmeth and TGSD when using random dictionaries. Since low rank model is providing better NNZ, we can see that the learned dictionaries is important in representing the data. \subref{fig:twitch_slice_short} In all tests, we split the data on the larger dimension. This figure shows limitations when slicing the data on the shorter dimension since the size of the encoding matrices is decided by $k, P, Q$ (more details in the explanation below).  }
    \label{fig:ab_test}
\end{figure}

\noindent{\bf Slicing the data on the smaller dimension} As mentioned in the real-world test, we slice the data into multiple samples on the longer dimension to reduce the calculation cost and also prevent learning large dictionaries. However, our goal is just dividing the data into multiple samples to fit our model and the baselines, we are free to slice on any dimension. In the previous test, we slice Twitch data who has $9000$ nodes and $512$ timesteps into $30$ samples on the node direction. Here, we slice it on the time direction into $16$ samples, each sample has a size of $\R^{9000 \times 32}$. From Fig \ref{fig:twitch_slice_short}, we can see our method \ourmeth still has some advantage regarding NNZ vs RMSE, however, the difference is vary limited comparing with CMOD. This is because the dictionary $R \in \R^{32 \times 32}$ has a very small size (atom number), and it is close to the rank parameter $k$ we choose. We know that the number of coefficients of \ourmeth is NNZ(Y) + NNZ(W), while for CMOD, the number of coefficients is NNZ(Z). The size of Y is $P \times k$; and size of W is $k \times Q$. So, the total size of \ourmeth would be $P k + k Q = k (P+Q)$. While the size of $Z$ in CMOD is just $P \times Q$. The advantage of our method \ourmeth will be large when $k (P+Q) << PQ $. Since we always prefer smaller $k$, if either $P$ or $Q$ is small as well, our advantage will be vanished. As a result, our model will always prefer the two learned dictionaries to have relatively large size, and a small rank $k$.

\section{Hyperparameter tuning and selection}
\label{appendix:hyper}

The parameter settings for all competing techniques unless otherwise specified are as follows. For low rank models like TGSD and \ourmeth, we set $k=3$ in synthetic test, and $k$ takes values in $[15, 30, 45]$ in real-world reconstruction test and missing value imputations tests. In the case study, the dictionaries are calculated at $k = 15$. In addition to grid search for $k$, as mentioned in \cite{TGSD}, one could estimate the rank $k$ using SVD.
The $\lambda$s are sparsity parameter for ADMM method. In CMOD, we only have $\lambda_1$ since only one encoding matrix is calculated, and in TGSD, we have $\lambda_1$ and $\lambda_2$. We grid search these parameter in range $[10^{-3}, 10^{-2}, \cdots, 10^{3}]$ to produce curve with different NNZ and RMSE values. In CMOD-OMP, $T_0$ is the only parameter, which indicates the target number of coefficients in the encoding matrix. OMP models works much slower as $T_0$ increases, as a result, we choose some relatively small targets in tests and only report them in Tbl. \ref{table:datasets}.

SeDiL is another baseline model that requires intensive parameter search. $q$ means the weight of mixed sparsity measure, which indicates how the sparsity term is being adjusted using power of $q$. $\mu$ is the multiplier of the sparse matrix. $\lambda$ is the Lagrange multiplier of the sparsity term. $\beta$ is the step size when updating the sparse matrix. $iter$ is the maximum iteration of the model. We can see that all the above parameters are affecting the performance of the sparsity of the model. We picked the values that can help us to reach the target NNZ, which is defined in Tbl. \ref{table:datasets}, for a fair comparison of all models.

All hyperparameter ranges are listed in Tbl. \ref{table:method_params}.



 




\vfill

\end{document}